\newcommand\numberthis{\addtocounter{equation}{1}\tag{\theequation}}
\newcommand{\algname}{\textsc{SamRuLe}}
\newcommand{\ruleset}{\mathcal{R}}
\newcommand{\pars}[1]{\left( #1 \right)}
\newcommand{\sqpars}[1]{\left[ #1 \right]}
\newcommand{\brpars}[1]{\left\lbrace #1 \right\rbrace}
\newcommand{\qt}[1]{\lq\lq#1\rq\rq}
\newcommand{\qtm}[1]{\text{\lq\lq}#1\text{\rq\rq}}
\newcommand{\ind}[1]{\mathds{1} \left[ #1 \right] }
\newcommand{\defsymbol}{\intercal}
\newcommand{\defrule}{r_{\defsymbol}}
\newcommand{\dataset}{\mathcal{D}}
\newcommand{\sample}{\mathcal{S}}
\newcommand{\BLi}[1]{\Omega ( #1 )}
\newcommand{\BOi}[1]{\mathcal{O}( #1 )}
\newcommand{\E}{\mathop{\mathbb{E}}}
\newcommand{\mC}{\mathcal{C}}
\newcommand{\mG}{\mathcal{G}}
\DeclareMathOperator*{\argmin}{arg\,min}
\newcommand{\logtdelta}{\ln \bigl( \frac{2}{\delta} \bigr) }
\newif\ifextversion
\begin{document}

\title{Scalable Rule Lists Learning with Sampling}

\author{Leonardo Pellegrina}
\affiliation{%
  \institution{Dept. of Information Engineering, University of Padova}
  \city{Padova}
  \country{Italy}
}
\email{leonardo.pellegrina@unipd.it}

\author{Fabio Vandin}
\affiliation{%
  \institution{Dept. of Information Engineering, University of Padova}
  \city{Padova}
  \country{Italy}
}
\email{fabio.vandin@unipd.it}

\renewcommand{\shortauthors}{Leonardo Pellegrina and Fabio Vandin}

\begin{abstract}
Learning interpretable models has become a major focus of machine learning research, given the increasing prominence of machine learning in socially important decision-making. Among interpretable models, rule lists are among the best-known and easily interpretable ones. However, finding optimal rule lists is computationally challenging, and current approaches are impractical for large datasets.

We present a novel and scalable approach to learn nearly optimal rule lists from large datasets. Our algorithm uses sampling to efficiently obtain an approximation of the optimal rule list with rigorous guarantees on the quality of the approximation. In particular, our algorithm guarantees to find a rule list with accuracy very close to the optimal rule list when a rule list with high accuracy exists. Our algorithm builds on the VC-dimension of rule lists, for which we prove novel upper and lower bounds. Our experimental evaluation on large datasets shows that our algorithm identifies nearly optimal rule lists with a speed-up up to two orders of magnitude over state-of-the-art exact approaches. Moreover, our algorithm is as fast as, and sometimes faster than, recent heuristic approaches, while reporting higher quality rule lists. In addition, the rules reported by our algorithm are more similar to the rules in the optimal rule list than the rules from heuristic approaches.
 \end{abstract}

\begin{CCSXML}
<ccs2012>
<concept>
<concept_id>10002950.10003648.10003671</concept_id>
<concept_desc>Mathematics of computing~Probabilistic algorithms</concept_desc>
<concept_significance>500</concept_significance>
</concept>
<concept>
<concept_id>10002951.10003227.10003351</concept_id>
<concept_desc>Information systems~Data mining</concept_desc>
<concept_significance>500</concept_significance>
</concept>
<concept>
<concept_id>10003752.10003809.10010055.10010057</concept_id>
<concept_desc>Theory of computation~Sketching and sampling</concept_desc>
<concept_significance>500</concept_significance>
</concept>
<concept>
<concept_id>10003752.10010070.10010071.10010072</concept_id>
<concept_desc>Theory of computation~Sample complexity and generalization bounds</concept_desc>
<concept_significance>500</concept_significance>
</concept>
</ccs2012>
\end{CCSXML}

\ccsdesc[500]{Mathematics of computing~Probabilistic algorithms}
\ccsdesc[500]{Information systems~Data mining}
\ccsdesc[500]{Theory of computation~Sketching and sampling}
\ccsdesc[500]{Theory of computation~Sample complexity and generalization bounds}

\keywords{Rule list learning, Interpretable Machine Learning, VC-dimension, Sample complexity bounds}


\maketitle


\section{Introduction}
\label{sec:intro}

Interpretability is one of the characteristics of machine learning models that has become a major topic of research due to the ever increasing impact of machine learning models in socially important decision-making~\cite{interpretablemlsurvery}. The goal is to build models that are easily understood by humans, while achieving high predictive power, in contrast to black-box models (e.g., large deep-learning models) that are highly predictive but are not transparent nor interpretable.

Rule lists~\cite{rivest1987learning}, and more in general rule-based models such as decision trees, are among the best-known and easily interpretable models. A rule list is a sequence of rules, and the prediction from such a model is obtained  by applying the first rule in the list whose condition is satisfied for the given input.

Rule-based models often display predictive power comparable to black-box models in several critical applications~\cite{interpretablemlsurvery}. While advanced techniques have been recently developed to speed-up the discovery of optimal rule lists~\cite{angelino2018learning,rudin2018learning,okajima2019decision,yu2021learning}, it remains a computationally challenging task, in particular for large datasets where differentiable models (such as neural networks) can be easily learned with gradient descent and its variants.

Fast heuristic methods for rule lists and other rule-based models have been developed~\cite{cohen1995fast,yang2017scalable}, but they do not provide guarantees in terms of the accuracy of the rule list they report. In several applications identifying an optimal, or close to optimal rule, is crucial, since the goal is to find models that are not only interpretable, but also have high predictive power and can therefore be used as interpretable alternatives to black-box models.

A natural solution to speed up the training on a large dataset, and to obtain an approximation of the best rule list, is to reduce the size of the training set, e.g., by only considering a small random sample of the dataset. 
On the other hand, it is clear that there is an intrinsic trade-off between the accuracy of this approximation and the size of the random sample, i.e., the computational cost incurred by the learning algorithm.  
This accuracy depends directly on the fluctuations of the losses estimated on the sample w.r.t. the losses measured on the entire dataset: 
a sample too small can be analyzed quickly but may provide inaccurate estimates.  
Moreover, it is possible that the best rule list, learned from a random sample, may be qualitatively different, thus not representative, of the optimal solution computed on the whole dataset. 

\textit{Contributions.} 
To address these challenges, we precisely quantify the maximum deviation between the accuracies of rule lists that are estimated on the random sample w.r.t. the respective exact counterparts.  As a consequence, we obtain sufficient sample sizes that guarantee that an accurate rule list can be found from a small random sample. 
We then propose \algname\ (\textit{Sam}pling for \textit{Ru}les list \textit{Le}arning), a scalable approach to approximate optimal rule lists from large datasets with rigorous guarantees on the quality of the approximation. 

\begin{itemize}[leftmargin=0.5cm]
\item \algname\ is the first scalable approach to approximate optimal rule lists from large datasets while providing rigorous guarantees on the accuracy of the reported rule list. In particular, \algname\ guarantees to find a rule list with accuracy very close to the optimal one when a rule list with high accuracy exists. As a result, \algname\ makes it possible to assess the existence of high accuracy rule lists for large datasets.
\item \algname\ uses sampling to efficiently obtain an approximation of the optimal rule list with guarantees. We derive novel bounds relating the accuracy of a rule lists in the sample with its accuracy in the whole dataset \emph{and} with the accuracy of the optimal rule list. Our bounds build on the VC-dimension of rule lists, for which we prove novel upper and lower bounds, and that allow us to define sample sizes leading to the desired quality guarantees.
\item Our experimental evaluation shows that \algname\ enables the identification of nearly optimal rule lists from large datasets, with a speed-up of up to two orders of magnitude over state-of-the-art exact approaches. Moreover, \algname\ is as fast as, or faster than, recent heuristic approaches, while reporting higher quality rule lists, for which it can also certify their quality with respect to the optimal rule list. The rules reported by \algname\ are also more similar to the rules in the optimal rule list than the rules from heuristic approaches.
\end{itemize}

\section{Related Works}
\label{sec:relworks}

We now discuss the works most related to ours. We focus on methods to learn rule lists, in particular \emph{sparse} rule lists. Sparsity is a crucial requirement for the interpretability of models. For an overview on the topic of interpretability in machine learning, we point the interested reader to the survey from Rudin et al.~\cite{interpretablemlsurvery}.

Finding  optimal sparse rule lists is NP-hard, and several methods have been proposed to learn sparse rule lists. Such methods fall under two categories: 
exact methods, that guarantee to find the optimal rule list (i.e., with highest accuracy), 
and heuristic methods, that are faster but provide no guarantees 
on the accuracy of the reported rule list w.r.t. the optimal. 

In the category of exact methods, Angelino et al.~\cite{angelino2018learning} propose CORELS, a branch-and-bound algorithm that achieves several orders of magnitude speedup in time and a significant reduction of memory consumption, by leveraging algorithmic bounds, efficient data structures, and computational reuse. Rudin and Ertekin~\cite{rudin2018learning} develop a mathematical programming approach to building rule lists. 
Okajima and Sadamasa~\cite{okajima2019decision} define a continuous relaxed version of  a rule list, and propose an algorithm that optimizes rule lists based on such continuous relaxed versions. 
Yu et al.~\cite{yu2021learning} propose a SAT-based approach to find optimal rule sets and lists. 
While highly optimized, exact methods such as the aforementioned ones only apply to problems of moderate size, and do not scale to large datasets. 

In the category of heuristic methods, Rivest~\cite{rivest1987learning} proposed greedy splitting techniques, also used in subsequent heuristic algorithms for learning decision trees~\cite{breiman84classification,quinlan1993c45}.  
RIPPER~\cite{cohen1995fast} builds rule sets in a greedy fashion, and a similar greedy strategy has been used for finding sets of robust rules in terms of minimum description length~\cite{fischer2019sets}.
Other works considered different problems and classification models, such as 
probabilistic rules for multiclass classification~\cite{proencca2020interpretable}, and, more recently, rule lists with preferred variables~\cite{papagianni2023discovering}, sets of locally optimal rules~\cite{huynh2023efficient},  
and multi-label rule sets~\cite{ciaperoni2023concise}. 
Yang et al.~\cite{yang2017scalable} uses, instead, a Bayesian approach with Monte-Carlo search to explore the rule list solution space. 
All such approaches focus on improving efficiency with respect to exact approaches, but do not provide guarantees on the quality of the reported rule lists with respect to the optimal one. 
In several challenging instances, the gap with the optimal solution may be substantial, resulting in a predictive model with suboptimal performance. 
 
A different line of research, related to the identification of almost-optimal models, is provided by the exploration of the \emph{Rashomon set}, that is the set of almost-optimal models for a machine learning problem~\cite{fisher2019all,breiman2001statistical,semenova2022existence}. 
Works in this area have focused on the exploration or full enumeration of the Rashomon set, including the case of rule-based models~\cite{xin2022exploring}, for example to study variable importance in well-performing models~\cite{dong2020exploring}. 
While some of our theoretical results may be useful for such applications as well, 
our focus is not the exploration or enumeration of \emph{all} almost optimal models, but rather on efficiently finding one nearly optimal model while providing guarantees on its quality.

\sloppy{
Our algorithm builds on novel upper bounds to the VC-dimension~\cite{Vapnik:1971aa} of rule lists, a fundamental combinatorial measure of the complexity of classification models. 
We combine these bounds with sharp concentration inequalities to bound the maximum deviation between the prediction accuracy of rule lists from a random sample w.r.t. the entire dataset. 
Our VC-dimension upper bounds generalize previous results that can be derived from rule-based classification models~\cite{rudin2013learning}, as they provide a more granular dependence on the rule list parameters compared to known results (see \Cref{sec:VCrulelists} for a detailed discussion). 
We establish the tightness of our analysis 
by proving almost matching lower bounds, that significantly improves over the best known results from Yildiz~\cite{yildiz2014vc}. 
The VC-dimension has been used to develop generalization bounds and sampling algorithms for other data mining tasks, including association rules mining~\cite{RiondatoU14} and sequential pattern mining~\cite{santoro2020mining}. 
To the best of our knowledge, our work is the first to use VC-dimension in designing efficient sampling approaches for learning rule lists.
}

\section{Preliminaries}
\label{sec:prelims}
We define a \emph{dataset} $\dataset$ as a collection $\dataset = \brpars{ (s_1 , t_1) , \dots , (s_n , t_n) }$ of $n$ training instances, where  
each training instance is a pair $(s , t) \in \dataset$ composed by a set $s$ of $d$ features, and a categorical label $t$. 
We denote with $x_{i}$ the $i$-th feature, for $1 \leq i \leq d$. 
For simplicity, in this work we will focus on datasets with binary features, $x_i \in \{ 0 , 1 \}$ for $1 \leq i \leq d$, and binary labels $t \in \{0 , 1\}$, but most of our results extend to the general case. We use $\dataset_s$ to denote the collection of the $d$ features of each training instance, ignoring the labels, that is $\dataset_{s} = \brpars{ s_1 ,s_2, \dots , s_n }$.

We define a rule list $R = [ r_{1} , r_{2} , \dots , r_{k} , \defrule ]$ of length $|R| = k$ as a sequence of $k \geq 0$ rules $r_{i} , 1 \leq i \leq k$, plus a \emph{default rule} $\defrule$. 
Each rule $r \in R$ is defined as a pair $r = (c , p)$ where $c$ is a condition on a subset of the $d$ features of the dataset, and $p \in \{0,1\}$ is a prediction of the label. 
Similarly to previous rule-based models~\cite{angelino2018learning,rudin2018learning,yang2017scalable}, we only consider conditions composed by conjunctions of monotone variables of the type $c = \bigwedge_{i} \qtm{x_{i}=1}$, i.e., we do not consider the negation of binary features (e.g., $\qtm{x_{i}=0}$). 
Note that negations can be included easily by considering additional binary features. 
The default rule $\defrule = ( c_{\defsymbol} , p )$ is composed by the condition $c_{\defsymbol}$ that is always true. 
Given an instance $(s,t) \in \dataset$, 
the predicted label $P(R,s)$ by the rule list $R$ on $s$ is computed as the predicted label $p$ of the first rule $r$ whose condition $c$ is satisfied by $s$; 
\Cref{fig:ruleexample} shows an example of a dataset with $n=5$ training instances over $d=4$ features, and a rule list $R$ of length $k=3$ with conjunctions with at most $z=2$ terms.

\begin{figure}[ht]
\begin{subfigure}{.23\textwidth}
\center
  \begin{tabular}{lrrrrrrr}
    \toprule
    $\dataset$              & $x_{1}$   & $x_{2}$ & $x_{3}$  & $x_{4}$ & $t$     \\
    \midrule
	$s_{1}$     & 0 & 1  & 0 & 0 & 1 \\
	$s_{2}$     & 1 & 1  & 0 & 0 & 1 \\
	$s_{3}$     & 0 &  0 & 1 & 1 & 1 \\
	$s_{4}$     & 0 &  0 & 0 & 0 & 0 \\
	$s_{5}$     & 1 &  0 & 1 & 1 & 0 \\
  \bottomrule
\end{tabular}
\caption{}
\end{subfigure}
\begin{subfigure}{.23\textwidth}
\tikzset{every picture/.style={line width=0.75pt}}
\begin{tikzpicture}[x=0.75pt,y=0.75pt,yscale=-1,xscale=1]
\draw (100,125) node [anchor=north west][inner sep=0.75pt]   [align=left] {{ \textbf{if} $x_{2} = 1\rightarrow 1$}};
\draw (100,140.5) node [anchor=north west][inner sep=0.75pt]   [align=left] {{ \textbf{else if} $x_{1} = 1 \wedge x_{3} = 1 \rightarrow 0$}};
\draw (100,155) node [anchor=north west][inner sep=0.75pt]   [align=left] {{ \textbf{else if} $x_{4} =1 \rightarrow 1$}};
\draw (100,170) node [anchor=north west][inner sep=0.75pt]   [align=left] {{ \textbf{else} $\rightarrow 0$}};
\end{tikzpicture}
\caption{}
\end{subfigure}
\caption{(a): example of a dataset $\dataset$ with $n=5$ instances and $d=4$ features. 
(b): example of a rule list $R$ with length $k=3$ with conjunctions with at most $z=2$ terms. 
The rule list $R$ perfectly classifies the instances of $\dataset$.}
\label{fig:ruleexample}
\Description{The figure shows an example of a dataset with 5 instances and 4 features, and an example of a rule list R with length 3 with conjunctions with at most 2 terms. The rule list R perfectly classifies the instances of the example dataset.}
\end{figure}

The \emph{regularized binary loss} $\ell(R,\dataset)$ of a rule list $R$ is defined, for some constant $\alpha \geq 0$, as
\begin{align*}
\ell(R,\dataset) = \frac{1}{n} \sum_{(s,t) \in \dataset} \ind{ P(R , s) \neq t } + \alpha |R|.
\end{align*}

The use of the \emph{regularized} loss is common practice in rule list learning~\cite{angelino2018learning}, allowing to prefer shorter (i.e., simpler) rule lists over longer ones when their accuracy is similar, as quantified by the parameter  $\alpha$.

Define $\ruleset_{k}^{z}$ as the set of rule lists with length $\leq k$, where each rule in $R$ is given by the conjunction of at most $z$ terms. 
The rule list learning problem is to identify a rule $R^{\star}$ such that 
\begin{align*}
R^{\star} = \argmin_{R \in \ruleset_{k}^{z}} \brpars{ \ell(R,\dataset) } .
\end{align*}
The values of $k$ and $z$ are usually fairly small, since large values would compromise the interpretability of the rule list~\cite{interpretablemlsurvery}. Finding $R^{\star}$ is NP-hard, and to develop efficient algorithms one has to resort to approximations. 

Our goal is to compute, for given accuracy parameters $\varepsilon , \theta \in (0,1]$, a rule list $\tilde{R}$ that provides an $(\varepsilon , \theta)$-approximation of the optimal rule list $R^{\star}$, defined as follows.
\begin{definition}
\label{def:guarantees}
A rule list $\tilde{R}$ provides an $(\varepsilon , \theta)$-approximation of the optimal rule list $R^{\star}$ if it holds
\begin{align*}
\ell(\tilde{R},\dataset) \leq \ell(R^{\star},\dataset) + \varepsilon \max\{ \ell(R^{\star},\dataset) , \theta \} .
\end{align*}
\end{definition}
Our definition of $(\varepsilon , \theta)$-approximation is motivated by the fact that the optimal loss $\ell(R^{\star},\dataset)$ is unknown a priori, and often extremely expensive to compute or even estimate from large datasets.  
Therefore, we need to design approximation guarantees that are sharp in all situations, i.e., for the all possible values of the optimal loss.  
The $(\varepsilon , \theta)$-approximation of \Cref{def:guarantees} allows to interpolate between an additive approximation, with small absolute error $\varepsilon \theta$, when the optimal loss $\ell(R^{\star},\dataset)$ is small (i.e., $\leq \theta$), and a relative approximation (with parameter $\varepsilon$) when the optimal loss $\ell(R^{\star},\dataset)$ is large (i.e., $> \theta$). 
This flexible design avoids statistical bottlenecks that are incurred when the loss of the best model is large, a well known issue in statistical learning theory~\cite{boucheron2005theory}.
Taking this effect into account allows \algname\ to be extremely efficient in terms of sample sizes. 

As we will formalize in \Cref{sec:algo}, the main goal of \algname\ is to compute an $(\varepsilon , \theta)$-approximation of the optimal rule list from a randomly drawn samples with high probability.

\section{\algname\ Algorithm}
\label{sec:algo}

As introduced in previous sections, the main idea of our approach is to compute an approximation of the best rule list performing the analysis of a small random sample, instead of processing a large dataset, which may be extremely expensive and unfeasible in practice. In this section we formally define our algorithm. 

Define a sample $\sample$ as a collection of $m$ instances of the dataset $\dataset$ taken uniformly and independently at random. 
The regularized loss of a rule list $R$ computed on the sample $\sample$ is defined as 
\begin{align*}
\ell(R,\sample) = \frac{1}{m} \sum_{(s,t) \in \sample} \ind{ P(R , s) \neq t } + \alpha |R|.
\end{align*}
Intuitively, $\ell(R,\sample)$ provides an \emph{estimate} of $\ell(R,\dataset)$. 
More precisely, the loss of any rule $R$ computed on $\sample$ is an \emph{unbiased} estimator of the loss computed on $\dataset$; in fact, it holds
\begin{align*}
\E_{\sample}[\ell(R,\sample)] = \ell(R,\dataset) . 
\end{align*}
We remark, however, that the convergence of $\ell(R,\sample)$ towards $\ell(R,\dataset)$ \emph{in expectation} is not sufficient to derive rigorous approximations for the best rule list; instead, it is necessary to take into account the variance of the deviations, that indirectly depends the \emph{complexity} of the class of rule list models. 
Therefore, our approach is based on properly bounding the \emph{deviations} $|\ell(R,\sample) - \ell(R,\dataset)|$ \emph{with high probability}, in order to guarantee that the best rule identified from the sample $\sample$ is a high-quality approximation of the optimal solution from $\dataset$. 
To obtain a scalable method, we wish to identify the smallest possible sample size that yields the above-mentioned guarantees. 
We address this challenging problem combining advanced VC-dimension bounds and sharp concentration inequalities. 

We now describe our algorithm \algname. The input to \algname\ is: a dataset $\dataset$ of $m$ training instances over $d$ features; the values $k$ and $z$, representing the maximum number of rules in a rule list and the maximum number of conditions for each rule; constants $\varepsilon, \theta  \in (0,1]$, that are the values defining the quality of the $(\varepsilon, \theta)$-approximation (see Section~\ref{sec:prelims}); and a constant $\delta \in (0,1]$, that defines the required \emph{confidence}, that is the probability that the output by \algname\ is a  $(\varepsilon, \theta)$-approximation. (Note that the set $\ruleset_{k}^{z}$ of rules lists that can be produced in output by \algname\ is defined by $\dataset$, $k$, and $z$.)  Given the input, \algname\ computes the quantity $\hat{m}(\ruleset_{k}^{z} , \dataset)$, defined as the minimum value of $m \geq 1$ such that
it holds 
\begin{align*}
\sqrt{ \frac{ 3\theta\ln(\frac{2}{\delta})}{m} } 
+  \sqrt{ \frac{2 \Bigl( \theta \! + \! \sqrt{ \frac{ 3\theta\ln(\frac{2}{\delta})}{m} }  \Bigr) \bigl( \omega \! + \! \ln(\frac{2}{\delta}) \bigr) }{m} } 
+ \frac{2 \bigl( \omega \! + \! \ln(\frac{2}{\delta}) \bigr) }{m} 
\leq \varepsilon \theta , 
\end{align*}
where $\omega = kz\ln(2ed/z)+2$. 
Note that $\hat{m}(\ruleset_{k}^{z} , \dataset)$ can be easily computed  (e.g., with a binary search over $m$). We also show analytical bounds to $\hat{m}(\ruleset_{k}^{z} , \dataset)$ in our analysis (Section~\ref{sec:guarantees}).

\algname\ than draws a sample $\sample$ of $\hat{m}(\ruleset_{k}^{z} , \dataset)$  instances taken uniformly at random from $\dataset$, and outputs $\tilde{R} = \argmin_{R \in \ruleset_{k}^{z}} \ell(R,\sample)$ by finding the optimal rule list on the sample $\sample$. Note that \algname\ can leverage any exact algorithm, such as CORELS~\cite{angelino2018learning}, to find $\tilde{R}$ from $\sample$.

In the following sections, we prove that  \algname\ provides an $(\varepsilon, \theta)$-approximation of the optimal rule list $R^\star = \argmin_{R \in \ruleset_{k}^{z}} \ell(R,\sample)$ with probability $\geq 1-\delta$. 
Our proof leverage novel bounds on the VC-dimension of rule lists, that we present in Section~\ref{sec:VCrulelists}, while the approximation guarantees from \algname\ are proved in Section~\ref{sec:guarantees}.

\subsection{Complexity of Rule Lists}
\label{sec:VCrulelists}

In this section we provide analytical results on the complexity of rule lists, by studying their VC-dimension~\cite{Vapnik:1971aa}.
Intuitively, the VC-dimension measures the capacity of a class of predicion models of predicting arbitrary labels assignments with perfect accuracy. 
As discussed in previous sections, these bounds will be instrumental to derive accurate approximation bounds and prove that the reported rule by \algname\ is an $(\varepsilon , \theta)$-approximation. 

We start by providing the main concepts related to VC-dimension (see~\cite{mitzenmacher2017probability,mohri2018foundations,ShalevSBD14} for a more in depth introduction). 

Given a dataset $\dataset = \brpars{ (s_1 , t_1) , \dots , (s_n , t_n) }$, remember that $\dataset_s = \brpars{ s_1 , \dots , s_n }$. Given the set $\ruleset_{k}^{z}$ of rule lists with length $\leq k$, where each condition contains conjunctions with at most $z$ terms, for each rule list $R \in \ruleset_{k}^{z}$ we define the \emph{projection} of $R$ on the dataset $\dataset$ as  $X(R , \dataset) = \{s \in \dataset_s: P(R,s) = 1\}$. 
We define the \emph{range space} associated to the dataset $\dataset$ and  to $\ruleset_{k}^{z}$ as $(\dataset, \tilde{\ruleset}_{k}^{z})$, where $\tilde{\ruleset}_{k}^{z} = \{ X(R , \dataset) : R \in \ruleset_{k}^{z} \}$ is the \emph{range set}. 
Note that in general $|\tilde{\ruleset}_{k}^{z}|$ can be smaller than $|\ruleset_{k}^{z}|$, since two rule lists $R_1$ and $R_2$ may provide the same predictions for all instances in a dataset $\dataset$, which implies $X(R_1 , \dataset) = X(R_2 , \dataset)$.  
The \emph{projection} of the range set $\tilde{\ruleset}_{k}^{z}$ on a subset $\dataset^{\prime} \subseteq \mathcal{D}$ of the dataset $\dataset$ is defined as $\mathcal{P}({\tilde{\ruleset}_{k}^{z}},\dataset^{\prime}) = \{X(R , \dataset^{\prime}) :  R \in \ruleset_{k}^{z} \}$. 

If $\mathcal{P}(\tilde{\ruleset}_k^z,\dataset^{\prime}) = 2^{\dataset^{\prime}}$, i.e., the projection of $\tilde{\ruleset}_k^z$ on $\dataset^{\prime}$ produces all possible dichotomies of the elements of $\dataset^{\prime}$, then $\dataset^{\prime}$ is \emph{shattered} by $\tilde{\ruleset}_{k}^{z}$. 
The VC-dimension of the range space $(\dataset, \tilde{\ruleset}_k^z)$ is the cardinality of the largest subset of $\dataset$ that is shattered by $\tilde{\ruleset}_{k}^{z}$. 
In what follows we will use $VC(\ruleset_{k}^{z})$ to denote the VC-dimension of the range space $(\dataset, \tilde{\ruleset}_{k}^{z})$ associated to the set of rule lists $\ruleset_{k}^{z}$.

\subsubsection{Upper bounds to the VC-dimension}
We now prove refined upper bounds to the VC-dimension of rule lists. 
We start by providing upper bounds to the \emph{growth function} (the maximum cardinality of distinct projections of a range set, see below) of rule lists $\ruleset_{k}^{1}$, for which $z=1$ (i.e., each rule consists of a condition on exactly 1 feature). 
From such bound we derive an upper bound to the VC-dimension of the range space associated to $\ruleset_{k}^{1}$, that we will generalize to derive an upper bound to the VC-dimension of $\ruleset_{k}^{z}$ for general values of $z$. 
Due to space constraints, some of the proofs are in Appendix.

Define the \emph{growth function} $\Lambda(\ruleset_{k}^{z},m)$ of the rule lists $\ruleset_{k}^{z}$ as the maximum number of \emph{distinct} projections of rule lists in $\ruleset_{k}^{z}$  over \emph{any} dataset $\dataset$ with $m$ instances, that is:
\begin{equation*}
\Lambda(\ruleset_{k}^{z},m) = \max_{\dataset^{\prime} \subseteq \dataset: |\dataset^{\prime}| = m} |\mathcal{P}(\tilde{\ruleset}_k^z,\dataset^{\prime})|. 
\end{equation*}

\begin{theorem}
\label{thm:gfunct}
It holds 
\begin{align*}
\Lambda(\ruleset_{k}^{1},m ) \leq 2 + \sum_{j=1}^{k} \sqpars{ 2^{j} \prod_{i=0}^{j-1}(d-i) } .
\end{align*}
\end{theorem}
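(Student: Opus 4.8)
The plan is to upper bound the growth function by counting \emph{canonical} rule lists. Since $\Lambda(\ruleset_{k}^{1},m)$ counts distinct projections $X(R,\dataset^{\prime})$, and a projection depends on a list $R$ only through the $0/1$ labeling it induces on $\dataset^{\prime}_{s}$, many syntactically different lists of $\ruleset_{k}^{1}$ collapse to the same projection. I would therefore show that every projection is realized by a list in a restricted canonical family, and then simply count that family.

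First I would record two prediction-preserving reductions. (Reduction A) Since with $z=1$ each condition tests a single feature $\qtm{x_{a}=1}$, if feature $a$ appears in two rules $r_{i},r_{i^{\prime}}$ with $i<i^{\prime}$, then any instance reaching $r_{i^{\prime}}$ has already failed $r_{i}$ and thus has $x_{a}=0$, so $r_{i^{\prime}}$ is never triggered and can be deleted without changing $P(R,s)$ for any $s$; hence I may assume the features used in the conditions are pairwise distinct. (Reduction B) If the last non-default rule $r_{j}=(c_{j},p_{j})$ has $p_{j}$ equal to the prediction of $\defrule$, then every instance reaching $r_{j}$ receives that common prediction whether or not it satisfies $c_{j}$, so $r_{j}$ can be deleted; iterating, I may assume either the list is empty (a constant labeling) or $p_{j}$ is the complement of the default prediction. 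With both reductions in force, a non-constant projection is realized by a list specified by an ordered sequence of $j$ distinct features (there are $\prod_{i=0}^{j-1}(d-i)$ of these) together with the predictions $p_{1},\dots,p_{j}\in\{0,1\}$ (there are $2^{j}$ of these), for some $1\le j\le k$, while the default prediction is \emph{forced} to equal $1-p_{j}$. Adding the two constant labelings $\emptyset$ and $\dataset^{\prime}_{s}$ yields exactly $2+\sum_{j=1}^{k} 2^{j}\prod_{i=0}^{j-1}(d-i)$, which is the claimed bound.

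The crux, and the step that separates this tight bound from a trivial one, will be Reduction B. A naive count that freely chooses the $j$ predictions, the $j$ features, \emph{and} the default prediction overcounts by a factor of roughly two, giving $2^{j+1}\prod_{i=0}^{j-1}(d-i)$; observing that a trailing rule agreeing with the default is vacuous pins the default to the complement of the last prediction and removes precisely this factor. The main routine-but-essential check will be verifying that the deletions in Reductions A and B leave $P(R,s)$ unchanged for every $s$, and confirming that iterating them terminates in the canonical form; the counting itself will be immediate once the canonical form is justified.
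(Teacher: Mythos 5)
Your proposal is correct and takes essentially the same route as the paper's proof: both arguments rest on the same two reductions (a trailing rule whose prediction agrees with the default is vacuous, forcing the default to be the complement of the last prediction; a rule repeating an earlier feature never fires and can be deleted), followed by counting ordered sequences of $j$ distinct features times the $2^{j}$ prediction assignments, plus the two constant lists. The only difference is packaging — you build a canonical family directly via the reductions, while the paper takes a union bound over exact-length classes $\mG_{j}$ and then excludes lists already covered by shorter ones — but the substance is identical.
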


\begin{proof}
For $0 \leq j \leq k$, 
we define the set $\mG_{j}$ of rule lists with length exactly $j$ as $\mG_{j} = \{ R \in \ruleset_{k}^{1} : |R| =j  \}$,
and define $\tilde{\mG}_{j} = \{ X(R , \dataset) : R \in \mG_{j} \}$
the range set for $\mG_{j}$,
with $\tilde{\ruleset}_k^1 = \cup_j \tilde{\mG}_{j}$.
From an union bound and Jensen's inequality, it holds
\begin{align*}
\Lambda(\ruleset_{k}^{1},m ) 
&= \max_{\dataset^{\prime} \subseteq \dataset: |\dataset^{\prime}| = m} |\mathcal{P}(\cup_j \tilde{\mG}_{j},\dataset^{\prime})| \\
&\leq \max_{\dataset^{\prime} \subseteq \dataset: |\dataset^{\prime}| = m} \sum_{j=0}^k |\mathcal{P}(\tilde{\mG}_{j},\dataset^{\prime})| 
\leq \sum_{j=0}^{k} \Lambda(\mG_{j},m ).
\end{align*}
We proceed to bound each $\Lambda(\mG_{j},m )$ separately, while excluding rule lists $R \in \mG_{j}$ that have the same projection, on any $\dataset^\prime$, of other rules $R^\prime \in \mG_{j^\prime}$, for some $j^\prime \neq j$.
First, consider $\mG_{0}$; 
it holds $\Lambda(\mG_{0},m ) = 2$ since $\mG_{0}$ contains $R_{1} = [(c_{\defsymbol},1)]$ and $R_{2} = [(c_{\defsymbol},0)]$.
Then, consider $\mG_{j}$ for any $1 \leq j \leq k$, and let any rule list $R \in \mG_{j}$, 
with $R = [(c_{1},p_{1}) , \dots , (c_{j},p_{j}) , (c_{\defsymbol},p)  ]$. 
There are two possibilities: $p_{j} = p$ or $p_{j} \neq p$. 
In the first case, denote the rule $R^{\prime} = [(c_{1},p_{1}) , \dots , (c_{j-1},p_{j-1}) , (c_{\defsymbol},p)  ] \in \mG_{j-1}$. 
It is easy to observe that the projections $X(R,\dataset^{\prime})$ and $X(R^{\prime},\dataset^{\prime})$ are equal, for any $\dataset^{\prime}$.
Therefore, any rule list $\in \mG_{j}$ with $p_{j} = p$ can be ignored from the upper bound to $\Lambda(\mG_{j},m )$ as it is already covered by at least one element of the set $\mG_{j-1}$.
Then, consider a rule list $R = [(c_{1},p_{1}) , \dots , (c_{j},p_{j}) , (c_{\defsymbol},p)  ] \in \mG_{j}$ such that there exist two indices $1 \leq y < i \leq j$ with $c_{y} = c_{i}$. 
We observe that the rule $R^{\prime} = [(c_{1},p_{1}) , \dots , (c_{i-1},p_{i-1}) , (c_{i+1},p_{i+1}) \dots , (c_{j},p_{j}) , (c_{\defsymbol},p)  ] \in \mG_{j-1}$ 
is equivalent to $R$, since $X(R,\dataset^{\prime}) = X(R^{\prime},\dataset^{\prime})$ for any $\dataset^{\prime}$. 
From these observations, the number of rules lists $R \in \mG_{j}$ with distinct projections on a dataset with $m$ elements cannot exceed $2^{j} \prod_{i=0}^{j-1}(d-i)$, as we consider all possible ordered choices of $j$ distinct elements from the set of $d$ features $\{ x_{1} , \dots , x_{d} \}$, and all $2^{j}$ possible assignments of the predicted values $\{ p_{1} , \dots , p_{j} \} \in \{0,1\}^{j}$, while $p$ is set to the unique value $\neq p_{j}$.
The sum of these bounds yields the statement. 
\end{proof}

A consequence of \Cref{thm:gfunct} is
the following upper bound to the VC-dimension $VC(\ruleset_{k}^{1})$ of $\ruleset_{k}^{1}$.
\begin{corollary}
\label{thm:vcupperbound}
The VC-dimension $VC(\ruleset_{k}^{1})$ of $\ruleset_{k}^{1}$ is 
\begin{align*}
VC(\ruleset_{k}^{1}) \leq \left\lfloor k \log_{2} \pars{2d } + 2 \right\rfloor.
\end{align*}
\end{corollary}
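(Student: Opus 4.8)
The plan is to derive the VC-dimension bound from the growth function bound of \Cref{thm:gfunct} using the standard fact that if a range set shatters a set of size $v$, then the growth function at $m=v$ must equal $2^{v}$, since shattering produces all $2^{v}$ dichotomies. Hence $VC(\ruleset_{k}^{1})$ is bounded by the largest $v$ for which $\Lambda(\ruleset_{k}^{1}, v) \geq 2^{v}$, equivalently the largest $v$ such that $2^{v}$ does not exceed the bound of \Cref{thm:gfunct}. So the core of the proof is purely a calculation: find the threshold at which the upper bound on the growth function drops below $2^{v}$.

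The key step is to bound the sum $2 + \sum_{j=1}^{k} 2^{j}\prod_{i=0}^{j-1}(d-i)$ from \Cref{thm:gfunct} by a cleaner closed-form expression. First I would bound each product $\prod_{i=0}^{j-1}(d-i) \leq d^{j}$, so that the $j$-th term is at most $(2d)^{j}$. The sum $\sum_{j=0}^{k}(2d)^{j}$ is a geometric series, which for $d \geq 1$ is dominated by roughly twice its largest term $(2d)^{k}$; a convenient clean bound is $\Lambda(\ruleset_{k}^{1}, m) \leq (2d)^{k+1}$ or something of that flavor, with the small additive constants absorbed. I would then impose the shattering-necessary inequality $2^{v} \leq \Lambda(\ruleset_{k}^{1}, v)$ and observe that the right-hand bound does not actually depend on $v=m$ (the growth-function bound in \Cref{thm:gfunct} is a function of $d$ and $k$ only, not $m$), so the inequality reads $2^{v} \leq (2d)^{k+\text{const}}$. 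Taking $\log_{2}$ of both sides gives $v \leq (k+\text{const})\log_{2}(2d)$, and tracking the constants carefully so that the final floor matches $\lfloor k\log_{2}(2d) + 2\rfloor$.

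The main obstacle is bookkeeping the additive constants to land exactly on $\lfloor k\log_{2}(2d) + 2\rfloor$ rather than a looser expression like $(k+1)\log_{2}(2d)$. The geometric-series bound and the $+2$ additive term from $\mG_{0}$ both contribute lower-order corrections, and I must bound them tightly enough that, after taking $\log_{2}$, the leading coefficient stays $k$ (not $k+1$) while the remaining slack is at most $2$. Concretely I expect to need an inequality of the form $2 + \sum_{j=1}^{k}(2d)^{j} \leq 4 (2d)^{k}$ (or a similarly tight constant-factor bound), so that $\log_{2}$ yields $k\log_{2}(2d) + 2$; the factor $4 = 2^{2}$ is exactly what produces the additive $2$. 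I would verify this geometric bound holds for all $d \geq 1$, handle the degenerate small-$d$ case if necessary, and then apply the floor since $VC$ is an integer.
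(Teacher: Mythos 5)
Your proposal is correct and takes essentially the same approach as the paper: both invoke the growth-function bound of \Cref{thm:gfunct}, use the fact that a shattered set of size $v$ forces $2^{v}$ distinct projections (so the $m$-independent bound $2+\sum_{j=1}^{k}(2d)^{j}$ must be at least $2^{v}$), and then take $\log_{2}$ and apply the floor. The only difference is the final bookkeeping: your proposed bound $2+\sum_{j=1}^{k}(2d)^{j}\leq 4(2d)^{k}$ does hold (in fact $3(2d)^{k}$ suffices for $d\geq 1$, $k\geq 1$, with the $k=0$ case trivial), whereas the paper reaches the same $k\log_{2}(2d)+2$ via the inequality $\log_{2}(1+x)\leq\log_{2}(x)+\log_{2}(e)/x$.
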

\begin{proof}
We first note that the VC-dimension of a range set $Q$ is at most $m$ if  it holds
$\log_{2}( \Lambda(Q , m) ) < m $. 
Using the upper bound to $\Lambda(\ruleset_{k}^{1} , m)$ from \Cref{thm:gfunct}, we have the following:
\begin{align*}
& \log_{2}  (\Lambda(\ruleset_{k}^{1} , m)) 
\leq  \log_{2} \pars{ 2 + \sum_{j=1}^{k} \sqpars{ 2^{j} \prod_{i=0}^{j-1}(d-i) } } \\
&\leq  \log_{2} \pars{ 2 + \sum_{j=1}^{k} (2d)^{j} } 
\leq  \log_{2} \pars{1 + \frac{(2d)^{k+1}  }{2d-1} } \\
& \leq  \log_{2} \pars{ \frac{(2d)^{k+1} }{2d-1} } + \log_{2}(e) \frac{2d-1}{(2d)^{k+1} } ,
\end{align*}
where in the last inequality we have used the fact that $\log_{2}(1+x)\leq \log_2(x) + \log_2(e)/x$, $\forall x \geq 0$.
Then, we have 
\begin{align*}
& \log_{2} \pars{ \frac{(2d)^{k+1} }{2d-1} } + \log_{2}(e) \frac{2d-1}{(2d)^{k+1} } \\
&\leq (k+1) \log_{2} \pars{ 2d } - \log_{2} \pars{ 2d-1 } + \log_{2}(e)( 2d)^{-k} \\
& = k \log_{2} \pars{ 2d } + \log_{2} \pars{ \frac{2d}{2d-1} } + \log_{2}(e)( 2d)^{-k}  \\
&\leq k \log_{2} \pars{2d} + 1 + \log_{2}(e)(2d)^{-k} 
< k \log_{2} \pars{2d} + 2 ,
\end{align*}
proving the statement. 
\end{proof}
Using the above upper bound to $VC(\ruleset_{k}^{1})$, we prove the following upper bound to the VC-dimension $VC(\ruleset_{k}^{z})$ of $\ruleset_{k}^{z}$.
\begin{corollary}
\label{thm:vcupperboundgen}
The VC-dimension $VC(\ruleset_{k}^{z})$ of $\ruleset_{k}^{z}$ is 
\begin{align*}
VC(\ruleset_{k}^{z}) \leq \left\lfloor k z \log_{2} \pars{ \frac{2 e d}{z} } + 2 \right\rfloor.
\end{align*}
\end{corollary}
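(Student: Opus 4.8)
The plan is to reduce the general case $z \ge 1$ to the already-established case $z=1$ of \Cref{thm:vcupperbound} by a change of feature space. The key observation is that a condition in a rule of $\ruleset_{k}^{z}$ is a conjunction of at most $z$ of the $d$ monotone terms, and each such conjunction is itself a single binary-valued function of the input $s$. Hence, if we let $D$ denote the number of distinct conjunctions of at most $z$ terms over the $d$ features, every rule list in $\ruleset_{k}^{z}$ is structurally a rule list whose rules each have a single condition drawn from a pool of $D$ \emph{composite features}, i.e., an element of $\ruleset_{k}^{1}$ over $D$ features rather than $d$.

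First I would argue that this correspondence preserves projections, and therefore the VC-dimension: since the predicted label $P(R,s)$, and hence the projection $X(R,\dataset')$, depends on the conditions only through the binary values they take on the instances, the counting argument of \Cref{thm:gfunct} applies verbatim with the role of the $d$ single features replaced by the $D$ composite conditions. In particular, the elimination of duplicated conditions and of rules whose prediction matches the default used in that proof carries through unchanged, yielding $\Lambda(\ruleset_{k}^{z},m) \le 2 + \sum_{j=1}^{k} 2^{j}\prod_{i=0}^{j-1}(D-i)$, and consequently, exactly as in the proof of \Cref{thm:vcupperbound}, $VC(\ruleset_{k}^{z}) \le \lfloor k\log_{2}(2D) + 2 \rfloor$.

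It then remains to bound $D$. I would count $D \le \sum_{i=0}^{z}\binom{d}{i}$ and invoke the standard inequality $\sum_{i=0}^{z}\binom{d}{i} \le \left(\frac{ed}{z}\right)^{z}$, valid for $1 \le z \le d$. Since $z \ge 1$, this gives $2D \le 2\left(\frac{ed}{z}\right)^{z} \le 2^{z}\left(\frac{ed}{z}\right)^{z} = \left(\frac{2ed}{z}\right)^{z}$, and therefore $\log_{2}(2D) \le z\log_{2}\!\left(\frac{2ed}{z}\right)$. Substituting into the bound on $VC(\ruleset_{k}^{z})$ and using the monotonicity of the floor function yields $VC(\ruleset_{k}^{z}) \le \lfloor kz\log_{2}(2ed/z) + 2\rfloor$, as claimed.

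I expect the main obstacle to be the justification of the reduction itself --- specifically, making precise that replacing single features by composite conjunctions leaves the growth-function counting argument of \Cref{thm:gfunct} intact. One must verify that the two sources of redundancy exploited there (a rule whose prediction equals the default, and two rules sharing an identical condition) still correspond to genuine equalities of projections when conditions are conjunctions; this holds because those arguments only use that a condition is a fixed binary function on the data. The remaining steps --- the binomial-sum bound and the algebraic manipulation $2(ed/z)^{z} \le (2ed/z)^{z}$ --- are routine.
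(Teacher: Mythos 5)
Your proposal is correct and follows essentially the same route as the paper: the paper also reduces to the $z=1$ case by treating each conjunction of at most $z$ terms as a single composite binary feature (formally, by constructing an expanded dataset $\dataset^z$ with $d' = \sum_{i=1}^{z}\binom{d}{i}$ features and applying \Cref{thm:vcupperbound} there), then bounds $d' \leq \left(\frac{ed}{z}\right)^{z}$ and uses the same algebraic step $2\left(\frac{ed}{z}\right)^{z} \leq \left(\frac{2ed}{z}\right)^{z}$. Your re-running of the growth-function count with $D$ composite conditions is just an inlined version of the paper's invocation of the $z=1$ corollary on the expanded feature space, so the two arguments coincide in substance.
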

The bounds we derived in \Cref{thm:vcupperbound} and \Cref{thm:vcupperboundgen} scale linearly with $k$ and $z$, and logarithmically with $d$. 
This implies that, since we are interested in \emph{sparse} models with small values of $k$ and $z$, the resulting complexity will not be large. 
This guarantees that accurate models will generalize well, and that the random sample $\sample$ should be representative of the dataset $\dataset$. 
Moreover, the weak dependence of the bounds on $d$ allows \algname\ to be extremely effective even on datasets with large sets of features. 

Rudin et al.~\cite{rudin2013learning} study generalization bounds for classes of binary classification models built from sequences of association rules; this general model includes rule lists. 
Moreover, they show that the VC-dimension of the set of rule lists created using pre-mined rules is exactly the size of the set of pre-mined rules, which is $\BOi{d^{z}}$ (see Theorem~3 in~\cite{rudin2013learning}). 
However, we remark that this result only holds for unconstrained rule lists (e.g., without any bound $k$ on their length),
and cannot be adapted to our setting. 
Our $\BOi{kz \log(d/z)}$ upper bound on the VC-dimension is more accurate, since it offers a more granular dependence on the parameters defining the rule list search space. 
More precisely, it explicitly depends on the maximum number $k$ of rules in the list, the number of features $d$, and the maximum number $z$ of conditions in each rule, while the bound $\BOi{d^{z}}$ from \cite{rudin2013learning} is not sensible to such constraints (as it assumes $k = d^{z}$). 
Since in most cases $k \ll d^{z}$, our upper bounds are orders of magnitude smaller. 

\subsubsection{Lower bounds to the VC-dimension}
In this section we prove almost matching \emph{lower bounds} to the VC-dimension of rule lists, 
confirming the tightness of our analysis. 
Our first result involves a lower bound to $VC(\ruleset_{k}^{1})$.
\begin{theorem}
\label{thm:vclowerbound}
The VC-dimension $VC(\ruleset_{k}^{1})$ of $\ruleset_{k}^{1}$ is 
\begin{align*}
VC(\ruleset_{k}^{1}) &\geq \left\lfloor k \log_{2} \pars{\frac{d+k}{k}} \right\rfloor.
\end{align*}
\end{theorem}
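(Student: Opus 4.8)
The plan is to prove the lower bound by exhibiting an explicit dataset $\dataset$ of $m = \lfloor k\log_2((d+k)/k)\rfloor$ instances that is shattered by $\tilde{\ruleset}_{k}^{1}$. First I would record a structural observation about rule lists in $\ruleset_{k}^{1}$: since each condition tests a single coordinate $x_i$ and every prediction bit (including the one of $\defrule$) is free, a list $[(x_{i_1},p_1),\dots,(x_{i_j},p_j),\defrule]$ assigns to an instance $s$ the label $p_\ell$, where $\ell$ is the index of the first coordinate with $s_{i_\ell}=1$ (or the default label if none matches). Thus a list of length $\le k$ partitions $\dataset_s$ into at most $k+1$ ``first-hit'' classes and labels each class independently, so a target labeling $\sigma\colon\dataset_s\to\{0,1\}$ is realizable iff some ordered choice of $\le k$ coordinates induces a first-hit partition refining $\sigma$.

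With this reduction, I would build the shattered set from disjoint groups. Partition the $m$ points into $k$ groups $B_1,\dots,B_k$ of sizes $b_1,\dots,b_k$ with $\sum_\ell b_\ell = m$, and design the instance-coordinate matrix so that for every group $\ell$ and every nonempty $A\subseteq B_\ell$ there is one coordinate whose $1$-set equals $A$ and is $0$ on all points outside $B_\ell$. This uses exactly $\sum_\ell (2^{b_\ell}-1)$ coordinates, so the construction is feasible provided
\[
\sum_{\ell=1}^{k}\pars{2^{b_\ell}-1}\;\le\;d .
\]
To verify shattering, fix any $\sigma$, let $A_\ell=\sigma^{-1}(1)\cap B_\ell$, set the default prediction to $0$, and for each group with $A_\ell\neq\emptyset$ add the single rule whose coordinate has $1$-set $A_\ell$ and whose prediction is $1$. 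Since these coordinates have pairwise disjoint supports, the first coordinate matched by $s\in B_\ell$ can only be the one of group $\ell$, so $s$ is labeled $1$ exactly when $s\in A_\ell$ and every other point falls through to the default $0$. Hence the list (of length $\le k$) realizes exactly $\sigma^{-1}(1)$, so all $2^m$ labelings are attainable and $VC(\ruleset_{k}^{1})\ge m=\sum_\ell b_\ell$.

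It then remains to choose the group sizes so as to maximize $\sum_\ell b_\ell$ subject to $\sum_\ell 2^{b_\ell}\le d+k$ (the equivalent form of the budget above). Because $x\mapsto 2^{x}$ is convex, the balanced choice $2^{b_\ell}\approx (d+k)/k$, i.e. $b_\ell\approx\log_2((d+k)/k)$, is optimal and gives $\sum_\ell b_\ell\approx k\log_2((d+k)/k)$; rounding the $b_\ell$ to integers and rechecking the budget yields the claimed $\lfloor k\log_2((d+k)/k)\rfloor$. I expect this last integer-allocation step to be the main obstacle: the objective $\sum_\ell b_\ell$ and the constraint $\sum_\ell 2^{b_\ell}\le d+k$ pull against each other through the convexity of $2^{x}$, so one must argue carefully that integer sizes can be chosen that simultaneously sum to the target floor and respect the coordinate budget (distributing the fractional part of $k\log_2((d+k)/k)$ by letting some groups have size $\lceil\log_2((d+k)/k)\rceil$). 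All remaining steps, namely the first-hit characterization and the disjoint-support shattering check, are routine once this allocation is fixed.
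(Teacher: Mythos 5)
Your construction is essentially the paper's: the paper builds a block-diagonal dataset of $ak$ points in which each block of $a$ points carries $2^a-1$ private features enumerating all nonempty subsets of that block (zero outside the block), and shatters it with one prediction-$1$ rule per block plus a default $0$ --- exactly your disjoint-support, first-hit argument, specialized to equal block sizes $b_\ell=a$. The only difference is the final integer-allocation step, which you defer, and that step is a genuine gap: it cannot be completed as you propose. Write $x=\log_2\!\left(\frac{d+k}{k}\right)$ and $t=x-\lfloor x\rfloor$. Your feature budget $\sum_\ell (2^{b_\ell}-1)\le d$ is equivalent to $\sum_\ell 2^{b_\ell}\le k2^x$; with $j$ blocks of size $\lceil x\rceil$ and $k-j$ of size $\lfloor x\rfloor$ this reads $2^{\lfloor x\rfloor}(k+j)\le k2^{\lfloor x\rfloor}2^{t}$, i.e.\ $j\le k(2^{t}-1)$, whereas reaching $\sum_\ell b_\ell=\lfloor kx\rfloor$ forces $j=\lfloor kt\rfloor$. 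Since $2^{t}-1\le t$ on $[0,1]$ (convexity of $2^{t}$, with equality only at the endpoints), one can have $\lfloor kt\rfloor>k(2^{t}-1)$; and the usual exchange argument shows the floor/ceiling split already minimizes $\sum_\ell 2^{b_\ell}$ among integer vectors of fixed sum, so no other allocation can rescue it. Concretely, for $k=2$, $d=21$: the target is $\lfloor 2\log_2(11.5)\rfloor=7$, but the split $(b_1,b_2)=(4,3)$ needs $(2^4-1)+(2^3-1)=22>21$ features and any more unbalanced split needs even more, so your construction tops out at $b_1=b_2=3$, i.e.\ at $6=k\lfloor x\rfloor$.

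What your argument does prove is $VC(\ruleset_{k}^{1})\ge k\left\lfloor \log_2\!\left(\frac{d+k}{k}\right)\right\rfloor$. In fairness, that is also all the paper's own proof establishes: it only treats $d=(2^a-1)k$ with $a$ an integer (where the two floor placements coincide), and for general $d$ the equal-block construction likewise yields $k\lfloor x\rfloor$ rather than the stated $\lfloor kx\rfloor$. So your proposal is faithful to the paper's route and no weaker than it; but as a proof of the inequality as stated, the deferred allocation step is a real gap, and it cannot be filled by redistributing block sizes --- one must either settle for the weaker bound $k\lfloor\log_2((d+k)/k)\rfloor$ or use a construction that does not pay a separate $2^{b_\ell}-1$ feature budget per block.
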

We may observe that the upper bound $\BOi{k \log (d)}$ given by \Cref{thm:vcupperbound} is almost tight, since \Cref{thm:vclowerbound} implies that $VC(\ruleset_{k}^{1}) \in \BLi{k \log (\frac{d+k}{k})}$. 
Reducing this gap is an interesting open problem. 
We remark that the lower bound from \Cref{thm:vclowerbound} improves the best known result, which provides a (weaker) lower bound of rate $\BLi{k + \log (d-k)}$ (see Theorem~3 of~\cite{yildiz2014vc}). 

We now prove a lower bound for the general case $z \ge 1$.
\begin{theorem}
\label{thm:vclowerboundgen}
The VC-dimension $VC(\ruleset_{k}^{z})$ of $\ruleset_{k}^{z}$ is 
\begin{align*}
VC(\ruleset_{k}^{z}) &\geq \left\lfloor k z \log_{2} \pars{\frac{d}{z \sqrt[z]{k} }} \right\rfloor.
\end{align*}
\end{theorem}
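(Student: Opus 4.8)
The plan is to prove the bound by an explicit construction: I will exhibit a dataset $\dataset$ with $N = \lfloor kz \log_{2}(d/(z\sqrt[z]{k}))\rfloor$ instances and show that $\dataset_s$ is shattered by $\tilde{\ruleset}_{k}^{z}$, i.e. that each of the $2^{N}$ label assignments on these points is realized by the predictions $P(R,\cdot)$ of some rule list $R \in \ruleset_{k}^{z}$. This mirrors the construction behind \Cref{thm:vclowerbound} (the case $z=1$, where each rule tests a single feature), but the general case is genuinely harder because a condition is now a conjunction of $z$ features, whose satisfying set is a monotone (up-closed) family. In particular, conjunctions formed over disjoint blocks of features can only carve out ``rectangle''-type subsets, so a naive generalization fails to produce all dichotomies.

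The key primitive I will establish is that a \emph{single} length-$z$ conjunction can select an \emph{arbitrary} subset of a group of about $z\log_{2}(d/z)$ points. Concretely, I partition a power-of-two-sized portion of the $d$ features into $z$ blocks $B_{1},\dots,B_{z}$ and index the features of each block by binary addresses. The points of a group are arranged into $z$ rows indexed by the blocks, the points of row $t$ corresponding to the bit positions of $B_{t}$'s addresses. I then define each point's support so that it activates \emph{every} feature of every block other than its own row, while inside its own row it activates a feature iff the corresponding address bit is set. With this encoding, a point satisfies an aligned conjunction (one chosen feature per block) iff the single address bit indexed by its own row is selected; hence, as the conjunction ranges over all choices of one feature per block, its satisfying set ranges over \emph{all} subsets of the group. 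This is exactly the step that defeats the product/AND obstruction: by making each point transparent to all blocks except its own, the conjunction behaves like a single super-feature with fully controllable incidence.

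To exploit all $k$ rules I replicate this into $k$ groups and, crucially, \emph{share} the feature blocks across groups rather than using disjoint ones (disjoint blocks would only yield the weaker rate $kz\log_{2}(d/(zk))$, i.e.\ $k$ in place of $\sqrt[z]{k}$). Reuse is made safe by reserving $\lceil \log_{2} k\rceil$ address bits of a \emph{single} block as a group tag: a group-$\ell$ point activates tagged features only for tag $\ell$, so a conjunction carrying tag $\ell$ captures only points of group $\ell$ and is invisible to the others. Paying the tag in one block costs only $\log_{2} k$ bits per group (total $k\log_{2} k$), which is precisely the gap between $kz\log_{2}(d/z)$ and the claimed bound. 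Given any target labeling, I take the rule list $[r_{1},\dots,r_{k},\defrule]$ whose $\ell$-th rule is the tag-$\ell$ conjunction selecting the $1$-labeled points of group $\ell$ with prediction $1$, and default prediction $0$; by the non-interference property each point is decided by the rule of its own group, reproducing the labeling exactly.

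The main obstacle is this first step---forcing a single monotone conjunction to realize an arbitrary dichotomy of a group---together with the feature-reuse accounting that buys $\sqrt[z]{k}$ instead of $k$. I expect the remaining work to be routine integrality bookkeeping: choosing block sizes as powers of two, verifying that the total number of features used is at most $d$, and checking that $N$ attains the floor $\lfloor kz\log_{2}(d/(z\sqrt[z]{k}))\rfloor$, under a mild regularity condition (essentially that $d$ is at least of order $zk$, so the group tag fits within one block).
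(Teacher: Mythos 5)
Your route is genuinely different from the paper's. The paper never constructs a shattered set for general $z$: it observes that every $k$-term monotone $z$-DNF $\bigvee_{i=1}^{h} c_i$ is realized by the rule list $[(c_1,1),\dots,(c_h,1),(c_{\defsymbol},0)]$, so that $D_k^z \subseteq \ruleset_{k}^{z}$, and then invokes a known lower bound for monotone DNF (\Cref{thm:lowerboundvcdnf}, Littlestone's Lemma~6), namely $VC(D_k^z) \geq \lfloor kz\log_2(d/a)\rfloor$ for any integer $a$ with $\binom{a}{z}\geq k$, finally choosing $a \approx z\sqrt[z]{k}$ via $\binom{a}{z}\geq\pars{a/z}^z\geq k$. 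You instead re-prove such a lemma by explicit construction, generalizing the paper's own $z=1$ construction (\Cref{thm:vclowerbound}). Your single-rule primitive is sound: the block/address encoding, with each point transparent outside its own row, does let one $z$-term conjunction select an arbitrary subset of a group, and in the regime where your full construction exists your accounting $kz\log_2(d/z)-k\log_2 k=kz\log_2\pars{d/(z\sqrt[z]{k})}$ is correct.

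The genuine gap is in how you separate the $k$ groups. Concentrating the whole tag in one block forces that block to contain $k\cdot 2^{m_1}\geq k$ distinct features: each group needs its own tag feature, since under your transparency convention a conjunction whose block-1 feature is shared by two groups is satisfied by points of both, and an untagged one by points of all groups. Hence your construction exists only when $d\geq k$ (plus the other blocks), and your closing condition that $d$ be of order at least $zk$ is not a mild regularity assumption: the theorem is claimed, and is far from vacuous, for all $d > z\sqrt[z]{k}$, and $z\sqrt[z]{k}$ can be exponentially smaller than $k$ (the meaningful range of the statement is $k$ up to roughly $(d/z)^z$). For instance, with $d=100$, $z=5$, $k=10^4$ the claimed bound is about $8\times 10^4$, while your tags alone would require $10^4 \gg d$ features. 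The missing idea --- exactly what the hypothesis $\binom{a}{z}\geq k$ encodes in the lemma the paper cites --- is to distribute the group identity across all $z$ conjuncts rather than concentrating it in one: give every block a tag-part of $\frac{1}{z}\log_2 k$ bits (equivalently, assign each group a distinct $z$-subset of $a\approx z\sqrt[z]{k}$ shared blocks) and make each point active only on features carrying its own group's tag-part. This costs the same total number of points as your scheme ($k\log_2 k$), but it shrinks the largest per-block feature requirement from $k\cdot 2^{m}$ to $k^{1/z}\cdot 2^{m}$, which is what makes the bound valid down to $d\approx z\sqrt[z]{k}$ and is where the $\sqrt[z]{k}$ in the statement actually comes from. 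Separately, your ``routine integrality bookkeeping'' is too optimistic: power-of-two blocks give per-block floors, e.g.\ $k\bigl(\lfloor\log_2(d/(zk))\rfloor+(z-1)\lfloor\log_2(d/z)\rfloor\bigr)$, which can fall strictly below the global floor in the statement (for $k=1$, $z=2$, $d=91$ your construction shatters $10$ points while the claimed bound is $11$). As written, your argument yields the right asymptotic rate but a statement weaker than the theorem by an additive $O(kz)$ and restricted to the large-$d$ regime.
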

From \Cref{thm:vclowerboundgen} and \Cref{thm:vcupperboundgen} we conclude that 
$VC(\ruleset_{k}^{z}) \in \BOi{k z \log(d/z)}$ and 
$VC(\ruleset_{k}^{z}) \in \Omega( k z \log( d/z\sqrt[z]{k} ) )$, 
which are almost matching. 
A further refinement of either the upper or lower bound is an interesting direction for future research.

\subsection{Approximation guarantees}
\label{sec:guarantees}

In this section we prove sharp approximation bounds for the estimated losses $\ell(R,\sample)$ of rules $R$ on a random sample $\sample$ w.r.t. the losses $\ell(R,\dataset)$ on the dataset $\dataset$,
and derive sufficient sample sizes to compute $(\varepsilon, \theta)$-approximations for the optimal rule list $R^\star$. 
Due to space constraints, we provide most of the proofs in the Appendix. 
We define $\omega$ as 
$\omega = kz\ln(2ed/z)+2 $, a complexity parameter of our bounds derived from the results of \Cref{sec:VCrulelists} (\Cref{thm:vcupperboundgen}). 
\begin{theorem}
\label{thm:approxbounds}
Let $\sample$ be an i.i.d. random sample of size $m \geq 1$ of the dataset $\dataset$.
Then, for $\delta \in (0,1)$, the following inequalities hold simultaneously with probability $\geq 1-\delta$:
\begin{align*}
\ell(R,\dataset) &\leq \ell(R,\sample) \!+\! \sqrt{\frac{2 \ell(R,\sample) \! \pars{ \omega \! + \! \logtdelta } }{ m }} \!+\! \frac{2 \! \pars{ \omega \! + \! \logtdelta } }{ m } , \forall R \in \ruleset_{k}^{z}, \\
\ell(R^\star,\sample) &\leq \ell(R^\star,\dataset) + \sqrt{\frac{3 \ell(R^\star,\dataset) \logtdelta }{m}} .
\end{align*}
\end{theorem}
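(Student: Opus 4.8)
The plan is to establish both inequalities as relative (multiplicative) deviation bounds, each failing with probability at most $\delta/2$, and then to combine them by a union bound. The first thing I would observe is that the regularization term cancels in every deviation: writing $L(R,\cdot)$ for the unregularized misclassification rate, we have $\ell(R,\dataset)-\ell(R,\sample)=L(R,\dataset)-L(R,\sample)$, since $\alpha|R|$ is deterministic. Hence it suffices to control deviations of the misclassification rates, and because $L(R,\sample)\le \ell(R,\sample)$ and $L(R^\star,\dataset)\le \ell(R^\star,\dataset)$, any bound proved with $L$ inside the square roots immediately implies the stated bounds with $\ell$. The crucial structural fact is that, for a fixed rule list $R$, each sampled instance contributes an independent $\mathrm{Bernoulli}(L(R,\dataset))$ error indicator, so $m\,L(R,\sample)$ is a sum of $m$ i.i.d.\ Bernoulli variables with mean $L(R,\dataset)$; this lets me invoke sharp multiplicative Chernoff bounds for a single $R$.

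For the first (uniform) inequality I would start from the lower-tail multiplicative Chernoff bound for a single fixed $R$, giving $L(R,\dataset)-L(R,\sample)\le \sqrt{2 L(R,\dataset)\, t / m}$ with probability at least $1-e^{-t}$. Reading this as a quadratic inequality in $\sqrt{L(R,\dataset)}$ and solving (using $\sqrt{a+b}\le\sqrt a+\sqrt b$) moves the $L(R,\dataset)$ under the root to the empirical $L(R,\sample)$, yielding $L(R,\dataset)\le L(R,\sample)+\sqrt{2 L(R,\sample)\,t/m}+2t/m$. To make this hold simultaneously for all $R\in\ruleset_{k}^{z}$ I would not union bound over the (possibly huge) set of rule lists, but over the distinct error patterns they induce on $\dataset$: the error-indicator vector of $R$ is determined by its projection $X(R,\dataset)$ together with the fixed labels, and by the counting argument behind \Cref{thm:gfunct} (and its generalization underlying \Cref{thm:vcupperboundgen}) the number of distinct projections is bounded by a quantity that does not depend on $m$ and whose logarithm is at most $\omega$. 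A union bound over these at most $e^{\omega}$ patterns, with $t=\omega+\logtdelta$, then caps the total failure probability by $e^{\omega}e^{-t}=\delta/2$ and gives the first inequality.

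For the second inequality I only need to control the single optimal rule list $R^\star$, so no union bound is required. Applying the upper-tail multiplicative Chernoff bound to the sum $m\,L(R^\star,\sample)$ of i.i.d.\ $\mathrm{Bernoulli}(L(R^\star,\dataset))$ variables, with the sharper constant $3$ in the exponent, gives $L(R^\star,\sample)\le L(R^\star,\dataset)+\sqrt{3 L(R^\star,\dataset)\logtdelta/m}$ with probability at least $1-\delta/2$, and relaxing $L(R^\star,\dataset)$ to $\ell(R^\star,\dataset)$ inside the root yields the claim. A final union bound over the two failure events, each of probability $\delta/2$, establishes that both inequalities hold simultaneously with probability at least $1-\delta$.

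The main obstacle I anticipate is the uniform first inequality, and specifically the reduction of the effective size of $\ruleset_{k}^{z}$ to the $m$-independent count of distinct projections on the fixed dataset: this is what lets me replace a symmetrization/ghost-sample argument with a plain union bound, and it is exactly where the machinery of \Cref{sec:VCrulelists} enters, through the identification $\ln(\text{number of distinct error patterns})\le\omega$. A secondary, purely technical difficulty is performing the Chernoff inversion so that the bound features the \emph{empirical} loss $L(R,\sample)$ (rather than the unknown $L(R,\dataset)$) under the square root while preserving the constants $2$ and $2$, and matching the constant $3$ in the second bound to the upper-tail Chernoff exponent.
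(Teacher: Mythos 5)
Your proposal is correct and takes essentially the same route as the paper's proof: cancel the regularization term, apply the multiplicative Chernoff bounds (lower tail plus quadratic inversion to place the empirical loss under the root for the uniform inequality, upper tail with constant $3$ for the single fixed $R^\star$), and replace the union bound over all of $\ruleset_{k}^{z}$ with one over representatives of the distinct projections on $\dataset$, whose logarithmic count is bounded by $\omega$ via the growth-function results of \Cref{sec:VCrulelists}. The only immaterial difference is ordering: you invert the Chernoff bound for a single $R$ before taking the union bound, while the paper unions first and inverts at the end.
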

From the concentration bounds of \Cref{thm:approxbounds} we conclude that, with high probability, all estimates $\ell(R,\sample)$ from the sample $\sample$ provide accurate upper bounds to the losses $\ell(R,\dataset)$ in the dataset, while the loss $\ell(R^\star,\sample)$ of the optimal rule list strongly concentrates to its expectation $\ell(R^\star,\dataset)$. 
We now combine these inequalities to obtain an easy-to-check condition;
this condition yields a sufficient number of samples to obtain a high-quality approximation of the optimal rule list $R^\star$ from a random sample.
Define $\hat{m}(\ruleset_{k}^{z} , \dataset)$ as the minimum value of $m \geq 1$ such that
it holds 
\begin{align}
\sqrt{ \frac{ 3\theta\ln(\frac{2}{\delta})}{m} } 
+  \sqrt{ \frac{2 \Bigl( \theta \! + \! \sqrt{ \frac{ 3\theta\ln(\frac{2}{\delta})}{m} }  \Bigr) \bigl( \omega \! + \! \ln(\frac{2}{\delta}) \bigr) }{m} } 
+ \frac{2 \bigl( \omega \! + \! \ln(\frac{2}{\delta}) \bigr) }{m} 
\leq \varepsilon \theta . \label{eq:sampleboundexpl}
\end{align}
We prove that a sample $\sample$ of size at least $\hat{m}(\ruleset_{k}^{z} , \dataset)$ provides an $(\varepsilon, \theta)$-approximation of $R^\star$ with high probability.
\begin{theorem}
\label{thm:algoguarantees}
Let $\sample$ be an i.i.d. random sample of size $\geq \hat{m}(\ruleset_{k}^{z} , \dataset)$ of the dataset $\dataset$.
The output $\tilde{R} = \argmin_{R \in \ruleset_{k}^{z}} \ell(R,\sample) $ of \algname\ provides an $(\varepsilon, \theta)$-approximation of $R^\star$ with probability $\geq 1-\delta$.
\end{theorem}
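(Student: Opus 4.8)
The plan is to condition on the high-probability event of \Cref{thm:approxbounds} and then chain its two inequalities, exploiting the optimality of $\tilde{R}$ on the sample together with the defining inequality \eqref{eq:sampleboundexpl} of $\hat{m}(\ruleset_{k}^{z},\dataset)$. Write $L^\star = \ell(R^\star,\dataset)$ and let $\mathcal{E}$ be the event that both inequalities of \Cref{thm:approxbounds} hold simultaneously; that theorem gives $\Pr[\mathcal{E}] \geq 1-\delta$, so it suffices to show that $\tilde{R}$ is an $(\varepsilon,\theta)$-approximation of $R^\star$ whenever $\mathcal{E}$ occurs and $m = |\sample| \geq \hat{m}(\ruleset_{k}^{z},\dataset)$. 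From here on everything is deterministic.

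First I would instantiate the uniform (first) inequality of \Cref{thm:approxbounds} at the specific rule list $R = \tilde{R}$, bounding $\ell(\tilde{R},\dataset)$ in terms of $\ell(\tilde{R},\sample)$. The right-hand side is increasing in $\ell(\tilde{R},\sample)$, and since $\tilde{R}$ minimizes the sample loss we have $\ell(\tilde{R},\sample) \leq \ell(R^\star,\sample)$, so I may replace $\ell(\tilde{R},\sample)$ by $\ell(R^\star,\sample)$. I would then apply the second inequality of \Cref{thm:approxbounds}, which bounds $\ell(R^\star,\sample)$ above by $L^\star + \sqrt{3 L^\star \logtdelta / m}$, and substitute it into the previous display. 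Collecting terms yields
\begin{align*}
\ell(\tilde{R},\dataset) - L^\star \leq g(L^\star), \qquad g(L) = \sqrt{ \frac{3 L \logtdelta}{m} } + \sqrt{ \frac{2 \pars{ L + \sqrt{3 L \logtdelta / m} } \pars{ \omega + \logtdelta } }{m} } + \frac{2 \pars{ \omega + \logtdelta }}{m},
\end{align*}
where $g$ is exactly the left-hand side of \eqref{eq:sampleboundexpl} with $\theta$ replaced by $L^\star$.

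It then remains to show $g(L^\star) \leq \varepsilon \max\{L^\star,\theta\}$, which is precisely the requirement of \Cref{def:guarantees}. Since $m \geq \hat{m}(\ruleset_{k}^{z},\dataset)$, inequality \eqref{eq:sampleboundexpl} gives the certificate $g(\theta) \leq \varepsilon\theta$. I would split into two cases. If $L^\star \leq \theta$, then every term of $g$ is increasing in its argument, so $g$ is monotone and $g(L^\star) \leq g(\theta) \leq \varepsilon\theta = \varepsilon \max\{L^\star,\theta\}$. If $L^\star > \theta$, I would instead show that $L \mapsto g(L)/L$ is non-increasing on $(0,\infty)$: dividing the first and third terms by $L$ gives $\sqrt{3\logtdelta/(mL)}$ and $2(\omega+\logtdelta)/(mL)$, both decreasing, while the middle term divided by $L$, once squared, equals $\tfrac{2(\omega+\logtdelta)}{m}\bigl(\tfrac{1}{L} + \sqrt{3\logtdelta/m}\,L^{-3/2}\bigr)$, also decreasing. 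Hence $g(L^\star)/L^\star \leq g(\theta)/\theta \leq \varepsilon$, giving $g(L^\star) \leq \varepsilon L^\star = \varepsilon \max\{L^\star,\theta\}$.

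The main obstacle is this second case, where an \emph{additive} certificate stated only at $\theta$ must be turned into a \emph{relative} guarantee at the unknown optimal loss $L^\star > \theta$. The technical fact that makes this work is the monotonicity of $g(L)/L$ (rather than the easier monotonicity of $g$ itself), which hinges on the self-bounding form of the deviation, in particular on the nested $\sqrt{L}$ inside the middle term. Verifying this monotonicity carefully is where the real content lies: it is exactly what allows the single checkable condition \eqref{eq:sampleboundexpl}, evaluated only at $\theta$, to certify the approximation simultaneously in both the small-loss (additive) and large-loss (relative) regimes.
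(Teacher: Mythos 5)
Your proof is correct and follows the same route as the paper's: condition on the event of \Cref{thm:approxbounds}, instantiate its first (uniform) inequality at $R=\tilde{R}$, chain it with the sample-optimality $\ell(\tilde{R},\sample)\leq\ell(R^\star,\sample)$ and with the second inequality, reduce everything to showing $u(\ell(R^\star,\dataset),m)\leq\varepsilon\max\{\theta,\ell(R^\star,\dataset)\}$ where $u(v,m)$ is the left-hand side of \eqref{eq:sampleboundexpl} with $\theta$ replaced by $v$, and split on whether $\ell(R^\star,\dataset)\leq\theta$. The one place you diverge is the case $\ell(R^\star,\dataset)>\theta$: the paper argues that the derivative in $v$ of the left-hand side of \eqref{eq:condtocheck} is monotonically decreasing while that of the right-hand side is constant, whereas you prove that $v\mapsto u(v,m)/v$ is non-increasing, checking each term explicitly. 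Your variant is in fact the more solid one: a function whose derivative is decreasing and which lies below $\varepsilon v$ at $v=\theta$ need not stay below $\varepsilon v$ for $v>\theta$ (an affine $v\mapsto c+av$ with $c<0$, $a>\varepsilon$, and $c+a\theta\leq\varepsilon\theta$ already violates this), so the paper's derivative comparison is incomplete as literally stated; what rescues it is that $u(\cdot,m)$ is concave \emph{and} satisfies $u(0,m)=2\bigl(\omega+\logtdelta\bigr)/m\geq 0$, and the monotonicity of the ratio $u(v,m)/v$ that you verify is exactly the clean way to package these two facts. So your proposal not only matches the paper's argument but tightens the justification of the one step the paper treats informally.
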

\begin{proof}
Our proof is based on obtaining an upper bound to $\ell(\tilde{R},\dataset)$, where $\tilde{R}$ is the rule list reported by \algname, that is a function of the optimal loss $\ell(R^{\star},\dataset)$, and then showing that it is sufficiently small to guarantee an $(\varepsilon, \theta)$-approximation. 
First, we observe that $\ell(R^{\star} , \dataset) \leq \ell(\tilde{R} , \dataset)$, as $R^{\star}$ is one of the rule lists with minimum loss. 
Therefore, from the first set of inequalities of \Cref{thm:approxbounds} setting $R = \tilde{R}$, we have the upper bound to $\ell(R^{\star} , \dataset)$ 
\begin{align*}
\ell(R^{\star} , \dataset) 
\leq \ell(\tilde{R} , \sample) 
\!+\! \sqrt{\frac{2 \ell(\tilde{R},\sample) \! \pars{ \omega \! + \! \logtdelta } }{ m }} \!+\! \frac{2 \! \pars{ \omega \! + \! \logtdelta } }{ m } . \numberthis \label{eq:proofone}
\end{align*}
We now prove that, since $\tilde{R}$ is chosen as the rule list with minimum loss on the sample, its estimated loss $\ell(\tilde{R} , \sample)$ should be concentrated towards $\ell(R^{\star} , \dataset)$. 
We have $\ell(\tilde{R} , \sample) \leq \ell(R^{\star} , \sample)$, as $\tilde{R}$ is one of the rule list with minimum loss in $\sample$.
Using the last inequality of \Cref{thm:approxbounds} it holds
\begin{align}
\ell(\tilde{R} , \sample) \leq \ell(R^{\star} , \sample) 
\leq \ell(R^{\star} , \dataset) + \sqrt{\frac{3 \ell(R^\star,\dataset) \logtdelta }{m}} . \label{eq:prooftwo}
\end{align}
Replacing all occurences of $\ell(\tilde{R} , \sample)$ in \eqref{eq:proofone} with its upper bound given by the r.h.s. of \eqref{eq:prooftwo}, we obtain 
\begin{align}
\ell(\tilde{R} , \dataset) \leq \ell(R^{\star} , \dataset) + u( \ell(R^{\star} , \dataset) , m ) ,
\end{align}
where the function $u(v , m)$ is 
\begin{align*}
u(v , m) \! = \! \sqrt{\frac{3 v \logtdelta }{m}} \! + \! \sqrt{\frac{2 \Bigl( v \! + \! \sqrt{\frac{3 v \logtdelta }{m} \Bigr) } \! \pars{ \omega \! + \! \logtdelta } }{ m }} \!+\! \frac{2 \pars{ \omega \! + \! \logtdelta } }{ m } .
\end{align*}
We now seek to identify the minimum $m$ such that $\ell(\tilde{R} , \dataset)$ is guaranteed to be $\leq \ell(R^{\star} , \dataset) + \varepsilon \max\{ \theta , \ell(R^{\star} , \dataset) \}$. 
Note that, from the derivations above, it is sufficient to verify that $u(\ell(R^{\star} , \dataset)  , m) \leq \varepsilon \max\{ \theta , \ell(R^{\star} , \dataset) \}$, for any $\ell(R^{\star} , \dataset) \in [0,1]$.
First, consider the case $\ell(R^{\star} , \dataset) \leq \theta$.
We have that
\begin{align*}
u(\ell(R^{\star} , \dataset)  , m) \leq u(\theta  , m) \leq \varepsilon \theta
\end{align*}
is true when $m \geq \hat{m}(\ruleset_{k}^{z} , \dataset)$ by definition of $\hat{m}(\ruleset_{k}^{z} , \dataset)$ given in the statement, since $u(\theta  , m) \leq \varepsilon\theta$ is guaranteed by \eqref{eq:sampleboundexpl}. 
We now consider the case $\ell(R^{\star} , \dataset) \geq \theta$.
We need to verify that 
\begin{align}
u(v  , m) \leq \varepsilon v , \forall  \theta \leq v \leq 1 . \label{eq:condtocheck}
\end{align}
From the definition of $\hat{m}(\ruleset_{k}^{z} , \dataset)$, we know that $u(\theta  , m) \leq \varepsilon \theta$ holds for all $m \geq \hat{m}(\ruleset_{k}^{z} , \dataset)$, therefore \eqref{eq:condtocheck} is verified for $v=\theta$.
By taking the derivative w.r.t. $v$ of both sides of \eqref{eq:condtocheck}, it is simple to check that the derivative of the r.h.s. is constant, while the derivative of the l.h.s. is monotonically decreasing with $v$. 
Therefore, \eqref{eq:condtocheck} also holds for all $\theta < v \leq 1$,
obtaining the statement. 
\end{proof}
We note that, while $\hat{m}(\ruleset_{k}^{z} , \dataset)$ can be easily computed (e.g., with a binary search over $m$), from \eqref{eq:sampleboundexpl} it may not be simple to interpret its dependence on the several parameters. To do so, we prove the following upper bound to $\hat{m}(\ruleset_{k}^{z} , \dataset)$.
\begin{theorem}
\label{thm:vcupperboundbigo}
It holds 
$\hat{m}(\ruleset_{k}^{z} , \dataset) \in \mathcal{O} \bigl( \frac{\omega + \ln(\frac{1}{\delta})}{\varepsilon^2 \theta} \bigr) $.
\end{theorem}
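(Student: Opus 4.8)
The plan is to exploit that $\hat{m}(\ruleset_{k}^{z} , \dataset)$ is defined as the \emph{minimum} $m$ satisfying \eqref{eq:sampleboundexpl}: it therefore suffices to exhibit a single candidate value $m^\star = \BO{\frac{\omega + \ln(1/\delta)}{\varepsilon^2\theta}}$ for which the inequality provably holds, and conclude $\hat{m}(\ruleset_{k}^{z} , \dataset) \leq m^\star$. To keep the bookkeeping clean I would set $C := \omega + \logtdelta$ and note that, since $\omega \geq 2$, we have $C = \BT{\omega + \ln(1/\delta)}$, so it is enough to produce $m^\star = \BO{C/(\varepsilon^2\theta)}$.

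The core idea is to force each of the three summands on the left-hand side of \eqref{eq:sampleboundexpl} to be at most $\varepsilon\theta/3$, so that their sum is at most $\varepsilon\theta$. The first summand $\sqrt{3\theta\logtdelta / m}$ falls below $\varepsilon\theta/3$ as soon as $m \geq 27\logtdelta/(\varepsilon^2\theta)$, and the third summand $2C/m$ falls below $\varepsilon\theta/3$ as soon as $m \geq 6C/(\varepsilon\theta)$. Since $\varepsilon \leq 1$ and $\logtdelta \leq C$, both thresholds are $\BO{C/(\varepsilon^2\theta)}$, so neither term drives the rate.

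The only genuinely coupled term is the middle one, whose radicand contains a copy of the first summand. The key observation is that, for $m$ above the first threshold, that inner copy is already at most $\varepsilon\theta/3 \leq \theta$, whence $\theta + \sqrt{3\theta\logtdelta/m} \leq 2\theta$. Substituting this bound decouples the term, giving the clean estimate $2\sqrt{\theta C/m}$, which drops below $\varepsilon\theta/3$ once $m \geq 36C/(\varepsilon^2\theta)$. Taking $m^\star$ to be the maximum of the three thresholds, which is still $\BO{C/(\varepsilon^2\theta)}$, makes all three summands at most $\varepsilon\theta/3$ simultaneously; hence \eqref{eq:sampleboundexpl} holds at $m^\star$ and $\hat{m}(\ruleset_{k}^{z} , \dataset) \leq m^\star = \BO{\frac{\omega + \ln(1/\delta)}{\varepsilon^2\theta}}$.

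I expect the main obstacle to be exactly this middle term: naively, the presence of $\sqrt{3\theta\logtdelta/m}$ inside its square root ties the three summands together and appears to require solving the inequality self-referentially. The decoupling trick above, namely bounding the inner term by $\theta$ using the threshold already imposed by the first summand, is what avoids an implicit equation for $m$ and keeps the final rate at $\varepsilon^{-2}$ rather than something worse. The remaining work is purely the routine algebra of inverting each of the three elementary inequalities for $m$.
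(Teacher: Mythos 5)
Your proposal is correct and follows essentially the same route as the paper's proof: both exhibit a witness $m$ of order $(\omega+\ln(1/\delta))/(\varepsilon^{2}\theta)$ satisfying \eqref{eq:sampleboundexpl} and invoke minimality of $\hat{m}(\ruleset_{k}^{z},\dataset)$, and both rely on the same key decoupling step, namely using a threshold of order $\ln(2/\delta)/\theta$ (implied by the first summand) to bound the nested square root by $\theta$, so that the middle term collapses to $\sqrt{4\theta(\omega+\ln(2/\delta))/m}$. The only difference is the closing algebra: you split the budget $\varepsilon\theta$ into three equal parts and invert each summand separately, whereas the paper merges the two square-root terms into $\sqrt{14\theta(\omega+\ln(2/\delta))/m}$ and solves the resulting quadratic inequality, which yields explicit (slightly sharper) constants but the same asymptotic rate.
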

\begin{proof}
First, we prove that $\hat{m}(\ruleset_{k}^{z} , \dataset) \geq 3\ln(2/\delta) / \theta$. 
This follows easily from $\sqrt{\frac{3\theta\ln(2/\delta)}{m}} \leq \varepsilon\theta$, i.e., only considering the leftmost term of \eqref{eq:sampleboundexpl}. 
Then, it holds
\begin{align*}
& \sqrt{ \frac{ 3\theta\ln(\frac{2}{\delta})}{m} } + \sqrt{ \frac{2 \pars{ \theta \! + \! \sqrt{ \frac{ 3\theta\ln(\frac{2}{\delta})}{m} } } \bigl( \omega \! + \! \ln(\frac{2}{\delta}) \bigr) }{m} } + \frac{2 \bigl( \omega \! + \! \ln(\frac{2}{\delta}) \bigr) }{m} \\
&\leq \sqrt{ \frac{ 3\theta\ln(\frac{2}{\delta})}{m} } + \sqrt{ \frac{ 4\theta \bigl( \omega \! + \! \ln(\frac{2}{\delta}) \bigr)}{m} }  + \frac{2 \bigl( \omega \! + \! \ln(\frac{2}{\delta}) \bigr) }{m}  \\
&\leq \sqrt{ \frac{ 14 \theta \bigl( \omega \! + \! \ln(\frac{2}{\delta}) \bigr)}{m} }  + \frac{2 \bigl( \omega \! + \! \ln(\frac{2}{\delta}) \bigr)}{m} ,
\end{align*}
where the second-last inequality holds for all $m \geq 3\ln(2/\delta) / \theta$. 
By solving the quadratic inequality
\begin{align*}
\sqrt{ 14 m \theta (\omega + \ln(2/\delta)) }  + 2 (\omega + \ln(2/\delta)) \leq m \varepsilon \theta ,
\end{align*}
we observe that \eqref{eq:sampleboundexpl} holds for all $m$ with
\begin{align}
m \geq \frac{  \bigl( \omega \! + \! \ln(\frac{2}{\delta}) \bigr) \sqrt{14} \sqrt{8 \varepsilon \! + \! 14}   }{ 2 \varepsilon^2 \theta }
+ \frac{2 \bigl( \omega \! + \! \ln(\frac{2}{\delta}) \bigr) }{ \varepsilon \theta }
+ \frac{7 \bigl( \omega \! + \! \ln(\frac{2}{\delta}) \bigr) }{ \varepsilon^2 \theta } . \label{eq:mhatupperbound}
\end{align}
The r.h.s. of \eqref{eq:mhatupperbound} is $\in \BOi{ \frac{\omega + \ln(\frac{1}{\delta})}{\varepsilon^2 \theta} }$; 
moreover, it upper bounds $\hat{m}(\ruleset_{k}^{z} , \dataset)$, as \eqref{eq:mhatupperbound} provides a value of $m$ that make \eqref{eq:sampleboundexpl} true (not necessarily the minimum), obtaining the statement. 
\end{proof}
Interestingly, the sample size $\hat{m}(\ruleset_{k}^{z} , \dataset)$ is completely independent of the size $n$ of the dataset $\dataset$: 
it only depends on the parameters $\omega = kz\ln(2ed/z)+2$ of the rule lists search space, and the desired approximation accuracy $\varepsilon, \theta$ and confidence $\delta$. 
This characteristic allows \algname\ to be applied to massive datasets, of arbitrarily large size $n$. 

\section{Experiments}
\label{sec:experiments}

This section presents the results of our experiments.
The main goal of our experimental evaluation is to test the scalability of \algname\ in analyzing large datasets compared to exact approaches (\Cref{sec:expexact}). 
To do so, we measure the number of samples used by \algname\ to obtain an accurate approximation of the best rule, its running time,
and the accuracy of the reported rule list compared to the optimal solution. Then, we also compare \algname\ with state-of-the-art heuristics for rule list training (\Cref{sec:expheuristics}),
as such methods, while not offering theoretical guarantees, may still provide accurate rule lists in practice.
Finally, we quantify the performance of \algname\ under several settings of the rule list parameters $z$ and $k$ (\Cref{sec:expparams}).

\textit{Datasets.}
We tested \algname\ on $8$ benchmark datasets from UCI\footnote{\url{https://archive.ics.uci.edu/}}.
We binarized the datasets containing countinous features by considering $4$ thresholds at equally spaced quantiles: 
for each countinous feature $f$ and each threshold $t$, we created two binary features $\qtm{f \geq t}$ and $\qtm{f < t}$. 
The statistics of the resulting binary datasets are described in \Cref{tab:datasets}. 
Since some of these datasets are quite small, we replicate them $r$ times, i.e., each training sample of $\dataset$ is copied $r$ times (see the column $r$ of \Cref{tab:datasets}). 
Note that this preprocessing allows to obtain larger datasets, all with approximately $10^6$ training instances, while preserving the rule list distribution and search space structure, as the losses and covered instances of all rule lists are the same on the original and replicated datasets. 

\textit{Compared methods.} 
Our main goal is to compare \algname\ with CORELS~\cite{angelino2018learning}, the state-of-the-art exact method to identify the rule list with minimum loss from a dataset $\dataset$. 
\algname, as described in \Cref{sec:algo}, generates the random sample $\sample$ and then runs an exact algorithm to search for the rule list with minimum loss on $\sample$ (using the same settings and parameters). Note that by using CORELS as exact algorithm within \algname, all differences between \algname\ and CORELS are due to the use of sampling, i.e., there are no other confounding factors in the comparison between the two methods. 
We also compare \algname\ with SBRL~\cite{yang2017scalable} and RIPPER~\cite{cohen1995fast}, two state-of-the-art heuristic approaches for rule list learning. 
SBRL uses a scalable Monte-Carlo approach to approximately search for accurate rule lists, while RIPPER leverages a greedy selection of conditions. 
Our comparison is motivated by the fact that, even if these methods do not provide guarantees in terms of solution quality, they are often the methods of choice for practitioners since they obtain good solutions and are designed to scale to large datasets.

\textit{Rule Lists Parameters.}
To compare \algname\ with exact and heuristic methods (Sections~\ref{sec:expexact} and~\ref{sec:expheuristics}), for each dataset we set the parameter $k$ as shown in \Cref{tab:datasets} and $z=1$. 
In Section~\ref{sec:expparams} we evaluate the impact of different choices of $k$ and $z$. 
Regarding \algname, we compute $(\varepsilon, \theta)$-approximations for several combinations of the parameters $\varepsilon$ and $\theta$.
We consider $\varepsilon \in \{ 1 , 0.5 , 0.25 \}$, and vary $\theta$ in the interval $[0.005, 0.05]$. 
For all experiments we fix $\delta = 0.05$, as we did not observe significant differences for other values (given the exponential dependence of the bounds w.r.t. $\delta$).

\textit{Experimental setup.}
We implemented \algname\ in Python.
The code and the scripts to reproduce all experiments are available online\footnote{\url{https://github.com/VandinLab/SamRuLe}}.
For CORELS, we have used the implementation available online\footnote{\url{https://github.com/corels/corels}}.
We made minor modifications to the original implementation to limit its exploration to rule lists of length at most $k$ 
(i.e., pruning all rules of length $>k$ instead of performing an unbounded search). 
We made similar minor changes to the implementation of SBLR\footnote{\url{https://github.com/Hongyuy/sbrlmod/}}. 
We evaluated RIPPER with a recent efficient implementation\footnote{\url{https://github.com/imoscovitz/wittgenstein}}.
All the code was compiled and executed on a machine 
equipped with 2.30 GHz Intel Xeon CPU, 
 $1$ TB of RAM, on Ubuntu 20.04. 
We repeated all experiments $10$ times, and report averages $\pm $ stds 
over the $10$ repetitions.

\begin{table}
  \caption{Statistics of the datasets considered in our experiments.  
  $n$ is the number of transactions, $d$ is the number of binary features, $r$ is the replication factor, $k$ is the maximum rule list length.  }
\label{tab:datasets}
\center
  \begin{tabular}{lrrrrrr}
    \toprule
    $\dataset$              & $n$      & $d$ & $r$  & $k$     \\
    \midrule
	a9a     & 32561 & 124  & 100 & 4 \\
	adult     & 32561 & 175  & 100 & 4 \\
	bank     & 41188 &  152 & 100 & 4 \\
	higgs     & $11000000$ &  531 & 1 & 3 \\
	ijcnn1     & 91701 &  35 & 100 & 8 \\
	mushroom     & 8124 & 118 & 200 & 5  \\
	phishing     & 11050 &  69 & 100 & 8 \\
	susy     & $5000000$ &  179 & 1 & 5 \\
  \bottomrule
\end{tabular}
\end{table}

\begin{figure*}[ht]
\begin{subfigure}{.525\textwidth}
  \centering
  \includegraphics[width=\textwidth]{./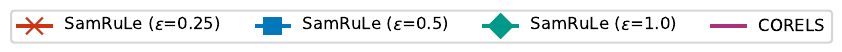}
\end{subfigure} \\
\begin{subfigure}{.247\textwidth}
  \centering
  \includegraphics[width=\textwidth]{./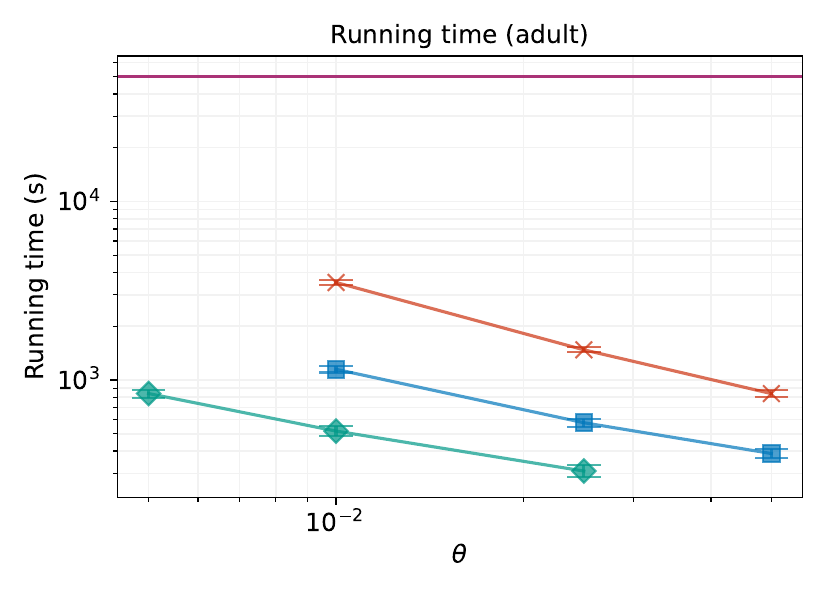}
  \caption{}
\end{subfigure}
\begin{subfigure}{.247\textwidth}
  \centering
  \includegraphics[width=\textwidth]{./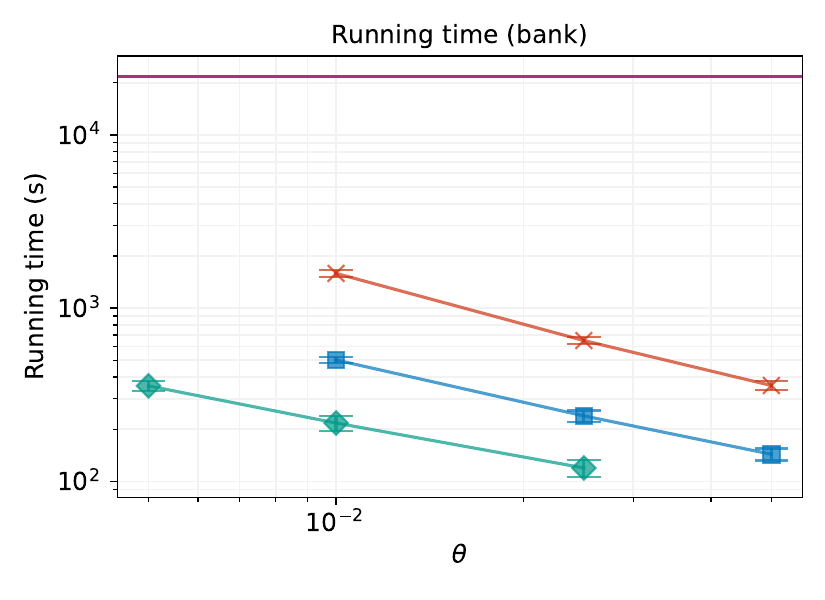}
  \caption{}
\end{subfigure}
\begin{subfigure}{.247\textwidth}
  \centering
  \includegraphics[width=\textwidth]{./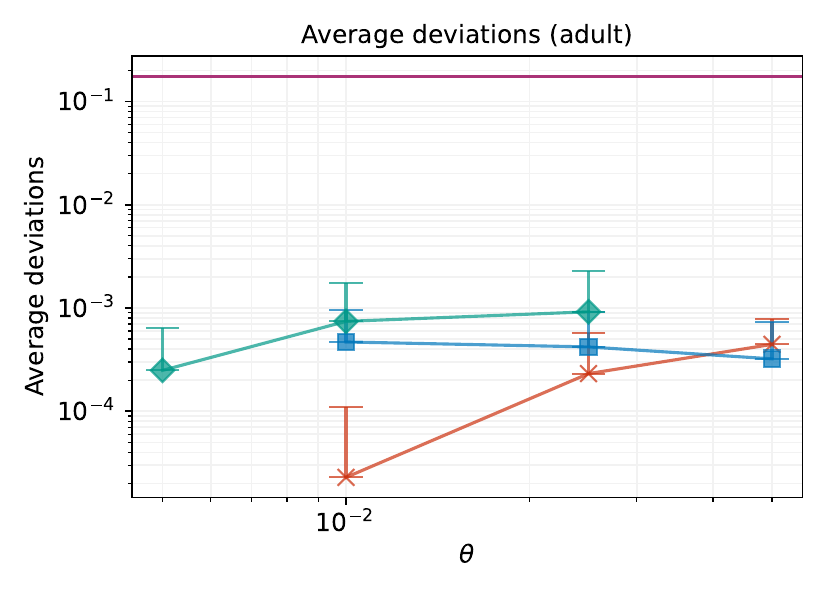}
  \caption{}
\end{subfigure}
\begin{subfigure}{.247\textwidth}
  \centering
  \includegraphics[width=\textwidth]{./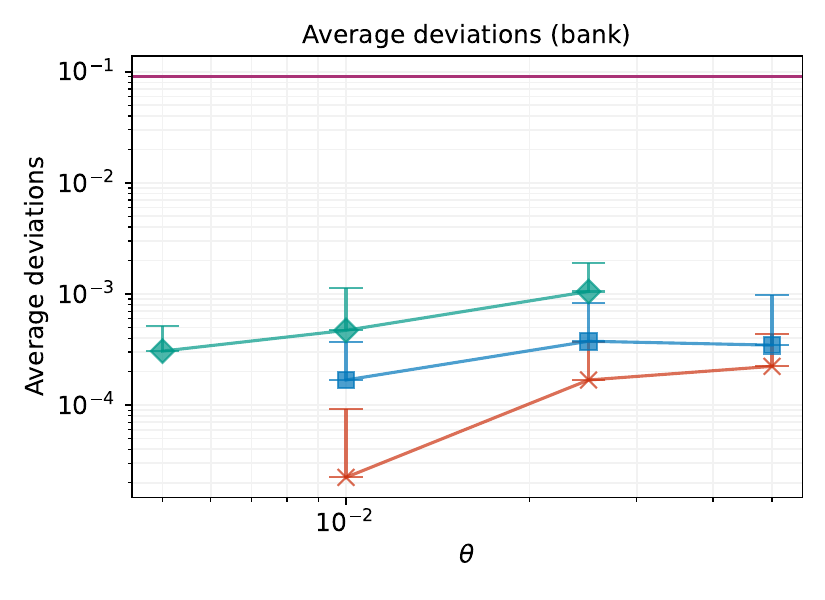}
  \caption{}
\end{subfigure}
\caption{ 
Performance and accuracy comparison between \algname\ and CORELS on adult and bank datasets, for different values of $\varepsilon$ and $\theta$. 
(a)-(b): running times of CORELS and \algname. 
(c)-(d): average deviations $| \ell(\tilde{R} , \dataset) - \ell(R^\star , \dataset) |$ of the loss of the rule list $\tilde{R}$ found by \algname\ with the optimal rule list $R^\star$ found by CORELS (purple horizontal line drawn at $y = \ell(R^\star , \dataset)$). The deviation plots only show upper errors bars at $+$std to improve readability. See Figures \ref{fig:runningtimes} and \ref{fig:avgdevs} in the Appendix for the plots for all datasets and with $\pm$std bars.
}
\label{fig:runningtimeavgdevsmain}
\Description{The figures show a performance and accuracy comparison between SamRuLe and CORELS on the adult and bank datasets, for different values of the parameters. SamRuLe needs a fraction of the time of CORELS, since it considers a small random sample of the large dataset.}
\end{figure*}

\begin{figure*}[ht]
\begin{subfigure}{.22\textwidth}
\begin{adjustbox}{varwidth=\textwidth,fbox,center}
\begin{subfigure}{\textwidth}
\footnotesize
\tikzset{every picture/.style={line width=0.75pt}}
\begin{tikzpicture}[x=0.75pt,y=0.75pt,yscale=-1,xscale=1]
\draw (150,125) node [anchor=north west][inner sep=0.75pt]   [align=left] {{ \textbf{if} capital-gain $\geq 2 \cdot 10^4 \rightarrow 1$}};
\draw (150,140.5) node [anchor=north west][inner sep=0.75pt]   [align=left] {{ \textbf{else if} capital-loss $\geq 1742 \rightarrow 1$}};
\draw (150,155) node [anchor=north west][inner sep=0.75pt]   [align=left] {{ \textbf{else if} education-num $< 13 \rightarrow 0$}};
\draw (150,170) node [anchor=north west][inner sep=0.75pt]   [align=left] {{ \textbf{else if} marital-status=married $\rightarrow 1$}};
\draw (150,185) node [anchor=north west][inner sep=0.75pt]   [align=left] {{ \textbf{else} $\rightarrow 0$}};
\end{tikzpicture}
\end{subfigure}
\end{adjustbox}
\caption{}
\end{subfigure}
\begin{subfigure}{.72\textwidth}
\begin{adjustbox}{varwidth=\textwidth,fbox,center}
\begin{subfigure}{.31\textwidth}
\footnotesize
\tikzset{every picture/.style={line width=0.75pt}}
\begin{tikzpicture}[x=0.75pt,y=0.75pt,yscale=-1,xscale=1]
\draw (150,125) node [anchor=north west][inner sep=0.75pt]   [align=left] {{ \textbf{if} capital-gain $\geq 2 \cdot 10^4 \rightarrow 1$}};
\draw (150,140.5) node [anchor=north west][inner sep=0.75pt]   [align=left] {{ \textbf{else if} capital-loss $\geq 1742 \rightarrow 1$}};
\draw (150,155) node [anchor=north west][inner sep=0.75pt]   [align=left] {{ \textbf{else if} education-num $< 13 \rightarrow 0$}};
\draw (150,170) node [anchor=north west][inner sep=0.75pt]   [align=left] {{ \textbf{else if} marital-status=married $\rightarrow 1$}};
\draw (150,185) node [anchor=north west][inner sep=0.75pt]   [align=left] {{ \textbf{else} $\rightarrow 0$}};
\end{tikzpicture}
\end{subfigure}
\begin{subfigure}{.31\textwidth}
\footnotesize
\tikzset{every picture/.style={line width=0.75pt}}
\begin{tikzpicture}[x=0.75pt,y=0.75pt,yscale=-1,xscale=1]
\draw (150,125) node [anchor=north west][inner sep=0.75pt]   [align=left] {{ \textbf{if} capital-gain $\geq 2 \cdot 10^4 \rightarrow 1$}};
\draw (150,140.5) node [anchor=north west][inner sep=0.75pt]   [align=left] {{ \textbf{else if} age $< 26 \rightarrow 0$}};
\draw (150,155) node [anchor=north west][inner sep=0.75pt]   [align=left] {{ \textbf{else if} education-num $< 13 \rightarrow 0$}};
\draw (150,170) node [anchor=north west][inner sep=0.75pt]   [align=left] {{ \textbf{else if} marital-status=married $\rightarrow 1$}};
\draw (150,185) node [anchor=north west][inner sep=0.75pt]   [align=left] {{ \textbf{else} $\rightarrow 0$}};
\end{tikzpicture}
\end{subfigure}
\begin{subfigure}{.31\textwidth}
\footnotesize
\tikzset{every picture/.style={line width=0.75pt}}
\begin{tikzpicture}[x=0.75pt,y=0.75pt,yscale=-1,xscale=1]
\draw (150,125) node [anchor=north west][inner sep=0.75pt]   [align=left] {{ \textbf{if} capital-gain $\geq 2 \cdot 10^4 \rightarrow 1$}};
\draw (150,140.5) node [anchor=north west][inner sep=0.75pt]   [align=left] {{ \textbf{else if} hours-per-week $< 35 \rightarrow 0$}};
\draw (150,155) node [anchor=north west][inner sep=0.75pt]   [align=left] {{ \textbf{else if} education-num $< 11 \rightarrow 0$}};
\draw (150,170) node [anchor=north west][inner sep=0.75pt]   [align=left] {{ \textbf{else if} marital-status=married $\rightarrow 1$}};
\draw (150,185) node [anchor=north west][inner sep=0.75pt]   [align=left] {{ \textbf{else} $\rightarrow 0$}};
\end{tikzpicture}
\end{subfigure}
\end{adjustbox}
\caption{}
\end{subfigure}
\caption{(a): optimal rule $R^\star$ computed by CORELS on the adult dataset ($\ell(R^\star,\dataset)=0.176$) to predict high income (the label $1$ denotes $\qtm{\geq 50K}$).
(b): set of rule lists computed by \algname\ over $10$ runs. 
\algname\ identified the optimal rule  
and slight variations $\tilde{R}_1$ and $\tilde{R}_2$
that differ in the second rule of the list: 
they predict a lower outcome using the age ($\tilde{R}_1$) and the per-week work hours features ($\tilde{R}_2$) with respective loss $\ell(\tilde{R}_1,\dataset)=0.1763$ and $\ell(\tilde{R}_2,\dataset)=0.1775$. }
\label{fig:adultrules}
\Description{The figures show the optimal rule list found by CORELS on the adult dataset, and the rule lists reported by SamRuLe from the random samples. The reported models by SamRuLe have an accuracy very close to the optimal, and are composed by very similar logical conditions.}
\end{figure*}

\subsection{Comparison to exact method}
\label{sec:expexact}
In this section we describe the experimental comparison between \algname\ and CORELS, the state-of-the-art method to identify the optimal rule list $R^{\star}$ with minimum loss. 
Our main goal is to evaluate the scalability of \algname\ in terms of number of samples and running time required to obtain an accurate approximation of the best rule. 
Furthermore, we evaluate, both quantitatively and qualitatively, the accuracy of the rule list found by \algname\ w.r.t. the optimal solution returned by CORELS. 
We ran both methods on all datasets, setting $z=1$ and $k$ as in \Cref{tab:datasets}.
For \algname, we vary the parameters $\theta$ and $\varepsilon$ as described at the beginning of \Cref{sec:experiments}.

\Cref{fig:runningtimeavgdevsmain} shows the results for these experiments. 
In \Cref{fig:runningtimeavgdevsmain}.(a) and (b) we compare the running time of \algname\ with the time needed by CORELS on the adult and bank datasets.
The results for other datasets are very similar, and shown in \Cref{fig:runningtimes} (in the Appendix).
From these results, we can immediately conclude that \algname\ requires a \emph{small fraction} of the time needed by CORELS, with an improvement of up to $2$ orders of magnitude. 
The reason for this significant speedup is that \algname\ searches for the rule with minimum loss on a small sample $\sample$, which is all cases orders of magnitude smaller than the size of original dataset $\dataset$. 
We show the number of samples $\hat{m}(\ruleset_{k}^{z} , \dataset)$ used by \algname\ for all datasets and for all parameters in \Cref{fig:samplesizes} (in the Appendix).

Then, we evaluated the quality of the solutions returned by \algname\ with the optimal rule list computed by CORELS. 
Figures~\ref{fig:runningtimeavgdevsmain}.(c) and (d) show the average deviations $| \ell(\tilde{R} , \dataset) - \ell(R^\star , \dataset) |$ of the loss of the rule list $\tilde{R}$ found by \algname\ w.r.t. the optimal rule list $R^\star$ found by CORELS on the dataset $\dataset$. 
To verify the validity of \algname's theoretical guarantees, the plots also show the loss of the optimal solution $\ell(R^\star , \dataset)$ found by CORELS (purple horizontal line), and upper error bars ($+$std) for the average deviations (see Figures \ref{fig:runningtimes} and \ref{fig:avgdevs} in the Appendix for the plots for all datasets with both bars). 
From these results, we observe that the rule lists reported by \algname\ are extremely accurate in terms of prediction accuracy, since the deviations $| \ell(\tilde{R} , \dataset) - \ell(R^\star , \dataset) |$ are \emph{orders of magnitude} smaller than $\ell(R^\star , \dataset)$, and smaller than guaranteed by our theoretical analysis. 
This confirms that \algname\ outputs extremely accurate rule lists, even when trained on random samples that are orders of magnitude smaller than the entire dataset. 
Furthermore, it is likely that the guarantees of the $(\varepsilon, \theta)$-approximations hold for samples smaller than what guaranteed by our analysis; 
this leaves significant opportunities for further improvements of our algorithm. 
Then, we quantify the deviations between the loss $\ell(\tilde{R} , \sample)$ of $\tilde{R}$ estimated on the sample w.r.t. to the loss $\ell(\tilde{R} , \dataset)$ on the dataset, i.e., the approximation error incurred by \algname\ due to analyzing $\sample$ instead of the entire dataset $\dataset$.
\ifextversion
In \Cref{fig:avgdevssampleloss} 
\else
In Figure 8 
\fi
we show the average loss approximation error $| \ell(\tilde{R} , \dataset) - \ell(\tilde{R} , \sample) |$, which is also extremely small (i.e., $1$ to $2$ orders of magnitude smaller than $\ell(\tilde{R} , \dataset)$), confirming that the conclusions that can be drawn from the sample using \algname, e.g., from the estimated loss $\ell(\tilde{R} , \sample)$, are very close to the corresponding exact ones. 
Finally, we compared the logical conditions in the rule lists returned by \algname\ with the ones in the best rule list computed by CORELS. 
Our goal is to verify that the insights gained from the approximated prediction models from \algname\ were similar to the optimal ones, i.e., that \algname\ allows a qualitative interpretation of the reported rule list that was stable over the different experimental runs and similar to what obtainable from the exact analysis. 
\Cref{fig:adultrules} reports the optimal rule list computed by CORELS on the dataset adult (a), and the set of rule lists computed by \algname\ (b) over all runs.
Interestingly, we observe that the approximations from \algname\ either match the optimal solution, or are very similar to it. 
In fact, all rules reported by \algname\ either share the same conditions found in the optimal rule list, or replace one of the feature with alternative reasonable insights (e.g., predicting a lower income for young individuals and limited weekly working hours). 
In general, we found the solutions reported by \algname\ to be extremely stable and similar to the respective optimal solutions also for all other dataset. 

From these observations we conclude that \algname\ computes extremely accurate rule lists using a fraction of the resources needed by exact approaches, therefore scaling effectively to large datasets.

\begin{figure*}[ht]
\begin{subfigure}{.76\textwidth}
  \centering
  \includegraphics[width=.35\textwidth]{./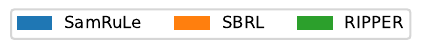}
\end{subfigure}
\begin{subfigure}{.23\textwidth}
\textcolor{white}{\rule{2cm}{0.01cm}}
\end{subfigure} \\
\begin{subfigure}{.365\textwidth}
  \centering
  \includegraphics[width=\textwidth]{./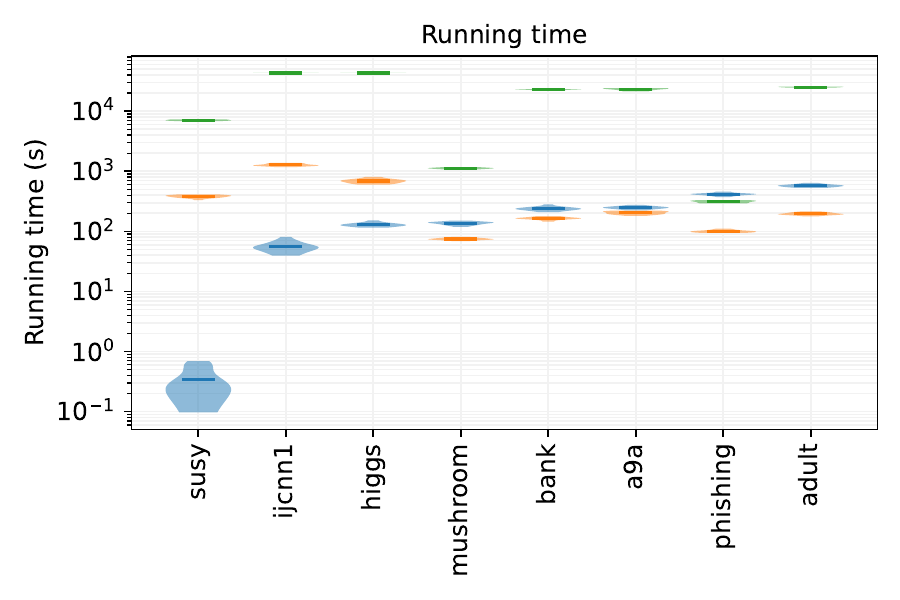}
  \caption{}
\end{subfigure}
\begin{subfigure}{.365\textwidth}
  \centering
  \includegraphics[width=\textwidth]{./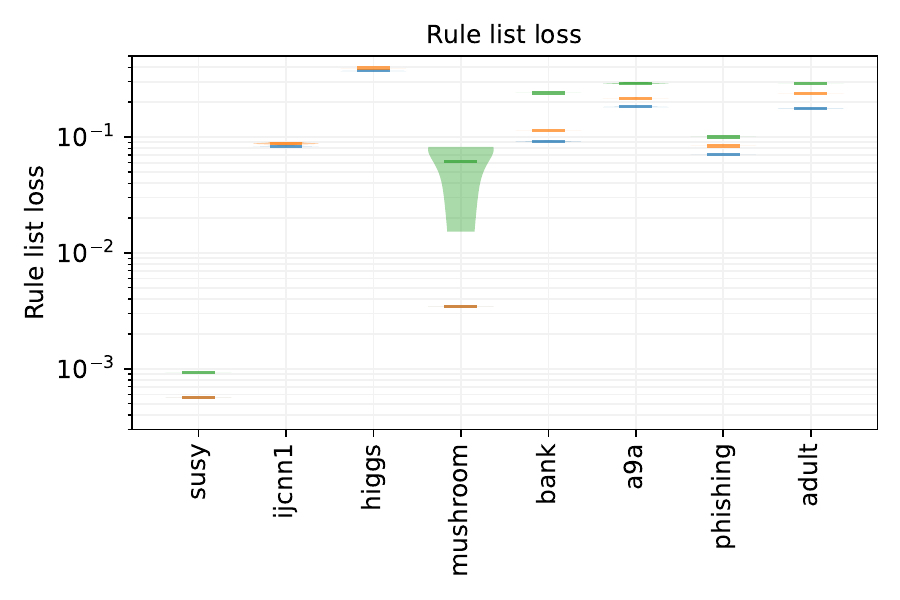}
  \caption{}
\end{subfigure}
\begin{subfigure}{.23\textwidth}
\begin{adjustbox}{varwidth=.92\textwidth,fbox,center}
\begin{subfigure}{\textwidth}
\footnotesize
\tikzset{every picture/.style={line width=0.75pt}}
\begin{tikzpicture}[x=0.75pt,y=0.75pt,yscale=-1,xscale=1]
\draw (150,125) node [anchor=north west][inner sep=0.75pt]   [align=left] {{ \textbf{if} age $\geq 26 \rightarrow 0$}};
\draw (150,140.5) node [anchor=north west][inner sep=0.75pt]   [align=left] {{ \textbf{else if} education-num $< 9 \rightarrow 0$}};
\draw (150,155) node [anchor=north west][inner sep=0.75pt]   [align=left] {{ \textbf{else if} marital-status=married $\rightarrow 1$}};
\draw (150,170) node [anchor=north west][inner sep=0.75pt]   [align=left] {{ \textbf{else if} education-num $< 13 \rightarrow 0$}};
\draw (150,185) node [anchor=north west][inner sep=0.75pt]   [align=left] {{ \textbf{else} $\rightarrow 0$}};
\end{tikzpicture}
\end{subfigure}
\end{adjustbox}
\caption{}
\end{subfigure}
\caption{ 
Comparison in terms of running time (a) and accuracy (b) between \algname, SBRL, and RIPPER. 
(c): rule $R$ computed by SBRL on adult with loss $\ell(R,\dataset)=0.238$ over all $10$ runs. 
}
\label{fig:compheuristics}
\Description{Comparison between the running time of SamRuLe and the heuristic methods SBRL and RIPPER. The time needed by SamRuLe is comparable to the heuristics, but the rule list loss it reports is always better. Furthermore, the heuristic may miss some key logical conditions that are, instead, always reported by SamRuLe.}
\end{figure*}

\subsection{Comparison to heuristic methods}
\label{sec:expheuristics}
In this set of experiments, we compare \algname\ with two state-of-the-art heuristic methods SBRL and RIPPER. 
For \algname\ we fix $\theta = 0.025$ and $\varepsilon=0.5$.
We ran SBRL for $10^4$ iterations using default parameters (as suggested by \cite{yang2017scalable,angelino2018learning}). 
Also for these experiments, for each dataset we fix $z=1$ and $k$ to the values in \Cref{tab:datasets}.

We show the results of these experiments in \Cref{fig:compheuristics}.

\Cref{fig:compheuristics}.(a) shows the running times for the three methods.
We observe that in all but one case, RIPPER is the slowest method. 
On two datasets (ijcnn1 and higgs) we stopped it as it could not complete after more than $12$ hours, while requiring a very large memory footprint (e.g., more than $400$ GB of memory for higgs). 
Regarding SBRL and \algname, both methods are fast, e.g., requiring always less than $18$ minutes.
More precisely, \algname\ is faster on $3$ datasets, up to $3$ orders of magnitude for the susy dataset. 
In other $2$ datasets, the running times of the two methods are comparable, while for phishing and adult SBRL is faster by a factor at most $4$.
This experiment confirms that \algname\ is very practical, requiring a lower or comparable amount of resources of state-of-the-art scalable heuristics.

\Cref{fig:compheuristics}.(b) compares all methods in terms of accuracy.
The plots show the values of the loss $\ell(\tilde{R},\dataset)$ for the rule list $\tilde{R}$ reported by all runs of the methods.
We may observe that RIPPER is the worst approach, as it always reports rule lists with the highest loss (results for ijcnn1 and higgs are not shown as RIPPER could not complete in reasonable time, as discussed before),
while \algname\ always provides a rule list with the smallest loss.
Regarding SBRL, we observe that it reports a rule list with the same, or almost the same, loss of \algname\ on $4$ datasets, while it provides suboptimal solutions for other cases, with losses up to $30\%$ higher than \algname. 
This suggests that, while SBRL scales to large datasets, it often provides solutions that are sensibly less accurate than the optimal one. 
Instead, as discussed previously, \algname\ outputs a rule list with guaranteed gap with the optimal solution, and always very close to it in practice. 
We remark that providing a suboptimal solution may also impact the interpretation for the predictions, e.g., missing relevant factors for the model. 
In fact, we report the rule list computed by SBRL on adult (see \Cref{fig:compheuristics}.(c)) for all the $10$ runs: 
such rule list does not involve the capital-gain feature, a key condition for the optimal solution  to predict high income (\Cref{fig:adultrules}.(a)), thus obtaining a higher loss ($0.238$).
In contrast, this feature was always found in the rule lists reported by \algname\ (\Cref{fig:adultrules}.(b)), which have loss always very close to the optimal ($0.176$).

Overall, compared to RIPPER and SBRL, \algname\ outputs an equally accurate solution in less time, or a sensibly better rule list using comparable resources. 
We conclude that \algname\ provides an excelled combination of scalability and high accuracy for rule list learning from large datasets, achieving a better trade-off than state-of-the-art heuristic methods with no theoretical guarantees. 

\subsection{Impact of rule list parameters}
\label{sec:expparams}
In this final set of experiments, we evaluate the impact to the performance of \algname\ of the rule list search space parameters $z$ and $k$. 
We focus on the datasets mushroom and phishing, as the results for other datasets were similar. 
We test all values of $1 \leq z \leq 3$ and $1 \leq k \leq 5$, fixing $\theta = 0.025$ and $\varepsilon = 0.5$, measuring the number of samples required by \algname, its running time, and the accuracy of the rule lists provided in output.

We show these results in 
\ifextversion
\Cref{fig:resultsparameters}. 
\else
Figure 9. 
\fi
When increasing $k$ and $z$, the number of samples $\hat{m}(\ruleset_{k}^{z} , \dataset)$ considered by \algname\ grows 
following the expect trend $\hat{m}(\ruleset_{k}^{z} , \dataset) \in \BOi{(\omega + \log(1/\delta)) /(\varepsilon^{2} \theta)}$ proved in our analysis (\Cref{thm:vcupperboundbigo}),  
resulting in sample sizes that are always a small fraction of the size of the dataset.  
Consequently, the running time of \algname\ increases roughly linearly with the sample size, remaining practical for all settings. 
Regarding the accuracy of the rule lists, we observed the parameter $k$ to have the largest impact on the loss, that remains fairly stable w.r.t. $z$. 

These results demonstrate that \algname\ is applicable to complex analysis involving larger values of $k$ and $z$ while scaling to large datasets, that are in most cases out of reach of exact approaches. 

\section{Conclusions}
We introduced \algname, a novel and scalable algorithm to find nearly optimal rule lists. \algname\ uses sampling to scale to large datasets, and provides rigorous guarantees on the quality of the rule lists it reports. Our approach builds on the VC-dimension of rule lists, for which we proved novel upper and lower bounds. Our experimental evaluation shows that 
\algname\ enables learning highly accurate rule lists on large datasets, is up to two orders of magnitude faster than state-of-the-art exact approaches, and is as fast as, and sometimes faster than, recent heuristic approaches, while reporting higher quality rule lists.

Our work opens several interesting directions for future research, including the use of sampling to scale approaches for learning other rule-based models while providing rigorous guarantees on the quality of the learned model.
Moreover, the efficient computation of advanced \emph{data-dependent} complexity measures, such as Rademacher averages \cite{BartlettM02,pellegrina2022mcrapper,centrakdd}, may be useful to obtain even sharper approximation guarantees for our problem.

\begin{acks}
This work was supported by the \qt{National Center for HPC, Big Data, and Quantum Computing}, project CN00000013, and by the PRIN Project n. 2022TS4Y3N - EXPAND: scalable algorithms for EXPloratory Analyses of heterogeneous and dynamic Networked Data, funded by the Italian Ministry of University and Research (MUR), and by the project BRAINTEASER (Bringing Artificial Intelligence home for a better care of amyotrophic lateral sclerosis and multiple sclerosis), funded by European Union's Horizon 2020 (grant agreement No. GA101017598).
\end{acks}

\bibliographystyle{ACM-Reference-Format}
\balance
\bibliography{bibliography}

\ifextversion
\clearpage
\newpage
\fi
\appendix


\section{Appendix}

In this Appendix we provide proofs and additional experimental results that could not fit in the main paper due to space constraints. 
\ifextversion
\else
Some figures, and some of the proofs for  the results of \Cref{sec:VCrulelists} are deferred to the online extended version, that is available at \url{https://arxiv.org/abs/XXXX.XXXXX}.
\fi

\subsection{Proofs of \Cref{sec:VCrulelists}}

\begin{proof}[Proof of \Cref{thm:vcupperboundgen}]
Given a dataset $\dataset$ with $d$ features, we create a new dataset $\dataset^z$ 
built as follows.
Let $C = \{ \qtm{x_1 = 1} , \qtm{x_2 = 1} , \dots , \qtm{x_d = 1}  \}$ be the set of all $d$ possible conditions on the $d$ binary features of $\dataset_s$;
for any non-empty subset $A \subseteq C$ with $|A| \leq z$, we add to $\dataset^z$ the binary feature $x^\prime_A$ that is equal to $1$ for all training instances of $\dataset$ such that $\bigwedge_{c \in A} c$ is true.
Equivalently, the feature values of $x^\prime_A$ are obtained from the logical AND of the evaluations of the conditions in $A$. 
It follows that the total number of features of $\dataset^z$ is $d^\prime = \sum_{i=1}^z \binom{d}{i}$.

We now observe that the set $\mathcal{P}(\tilde{\ruleset}_k^z, \dataset)$  of projections of rule lists in $\ruleset_{k}^{z}$ on $\dataset$ is contained in the set $\mathcal{P}(\tilde{\ruleset}_k^1,\dataset^{z})$  of projections of rules in $\ruleset_{k}^{1}$ on $\dataset^z$, 
since we can replace each rule with a conjunctions with $t$ terms, with $1 \leq t \leq z$, in any rule list $\in \ruleset_{k}^{z}$ on $\dataset$ 
by a rule with a single condition on one of the features of $\dataset^z$, obtaining an equivalent rule list from $\ruleset_{k}^{1}$. 
Therefore, from \Cref{thm:vcupperbound} applied to $\ruleset_{k}^{1}$ over a dataset with $d^\prime$ features, and the fact $d^\prime = \sum_{i=1}^z \binom{d}{i} \leq (\frac{ed}{z})^z$, we obtain
\begin{align*}
VC(\ruleset_{k}^{z}) 
\leq \left\lfloor k \log_{2} \pars{ 2 d^\prime } + 2 \right\rfloor 
= \left\lfloor k \log_{2} \pars{ 2 \sum_{i=1}^z \binom{d}{i} } + 2 \right\rfloor \\
\leq \left\lfloor k \log_{2} \pars{ 2 \pars{ \frac{ed}{z} }^z } + 2 \right\rfloor 
\leq \left\lfloor k z \log_{2} \pars{ \frac{2 e d}{z} } + 2 \right\rfloor ,
\end{align*}
and the statement follows. 
\end{proof}
\ifextversion
\begin{proof}[Proof of \Cref{thm:vclowerbound}]
To show the lower bound, we build a dataset that is shattered by the rangeset defined by $\ruleset_{k}^{1}$.
Let the integers $a,b$ such that $a \geq 1$ and $b=2^a-1$, and define the binary matrix $C_a \in \{0,1\}^{a \times b}$ such that each column is a distinct binary vector with at least one element equal to $1$, i.e., that the set of columns has cardinality $2^a-1$. 
Equivalently, the columns of $C_a$ represent all the distinct non-empty subsets of a set of $a$ items. 
For instance, for $a = 3$ we have $b=7$ and $C_3$ as 
\begin{align*}
C_3 = \begin{bmatrix} 
1 & 0 & 0 & 1 & 1 & 0 & 1\\
0 & 1 & 0 & 1 & 0 & 1 & 1\\
0 & 0 & 1 & 0 & 1 & 1 & 1
\end{bmatrix} . 
\end{align*}
For $a , k \geq 1$, define the matrices $A_i = C_a$ for all $i \in [1,k]$, and let the dataset $\dataset(a,k)$ with $ak$ instances and $d=bk$ features be defined as the following $ak \times bk$ binary matrix:
\begin{align*}
\dataset(a,k) = \begin{bmatrix} 
A_1 & \textbf{0} & \textbf{0} & \dots  & \textbf{0} \\
\textbf{0} & A_2 & \textbf{0} & \dots & \textbf{0} \\
\textbf{0} & \textbf{0} & A_3 & \dots & \dots \\
\dots & \dots & \dots & \dots & \textbf{0} \\
\textbf{0} & \textbf{0} & \dots & \textbf{0} & A_k 
\end{bmatrix} . 
\end{align*}
We observe that the range set $\tilde{\ruleset}_{k}^{1}$ shatters $\dataset(a,k)$.
In fact, let any binary vector $y \in \{0,1\}^{ak}$, and, for $i \in [1,k]$, define $y^i \in \{0,1\}^{ak}$ as a copy of $y$ where all the elements with indices $\leq (i-1)a$ or $\geq ia+1$ are set to $0$. 
Therefore, for all the indices $j \in [ (i-1)a+1 , ia ]$ it holds $y^i_j = y_j$.
Let $s_j$ be the $j$-th row of $\dataset(a,k)$, for all $j \in [1,ak]$. 
Define the function $h : \{0,1\}^{ak} \rightarrow [1,bk]$ such that $h(y^i)$ is the index of the column 
of $\dataset(a,k)$ that is equal to $y^i$. 
Note that such column corresponds to one of the columns of the matrix $A_i$. 

We prove that there is at least one rule list $R \in \ruleset_{k}^{1}$ whose predictions $[P(R,s_1) , P(R,s_2) , \dots , P(R,s_{ak})]$ form a binary vector equal to $y$. 
We define $R$ as follows: 
let $V \subseteq [1,k]$ be the subset of indices such that 
$i \in V$ if and only if $y^i$ contains at least one entry equal to $1$.
We define $R$ as a rule list of length $u=|V|$, composed by the list $[ r_i : i \in V]$ and the default rule 
$\defrule = ( c_{\defsymbol} , 0 )$.
Each rule $r_i$, corresponding to the index $i \in V$, is $r_i = (c_i , 1)$ where the condition $c_i$ is defined as follows: we take the condition \qt{$h(y^i) = 1$}, i.e., using the column of the dataset $\dataset(a,k)$ equal to $y^i$. 
We observe that, for any binary vector $y\in \{0,1\}^{ak}$, 
it holds
\begin{align*}
[P(R,s_1) , P(R,s_2) , \dots , P(R,s_{ak})] = \bigvee_{i \in [1,k]} h(y^i) = \bigvee_{i \in [1,k]} y^i = y,
\end{align*}
where $\bigvee$ denotes the element-wise logical \texttt{OR} between binary vectors. 
We conclude that 
the rule list $R$ defined above perfectly classifies all elements of the dataset $\dataset(a,k)$ when labelled by $y$. 

Therefore, there exist a dataset with $ak$ instances and $d=bk$ features that is shattered.
First, note that 
\begin{align*}
d = bk = (2^a-1) k \implies a = \log_2 \pars{ \frac{d+k}{k} },
\end{align*}
consequently, it holds
\begin{align*}
VC(\ruleset_{k}^{1}) \geq ak = k \log_2 \pars{ \frac{d+k}{k} }. 
\end{align*}
\end{proof}

We now prove \Cref{thm:vclowerboundgen}. 
We make use of the following result, that provides a lower bound to the VC-dimension of the class of $k$-term monotone $z$-DNF boolean functions.

\begin{lemma}[Lemma 6 of \cite{littlestone1988learning}]
\label{thm:lowerboundvcdnf}
For $1 \leq z \leq d$ and $1 \leq k \leq \binom{d}{z}$, let $D_k^z$ be the class of functions expressible as $k$-term monotone $z$-DNF formulas over $d$ binary features and let $a$ be any integer, $z \leq a \leq d$ such that $\binom{a}{z} \geq k$. Then 
$VC(D_k^z) \geq \left\lfloor k z \log_{2} \pars{\frac{d}{a }} \right\rfloor$.
\end{lemma}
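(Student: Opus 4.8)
The plan is to establish the lower bound constructively: I would exhibit a concrete set of $N=\left\lfloor kz\log_{2}\pars{\frac{d}{a}}\right\rfloor$ points in $\{0,1\}^d$ that is shattered by $D_k^z$, from which $VC(D_k^z)\ge N$ follows by definition. First I would lay out the $d$ variables as $a$ \emph{columns}, each holding about $\lfloor d/a\rfloor$ variables, writing a variable as $x_{j,w}$ with column index $j\in[a]$ and label $w$ ranging over a set $W$ of size $\lfloor d/a\rfloor$; it is convenient to view the labels in $W$ as binary strings of length $m=\lfloor\log_2(d/a)\rfloor$. Here the hypothesis $\binom{a}{z}\ge k$ enters: I would fix $k$ \emph{distinct} $z$-subsets $J_1,\dots,J_k\subseteq[a]$ and reserve $J_i$ as the ``identity'' of the $i$-th term, so that every term is a conjunction using exactly one variable per column of its block, $T_i=\bigwedge_{j\in J_i}x_{j,w_{i,j}}$, with strings $w_{i,j}\in W$ chosen according to the target dichotomy.

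The shattered set would consist of points $p_{i,t,r}$ indexed by a term $i\in[k]$, a position $t$ ranging over the $z$ columns of $J_i$, and a bit $r\in[m]$, for a total of $kzm$ points. I would define the on-set of $p_{i,t,r}$ so that it is (a) \emph{on} for every variable in each identity column $j\in J_i\setminus\{t\}$, making those literals of $T_i$ automatically satisfied; (b) on, within its own column $t$, exactly at the strings $w\in W$ whose $r$-th bit equals $1$; and (c) \emph{off} for every variable lying in a column outside $J_i$. Under this definition, whether $T_i$ covers $p_{i,t,r}$ collapses to whether the chosen string $w_{i,t}$ has its $r$-th bit set, so letting $w_{i,t}$ range over $W\cong\{0,1\}^m$ realises all $2^m$ label patterns on $p_{i,t,1},\dots,p_{i,t,m}$, independently across the $kz$ pairs $(i,t)$. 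The decisive point is that no \emph{other} term interferes: since the $J_i$ are distinct sets of equal size $z$, for any $i'\neq i$ there is a column $j^\star\in J_{i'}\setminus J_i$, and by (c) the point $p_{i,t,r}$ is off throughout column $j^\star$, so the literal of $T_{i'}$ in $j^\star$ is unsatisfied and $T_{i'}$ cannot cover $p_{i,t,r}$. Hence each point's label is governed solely by its own term, and every dichotomy is realised by an appropriate choice of the strings $w_{i,j}$.

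The step I expect to be the main obstacle is precisely this non-interference argument: it is what forces the budget $\binom{a}{z}\ge k$ and hence the dependence on $a$, since the $k$ terms must be assigned pairwise incomparable identity blocks drawn from only $a$ columns, leaving columns of size $\approx d/a$ for the per-position selector gadgets. A secondary, purely arithmetic issue is the placement of the floor: the per-column gadget shatters $\lfloor\log_2(d/a)\rfloor$ points, so the direct construction gives only $kz\lfloor\log_2(d/a)\rfloor$; to reach the sharper $\lfloor kz\log_2(d/a)\rfloor$ one treats the $kz$ string-choices jointly as a single base-$(d/a)$ counter with $(d/a)^{kz}$ configurations, encoding $\lfloor kz\log_2(d/a)\rfloor$ independent bits, and reads each point's label off this combined counter. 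Since the statement is quoted verbatim as Lemma~6 of~\cite{littlestone1988learning}, I would defer to that reference for the detailed bookkeeping, the construction above being the route I would take to reprove it.
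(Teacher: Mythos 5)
You should first note that the paper contains \emph{no proof} of this statement: it is imported verbatim, with citation, as Lemma~6 of Littlestone's paper, and is used as a black box in the proof of \Cref{thm:vclowerboundgen}. Your closing deferral to the reference is therefore consistent with what the paper itself does. As for your sketched reconstruction: the core of it is sound, and correctly proves a slightly weaker bound. The column partition, the choice of $k$ distinct $z$-subsets $J_1,\dots,J_k$ (possible since $\binom{a}{z}\geq k$), the per-point on-sets, and the non-interference step (for $i'\neq i$, equal-size distinct sets give $J_{i'}\setminus J_i\neq\emptyset$, and the point is off there) are all correct, and they yield a shattered set of size $kz\lfloor\log_2\lfloor d/a\rfloor\rfloor = kz\left\lfloor \log_{2} \pars{d/a} \right\rfloor$. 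This weaker floor-inside bound would in fact suffice for the paper's downstream asymptotic conclusion $VC(\ruleset_{k}^{z}) \in \Omega( k z \log( d/z\sqrt[z]{k} ) )$, though not for the lemma as quoted.

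The genuine gap is your final step moving the floor outside via the \qt{joint base-$(d/a)$ counter}. That is not an argument: in your framework the label of a point is $\bigvee_i\bigwedge_{t\in J_i}\ind{w_{i,t}\in U_t}$, which collapses (by non-interference) to a \emph{product of membership indicators} in the selectors of one term; you cannot \qt{read off} an arbitrary bit of the mixed-radix encoding of the joint tuple $(w_{i,t})_{i,t}$, and having $2^{s}$ functions is necessary but not sufficient for shattering $s$ points. Concretely, take $k=1$, $z=2$, $d=6$, $a=2$, so columns of size $N=3$: the lemma claims $\lfloor 2\log_2 3\rfloor = 3$, but your one-variable-per-column family consists of the $9$ functions $(u,v)\mapsto\ind{u\in U_1}\ind{v\in U_2}$, and a short case analysis shows it cannot shatter any three points (realizing the all-ones labeling forces a full-trace element in each column, after which one of the labelings in $\{\emptyset\}\cup\{\text{singletons}\}$ is always unrealizable). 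The class $D_1^2$ \emph{does} shatter three points over six variables, but only by using terms with both literals from the same column or a term whose trace is everything --- i.e., by abandoning your one-per-column format, which is exactly the flexibility the actual proof in Littlestone's paper must exploit. There is also an arithmetic obstruction to the counter idea: with equal columns of size $\lfloor d/a\rfloor$ you have only $\lfloor d/a\rfloor^{kz}$ configurations, which can be strictly smaller than $2^{\lfloor kz\log_2(d/a)\rfloor}$ (e.g., $d=15$, $a=2$, $kz=10$ gives $7^{10}<2^{29}$), so the claimed encoding does not exist even information-theoretically. In short: your construction reproves the lemma up to the placement of the floor, but the sharpened statement as quoted is not established by your sketch and genuinely requires the reference's (or a substantially different) argument.
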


\begin{proof}[Proof of \Cref{thm:vclowerboundgen}]
For $z,k \geq 1$, denote with $D_k^z$ the class of $k$-term monotone $z$-DNF over the $d$ features of the dataset.
We note that, for any formula $F \in D_k^z$, we can build an equivalent rule list $R$ that, for any $s \in \{0,1\}^{d}$, provides the same predictions. 
To do so, let $F = \bigvee_{i=1}^{h} c_{i}$ where $h \leq k$ and each $c_{i}$ is a conjunction with at most $z$ monotone terms.
It is easy to observe that the rule list $R = [ (c_{1} , 1) , (c_{2} , 1) , \dots , (c_{h} , 1) , (c_{\defsymbol} , 0)  ]$ is equivalent to $F$. 
This mapping implies that $D_k^z \subseteq \ruleset_{k}^{z}$, therefore $VC(D_k^z) \leq VC(\ruleset_{k}^{z})$. 
From \Cref{thm:lowerboundvcdnf}, we observe that the VC-dimension of $D_k^z$ is
\begin{align*}
VC(D_k^z) \geq \left\lfloor k z \log_{2} \pars{\frac{d}{a }} \right\rfloor
\end{align*}
for some $z \leq a \leq d$ with $\binom{a}{z} \geq k$. It holds
\begin{align*}
\binom{a}{z} \geq \pars{ \frac{a}{z} }^z \geq k \implies a \geq \sqrt[z]{k} z ,
\end{align*}
obtaining that 
\begin{align*}
VC(D_{k}^{z}) &\geq \left\lfloor k z \log_{2} \pars{\frac{d}{z \sqrt[z]{k} }} \right\rfloor.
\end{align*}
The statement follows from $VC(D_k^z) \leq VC(\ruleset_{k}^{z})$.
\end{proof}

\fi

\subsection{Proofs of \Cref{sec:guarantees}}
To prove our results we use the following Chernoff bounds (see Theorem 4.4 and 4.5 of \cite{mitzenmacher2017probability}). 
\begin{theorem}
\label{thm:chernoffbounds}
Let $X_1 , \dots X_m$ be independent Poisson trials such that $\Pr(X_i = 1) = p_i$.
Let $X = \sum_{i=1}^n X_i$ and $\mu = \E[X]$. 
Then the following Chernoff bounds hold for any $0 < \gamma < 1$:
\begin{align*}
\Pr\pars{ X \geq (1+\gamma)\mu } \leq \exp ( -\mu\gamma^2/3 ) , \\
\Pr\pars{ X \leq (1-\gamma)\mu } \leq \exp ( -\mu\gamma^2/2 ) . 
\end{align*}
\end{theorem}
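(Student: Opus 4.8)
The plan is to prove both tail bounds via the standard exponential moment (Chernoff) method, applying Markov's inequality to $e^{tX}$ for a parameter $t$ to be optimized. First I would fix $t > 0$ and write, for the upper tail, $\Pr(X \geq (1+\gamma)\mu) = \Pr(e^{tX} \geq e^{t(1+\gamma)\mu}) \leq \E[e^{tX}] / e^{t(1+\gamma)\mu}$ by Markov's inequality. Since the $X_i$ are independent, $\E[e^{tX}] = \prod_i \E[e^{tX_i}]$, and for each Bernoulli trial $\E[e^{tX_i}] = 1 + p_i(e^t - 1) \leq \exp\pars{ p_i(e^t - 1) }$ using $1+x \leq e^x$. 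Multiplying over $i$ and using $\mu = \sum_i p_i$ yields the key moment bound $\E[e^{tX}] \leq \exp\pars{ \mu(e^t - 1) }$, which drives both directions.

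For the upper tail I would substitute this bound and choose $t = \ln(1+\gamma)$ (the minimizer of the resulting exponent), obtaining the raw Chernoff inequality $\Pr(X \geq (1+\gamma)\mu) \leq \pars{ e^\gamma / (1+\gamma)^{1+\gamma} }^\mu$. The remaining task is purely analytic: to show this is at most $\exp(-\mu\gamma^2/3)$ it suffices to prove $f(\gamma) := \gamma - (1+\gamma)\ln(1+\gamma) + \gamma^2/3 \leq 0$ on $(0,1)$. I would do this by calculus: $f(0)=0$, $f'(\gamma) = -\ln(1+\gamma) + 2\gamma/3$ with $f'(0)=0$, and $f''(\gamma) = (2\gamma-1)/(3(1+\gamma))$ changes sign exactly once on $(0,1)$; checking $f'(1) = 2/3 - \ln 2 < 0$ then forces $f'(\gamma) \leq 0$ throughout $(0,1]$, so $f$ is nonincreasing and $f \leq f(0) = 0$.

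For the lower tail I would argue symmetrically, applying Markov to $e^{-tX}$ with $t>0$: $\Pr(X \leq (1-\gamma)\mu) \leq \E[e^{-tX}] / e^{-t(1-\gamma)\mu} \leq \exp\pars{ \mu(e^{-t}-1) + t(1-\gamma)\mu }$. Optimizing at $t = -\ln(1-\gamma)$ gives $\Pr(X \leq (1-\gamma)\mu) \leq \pars{ e^{-\gamma} / (1-\gamma)^{1-\gamma} }^\mu$. Here the simplification to $\exp(-\mu\gamma^2/2)$ is cleaner via a series expansion: using $\ln(1-\gamma) = -\sum_{k\geq 1} \gamma^k/k$ and telescoping, one finds $-\gamma - (1-\gamma)\ln(1-\gamma) = -\sum_{k\geq 2} \gamma^k / (k(k-1))$, whose $k=2$ term alone equals $-\gamma^2/2$; since every term is nonpositive, the whole sum is $\leq -\gamma^2/2$, yielding the claim.

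The routine parts (Markov, independence, and the $1+x\leq e^x$ step) are immediate; the main obstacle is the final analytic simplification, especially for the upper tail, where the clean $\gamma^2/3$ bound does not fall out of a one-line series comparison and instead requires tracking the sign of $f''$ and verifying $f'(1)<0$ to conclude $f'\leq 0$ on the entire interval. The lower-tail constant $\gamma^2/2$, by contrast, emerges exactly from the first term of the telescoping series, so that direction needs no delicate estimation.
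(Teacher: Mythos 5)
Your proof is correct and is, in substance, the standard argument: the paper does not prove this statement at all, but imports it verbatim as Theorems~4.4 and~4.5 of Mitzenmacher and Upfal~\cite{mitzenmacher2017probability}, and your exponential-moment derivation --- Markov applied to $e^{\pm tX}$, the bound $\E[e^{tX}] \leq \exp\pars{\mu(e^t-1)}$, the calculus verification that $f(\gamma) = \gamma - (1+\gamma)\ln(1+\gamma) + \gamma^2/3 \leq 0$ on $(0,1)$ via the sign change of $f''$ and $f'(1) = 2/3 - \ln 2 < 0$, and the telescoping series $-\gamma - (1-\gamma)\ln(1-\gamma) = -\sum_{k \geq 2} \gamma^k/(k(k-1)) \leq -\gamma^2/2$ for the lower tail --- reproduces exactly the textbook proof being cited. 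Nothing is missing.
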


\begin{proof}[Proof of \Cref{thm:approxbounds}]
We prove the first set of inequalities.
Let $R$ be an arbitrary rule list from $\ruleset_{k}^{z}$,
and define the functions 
\begin{align}
g(R,\sample) &= \ell(R,\sample) - \alpha|R| , \\
g(R,\dataset) &= \ell(R,\dataset) - \alpha|R| . 
\end{align}
Note that $g(R,\dataset)$ and $g(R,\sample)$ are the non-regularized variants of the loss functions $\ell(R,\dataset)$ and $\ell(R,\sample)$. 
It easy to show that $\E_{\sample}[ g(R,\sample) ] = g(R,\dataset) \leq \ell(R,\dataset)$, and that 
$g(R,\sample)$ is an average of $m$ binary random variables with expectation $g(R,\dataset)$. 
From an application of the Chernoff bound (\Cref{thm:chernoffbounds}) to the random variable $Z = mg(R,\sample)$ with $\E_\sample[Z] = mg(R,\dataset)$, we have that, for any $0 < \gamma < 1$,  
\begin{align}
\Pr \pars{ Z \leq (1-\gamma)\E_\sample[Z] } \leq \exp\pars{ -\E_\sample[Z] \gamma^2/2 }. \label{eq:probdevall}
\end{align}
Fixing $0 < \delta^\prime < 1$, imposing the r.h.s. of \eqref{eq:probdevall} to be $\leq \delta^\prime$, and solving for $\gamma$, gives
\begin{align*}
\Pr \pars{ g(R,\sample) + \sqrt{ \frac{2 g(R,\dataset) \ln(1/\delta^\prime)}{m} } \leq g(R,\dataset) } \leq \delta^\prime . 
\end{align*}
Define the events 
\begin{align*}
E_R = \qtm{ g(R,\sample) + \sqrt{ \frac{2 g(R,\dataset) \ln(1/\delta^\prime)}{m} } \leq g(R,\dataset) }, \forall R \in \ruleset_{k}^{z},
\end{align*}
and the event $E = \qtm{ \exists R : E_R \text{ is true}}$. 
Note that $\Pr(E_R)\leq\delta^{\prime}$. 
We want to prove that $\Pr(E)\leq \delta/2$. 
First, we observe that 
\begin{align*}
\Pr\pars{ E } = \Pr \biggl( \bigcup_{R \in \ruleset_{k}^{z}} E_R \biggr) .
\end{align*}
Denote with $R_{1}$ and $R_{2}$ two rule lists $\in \ruleset_{k}^{z}$ such that the projections $X(R_{i} , \dataset)$ of $R_{i}$ on $\dataset$ are equal: 
$X(R_{1} , \dataset) = X(R_{2} , \dataset)$. 
Consequently, it holds $g(R_{1},\dataset) = g(R_{2},\dataset)$.
Moreover, for all possible samples $\sample$, 
it holds $X(R_{1} , \sample) = X(R_{2} , \sample)$ 
and $g(R_{1},\sample) = g(R_{2},\sample)$. 
The observations above imply that the events $E_{R_1}$ and $E_{R_2}$ are identical. 

We now define the set $\mC( \ruleset_{k}^{z} , \dataset ) \subseteq \ruleset_{k}^{z}$ as a cover of the rangeset $\tilde{\ruleset}_{k}^{z}$ as follows:
fix an arbitrary, deterministic total order over all elements of $\ruleset_{k}^{z}$; 
then, for every distinct projection $H \in \tilde{\ruleset}_{k}^{z}$, 
let $\{ R : X(R,\dataset) = H , R \in \ruleset_{k}^{z} \}$ be the set of rule lists with projection equal to $H$. 
From this set, 
we pick the minimum element in the total order over $\ruleset_{k}^{z}$ 
and we include it in
$\mC( \ruleset_{k}^{z} , \dataset )$. 
Therefore, $\mC( \ruleset_{k}^{z} , \dataset )$ contains a unique rule list for each distinct projection of the rangeset $\tilde{\ruleset}_{k}^{z}$ over the dataset $\dataset$. 
This implies that, for any $R \in \ruleset_{k}^{z}$, there is an unique rule list $R^{\prime} \in \mC( \ruleset_{k}^{z} , \dataset )$ such that 
$X(R , \dataset) = X(R^{\prime} , \dataset)$
and that the events $E_{R}$ and $E_{R^{\prime}}$ are identical. 
It follows  
\begin{align*}
\Pr \biggl( \bigcup_{R \in \ruleset_{k}^{z}} E_R \biggr) 
= \Pr \biggl( \bigcup_{R \in \mC( \ruleset_{k}^{z} , \dataset )} E_R \biggr) ,
\end{align*}
as we can replace the union over all $R \in \ruleset_{k}^{z}$ with the union over the cover $\mC( \ruleset_{k}^{z} , \dataset )$ containing rule lists with unique projections on $\dataset$.
From the definitions of  $\Lambda(\ruleset_{k}^{z} , n)$ and $\mC( \ruleset_{k}^{z} , \dataset )$, it holds
\begin{align*}
|\mC( \ruleset_{k}^{z} , \dataset )| = |\tilde{\ruleset}_{k}^{z}(\dataset)| \leq \Lambda(\ruleset_{k}^{z} , n) .
\end{align*}
From an union bound, we have
\begin{align*}
\Pr(E) = \Pr \biggl( \bigcup_{R \in \mC( \ruleset_{k}^{z} , \dataset )} E_R \biggr) 
\leq \sum_{R \in \mC( \ruleset_{k}^{z} , \dataset )} \Pr\pars{ E_R } \leq \Lambda(\ruleset_{k}^{z} , n) \delta^\prime .
\end{align*}
Choosing $\delta^\prime = \delta/(2\Lambda(\ruleset_{k}^{z} , n))$, we obtain that
\begin{align}
g(R,\dataset) &\leq g(R,\sample) \!+\! \sqrt{\frac{2 g(R,\dataset) \! \pars{ \omega \! + \! \logtdelta } }{ m }}  \label{eq:boundfixedpoint}
\end{align}
holds for all $R \in \ruleset_{k}^{z}$ with probability $\geq 1-\delta/2$, 
since $\ln(\delta^\prime) \leq \omega + \logtdelta$ (following analogous derivations for the proof of \Cref{thm:vcupperbound}). 
The first set of inequalities is obtained from \eqref{eq:boundfixedpoint} after 
adding $\alpha |R|$ on both sides,
from the fact that $g(R,\dataset) \leq \ell(R,\dataset)$, 
after solving the quadratic inequality $y \leq u + \sqrt{vy}$ w.r.t. $y$, 
and after straightforward computations. 

We now prove the last inequality and the statement.
Let $R^\star$ be an arbitrary rule list with $\ell(R^\star, \dataset) = \min_{R \in \ruleset_{k}^{z}} \ell(R , \dataset)$. 
Note that $R^\star$ is fixed and independent of the choice of $\sample$. 
We apply the Chernoff bound to the random variable $Z = mg(R^\star,\sample)$ with $\E_\sample[Z] = m g(R^\star,\dataset)$, obtaining for any $0 < \gamma < 1$   
\begin{align}
\Pr \pars{ Z \geq (1+\gamma)\E_\sample[Z] } \leq \exp\pars{ -\E_\sample[Z] \gamma^2/3 }. \label{eq:probdevbest}
\end{align}
Setting the r.h.s. of \eqref{eq:probdevbest} $\leq \delta/2$, and solving for $\gamma$ using $\alpha \geq 0$, gives
\begin{align*}
\Pr \pars{ g(R^\star,\sample) \geq g(R^\star,\dataset) + \sqrt{ \frac{3 g(R^\star,\dataset) \ln(2/\delta)}{m} } } \leq \delta/2 , 
\end{align*}
and, equivalently,
\begin{align*}
g(R^\star,\sample) \leq g(R^\star,\dataset) + \sqrt{ \frac{3 g(R^\star,\dataset) \ln(2/\delta)}{m}  }
\end{align*}
with probability $\geq 1-\delta/2$. 
We obtain the last inequality of the statement after adding $\alpha |R^\star|$ to both sides, 
and from the fact $g(R^\star,\dataset) \leq \ell(R^\star,\dataset)$. 
From an union bound, all inequalities of the theorem are simultaneously valid with probability $\geq 1-\delta$, obtaining the statement. 
\end{proof}

\begin{figure*}[ht]
\begin{subfigure}{.7\textwidth}
  \centering
  \includegraphics[width=\textwidth]{./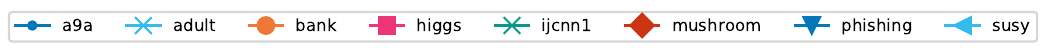}
\end{subfigure} \\
\begin{subfigure}{.32\textwidth}
  \centering
  \includegraphics[width=\textwidth]{./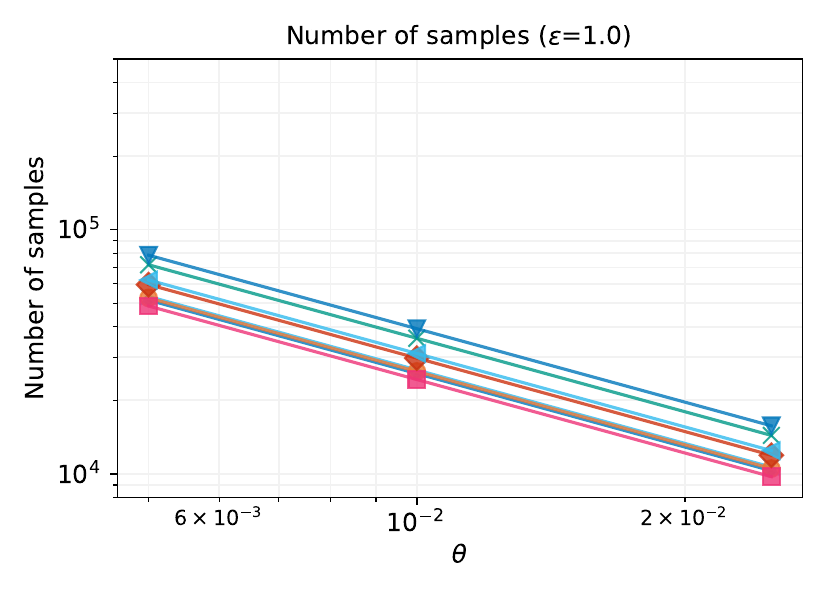}
  \caption{}
\end{subfigure}
\begin{subfigure}{.32\textwidth}
  \centering
  \includegraphics[width=\textwidth]{./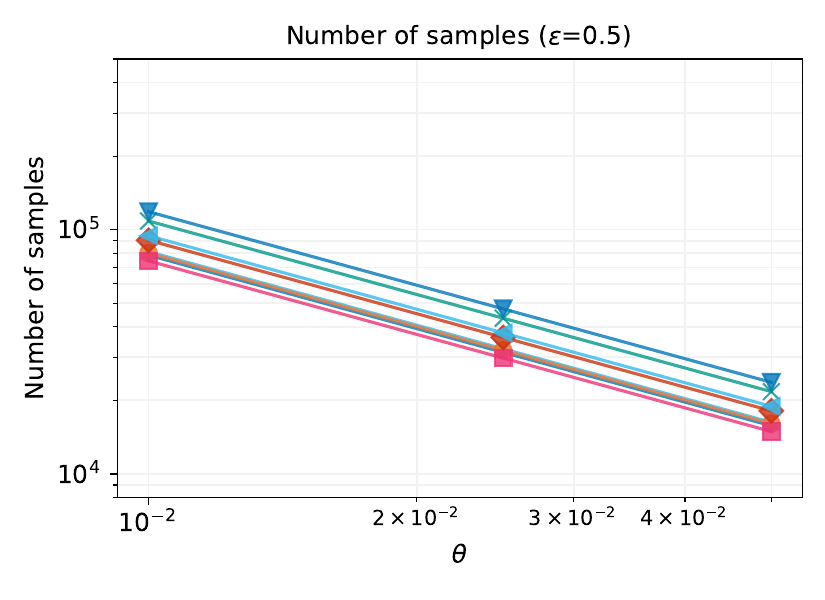}
  \caption{}
\end{subfigure}
\begin{subfigure}{.32\textwidth}
  \centering
  \includegraphics[width=\textwidth]{./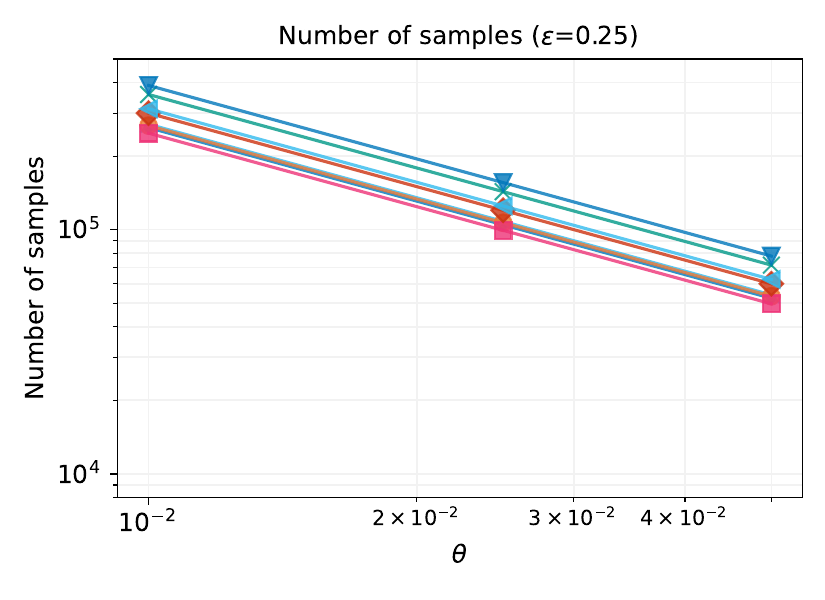}
  \caption{}
\end{subfigure}
\caption{ 
Number of samples $\hat{m}(\ruleset_{k}^{z} , \dataset)$ used by \algname\ varying $\varepsilon$ and $\theta$ for all datasets. $k$ is set as in \Cref{tab:datasets} and $z=1$. 
}
\label{fig:samplesizes}
\Description{The figures show the number of samples considered by SamRuLe for different values of the parameters.}
\end{figure*}

\begin{figure*}[ht]
\begin{subfigure}{.55\textwidth}
  \centering
  \includegraphics[width=\textwidth]{./figures/plot-legend-time-cropped.pdf}
\end{subfigure} \\
\begin{subfigure}{.24\textwidth}
  \centering
  \includegraphics[width=\textwidth]{./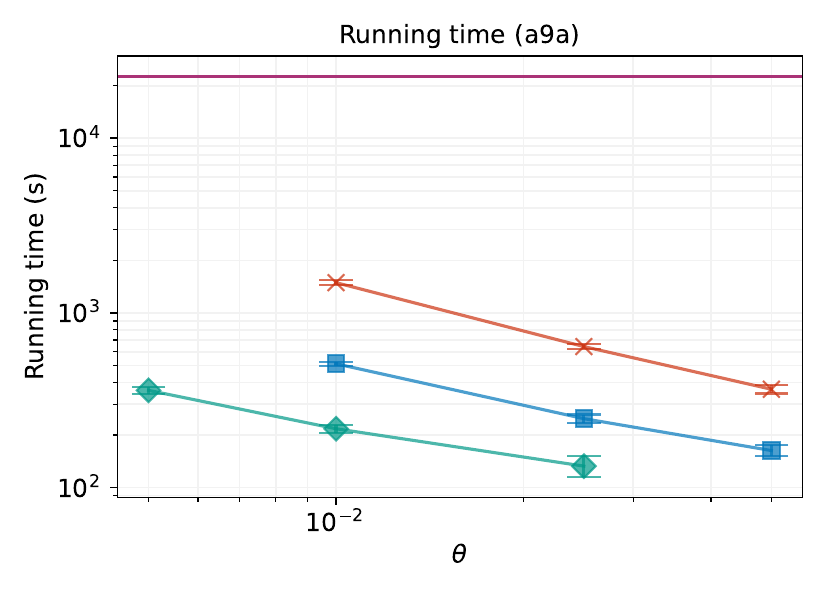}
\end{subfigure}
\begin{subfigure}{.24\textwidth}
  \centering
  \includegraphics[width=\textwidth]{./figures/running-time-adult.pdf}
\end{subfigure}
\begin{subfigure}{.24\textwidth}
  \centering
  \includegraphics[width=\textwidth]{./figures/running-time-bank.pdf}
\end{subfigure}
\begin{subfigure}{.24\textwidth}
  \centering
  \includegraphics[width=\textwidth]{./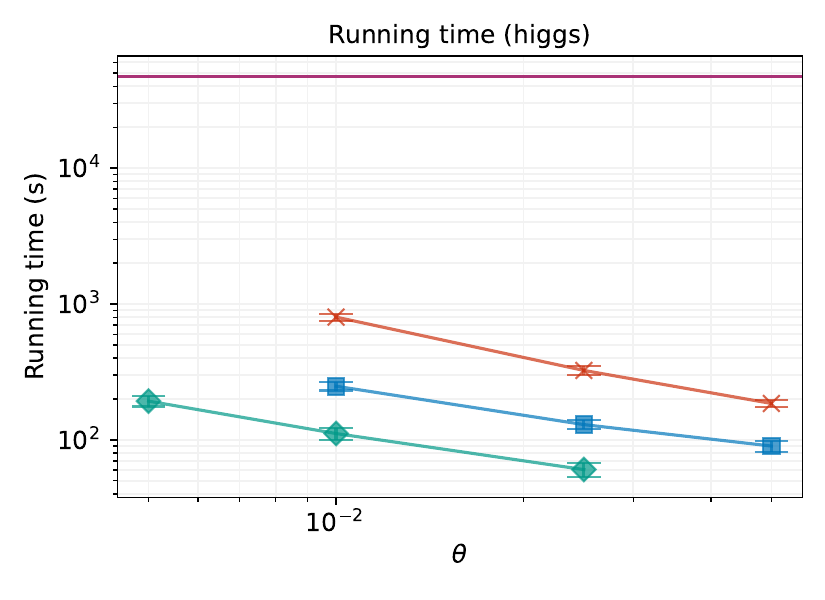}
\end{subfigure}
\begin{subfigure}{.24\textwidth}
  \centering
  \includegraphics[width=\textwidth]{./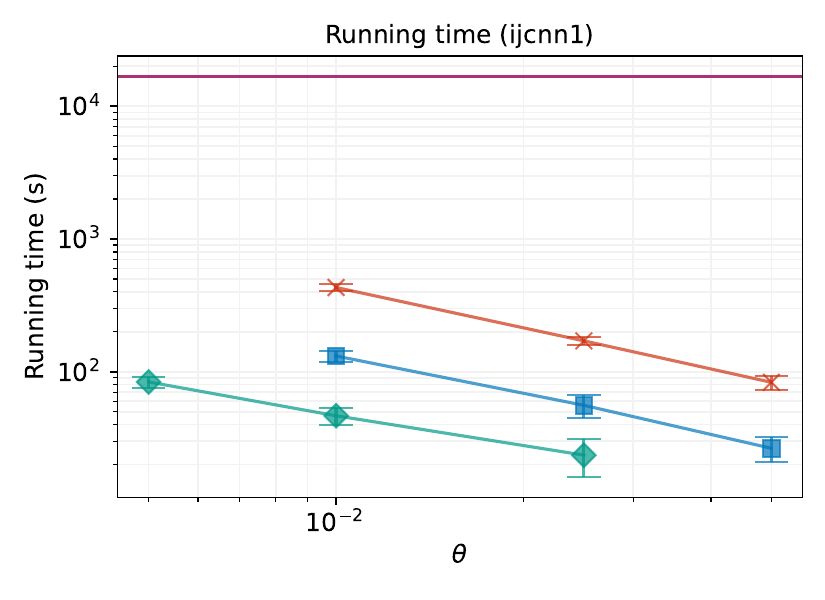}
\end{subfigure}
\begin{subfigure}{.24\textwidth}
  \centering
  \includegraphics[width=\textwidth]{./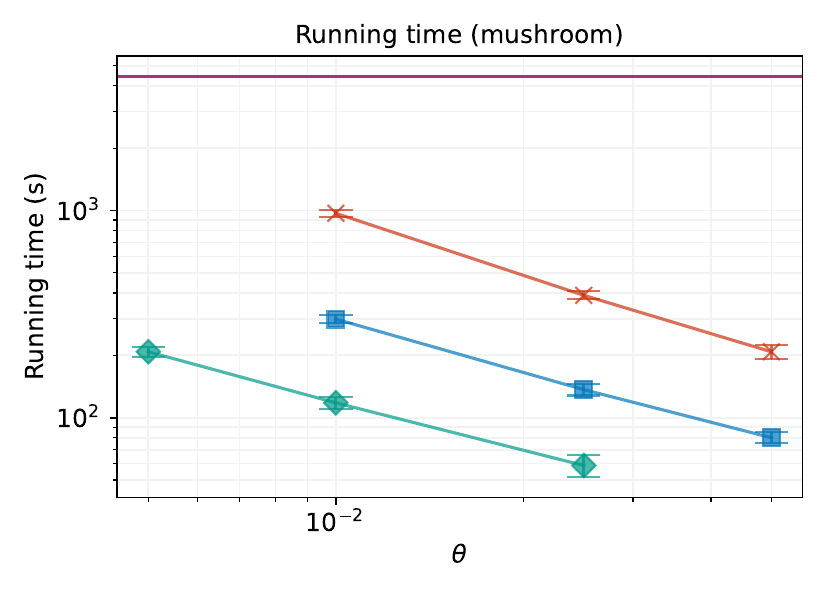}
\end{subfigure}
\begin{subfigure}{.24\textwidth}
  \centering
  \includegraphics[width=\textwidth]{./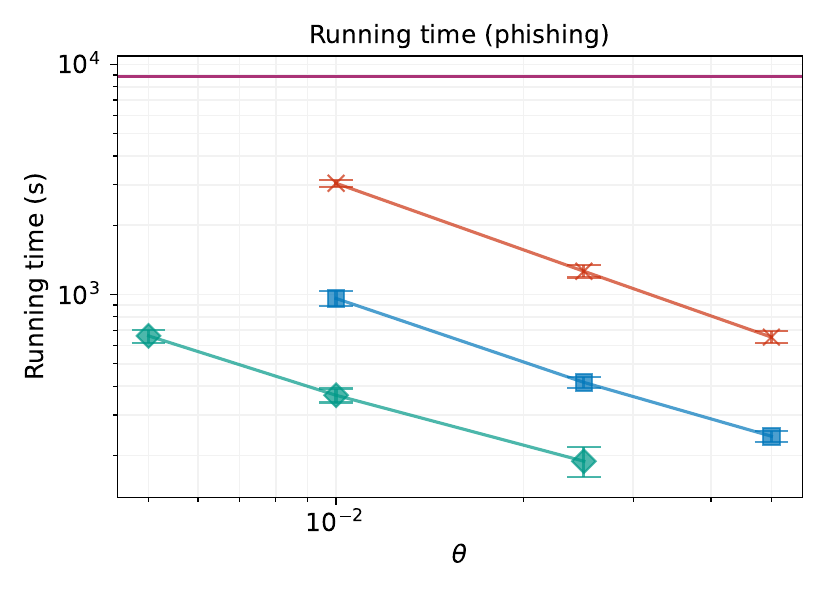}
\end{subfigure}
\begin{subfigure}{.24\textwidth}
  \centering
  \includegraphics[width=\textwidth]{./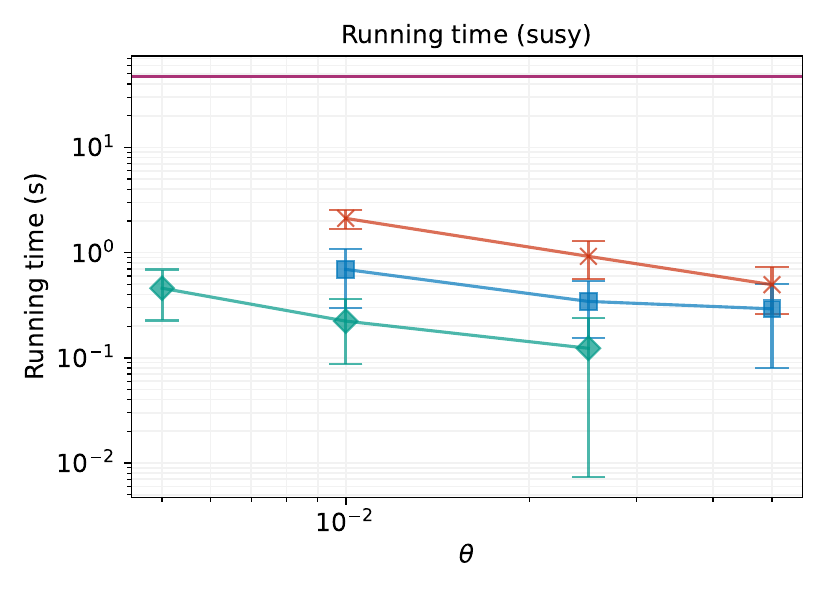}
\end{subfigure}
\caption{ 
Running times of \algname\ and CORELS for different values of $\varepsilon$ and $\theta$ on all datasets. 
}
\label{fig:runningtimes}
\Description{The figures show a performance comparison between SamRuLe and CORELS on all datasets, for different values of the parameters. SamRuLe needs a fraction of the time of CORELS, since it considers a small random sample of the large dataset.}
\end{figure*}

\begin{figure*}[ht]
\ifextversion
\begin{subfigure}{.55\textwidth}
  \centering
  \includegraphics[width=\textwidth]{./figures/plot-legend-time-cropped.pdf}
\end{subfigure} \\
\fi
\begin{subfigure}{.24\textwidth}
  \centering
  \includegraphics[width=\textwidth]{./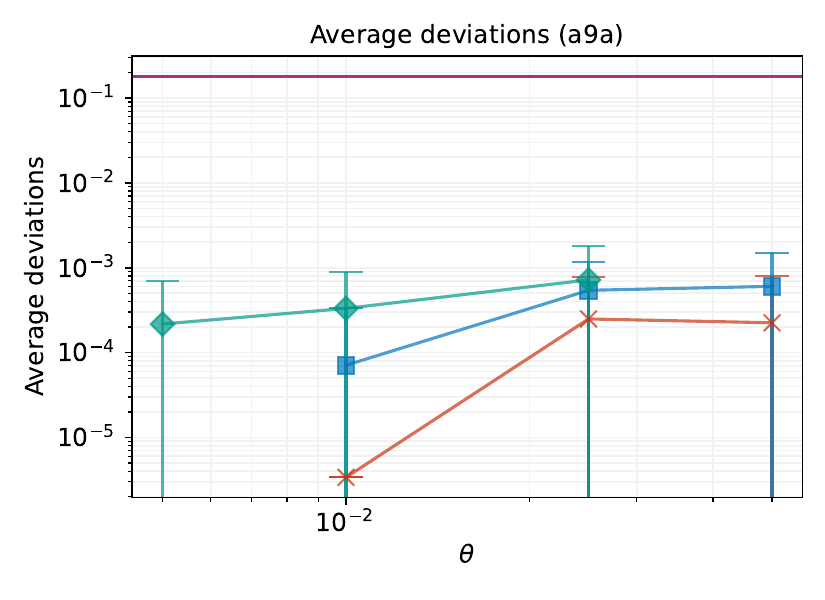}
\end{subfigure}
\begin{subfigure}{.24\textwidth}
  \centering
  \includegraphics[width=\textwidth]{./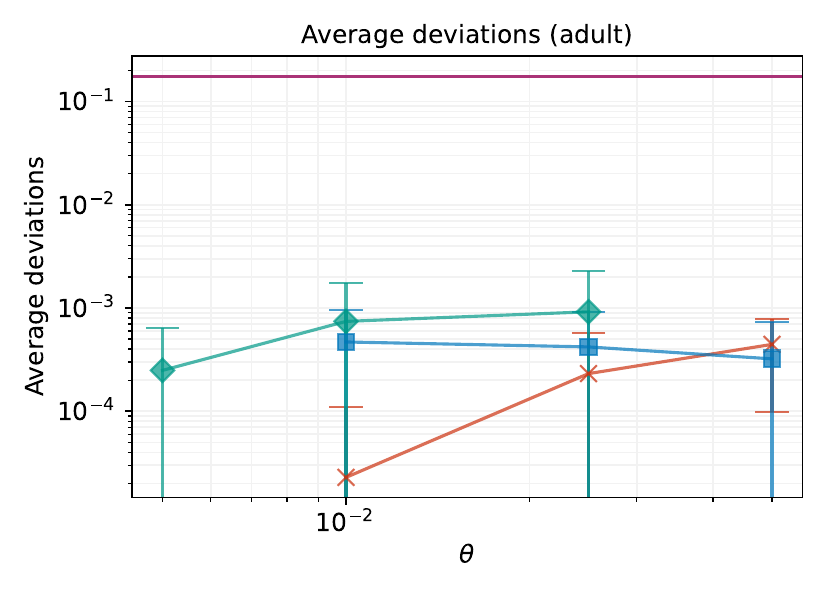}
\end{subfigure}
\begin{subfigure}{.24\textwidth}
  \centering
  \includegraphics[width=\textwidth]{./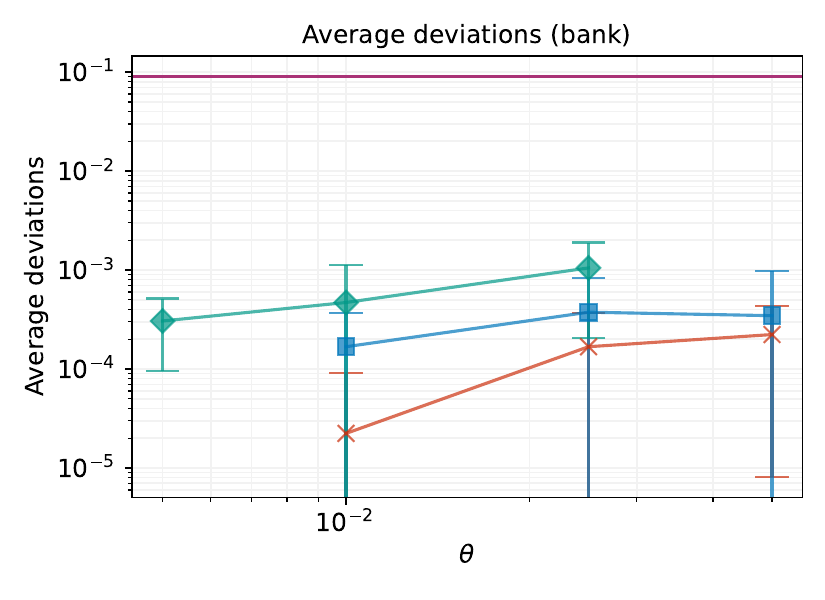}
\end{subfigure}
\begin{subfigure}{.24\textwidth}
  \centering
  \includegraphics[width=\textwidth]{./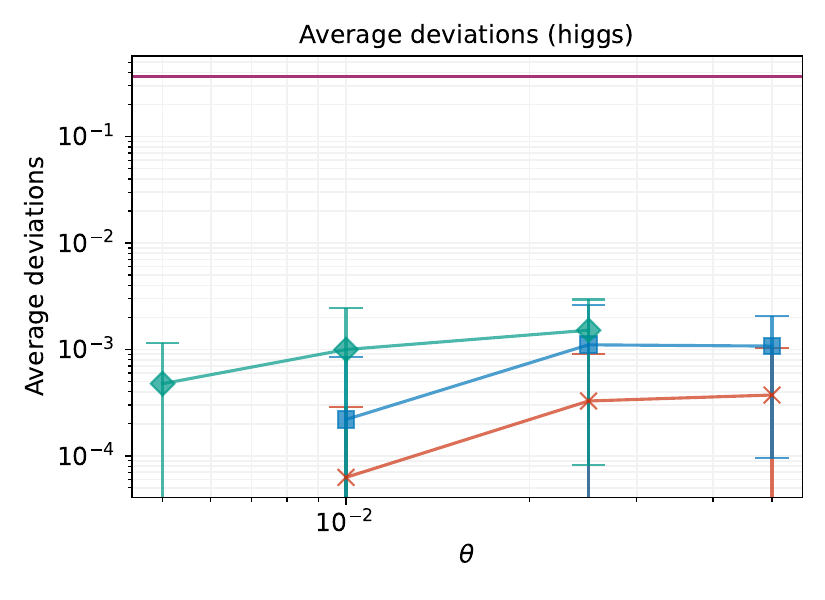}
\end{subfigure}
\begin{subfigure}{.24\textwidth}
  \centering
  \includegraphics[width=\textwidth]{./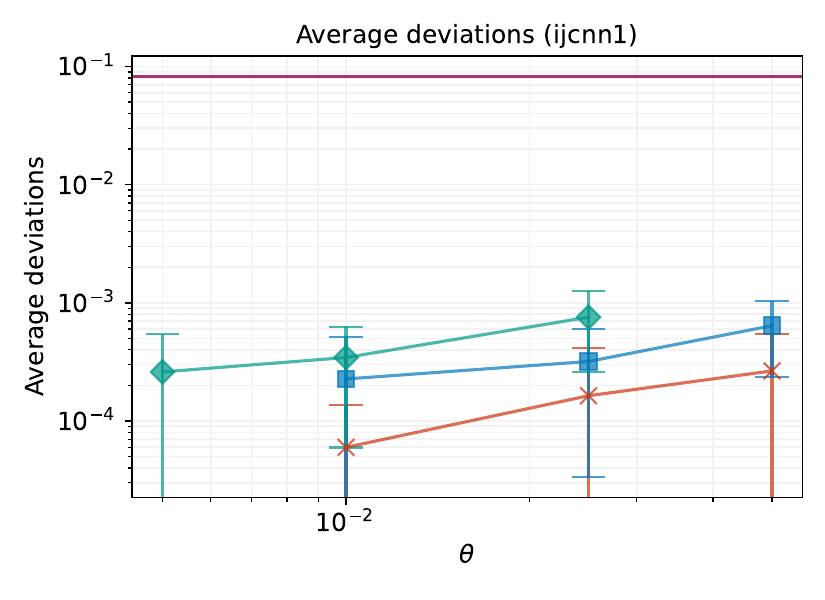}
\end{subfigure}
\begin{subfigure}{.24\textwidth}
  \centering
  \includegraphics[width=\textwidth]{./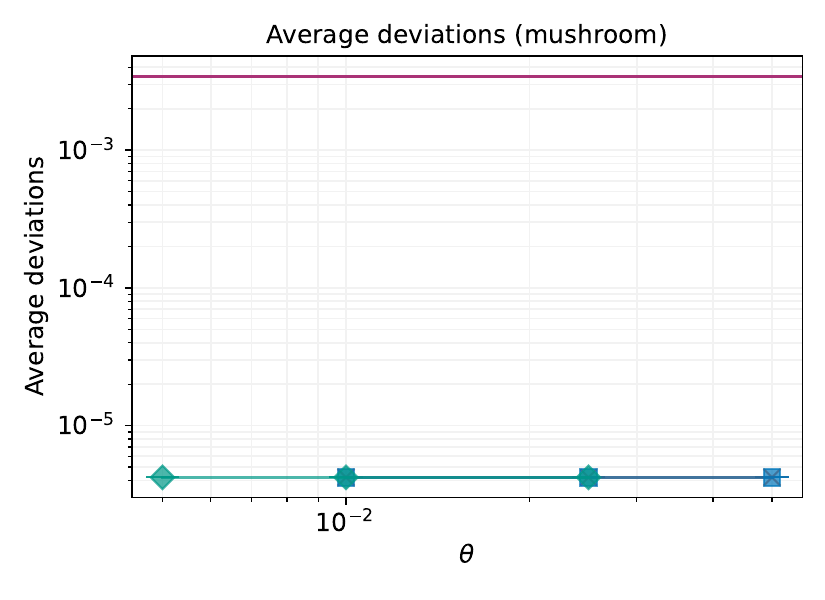}
\end{subfigure}
\begin{subfigure}{.24\textwidth}
  \centering
  \includegraphics[width=\textwidth]{./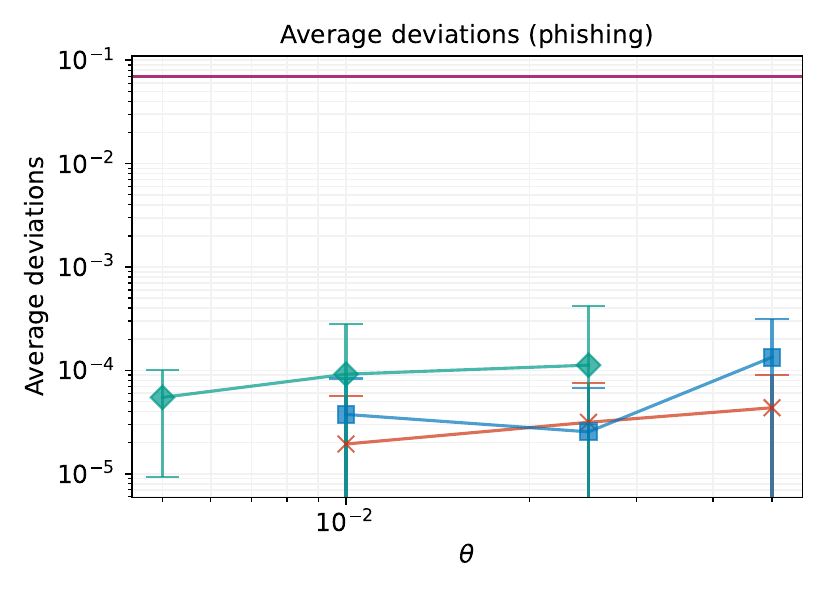}
\end{subfigure}
\begin{subfigure}{.24\textwidth}
  \centering
  \includegraphics[width=\textwidth]{./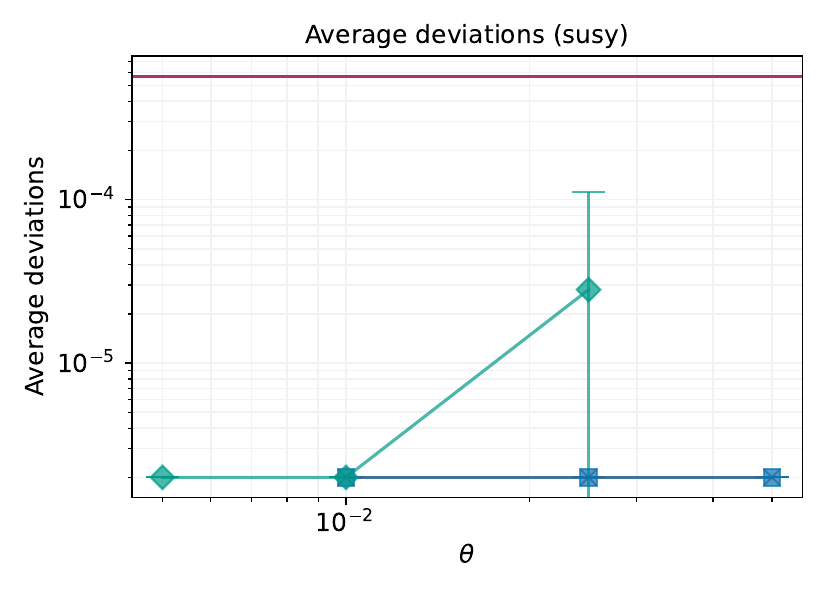}
\end{subfigure}
\caption{ 
Average deviations $| \ell(\tilde{R} , \dataset) - \ell(R^\star , \dataset) |$ over all runs for different values of $\varepsilon$ and $\theta$ and for all datasets. 
The purple horizontal line is drawn at $y = \ell(R^\star , \dataset)$ (the loss of the optimal rule list $R^\star$ found by CORELS). 
\ifextversion
\else
(same legend of \Cref{fig:runningtimes}.) 
\fi
}
\label{fig:avgdevs}
\Description{The figures show an accuracy comparison between SamRuLe and CORELS on all datasets, for different values of the parameters. SamRuLe very accurate rule list, with loss very close to the optimal.}
\end{figure*}

\ifextversion
\begin{figure*}[ht]
\begin{subfigure}{.55\textwidth}
  \centering
  \includegraphics[width=\textwidth]{./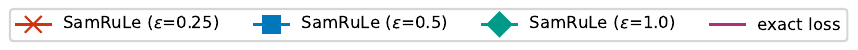}
\end{subfigure} \\
\begin{subfigure}{.24\textwidth}
  \centering
  \includegraphics[width=\textwidth]{./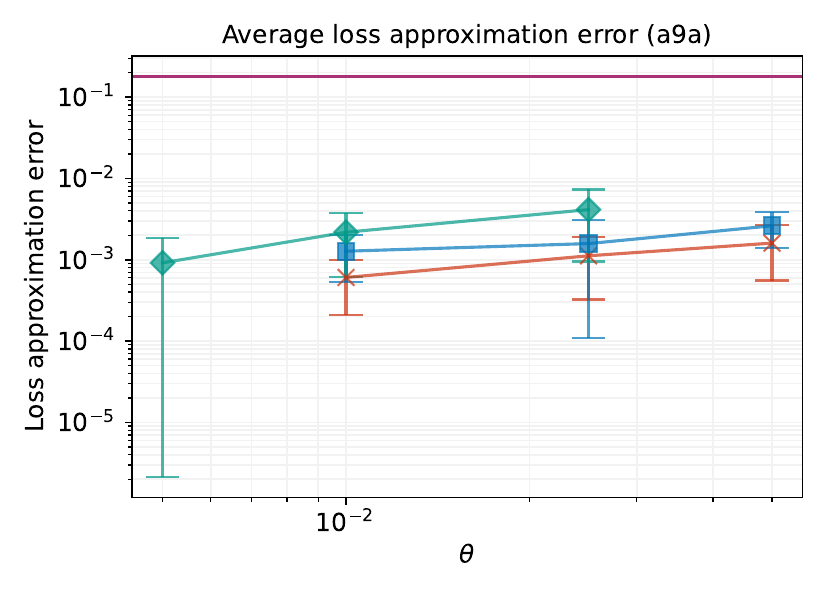}
\end{subfigure}
\begin{subfigure}{.24\textwidth}
  \centering
  \includegraphics[width=\textwidth]{./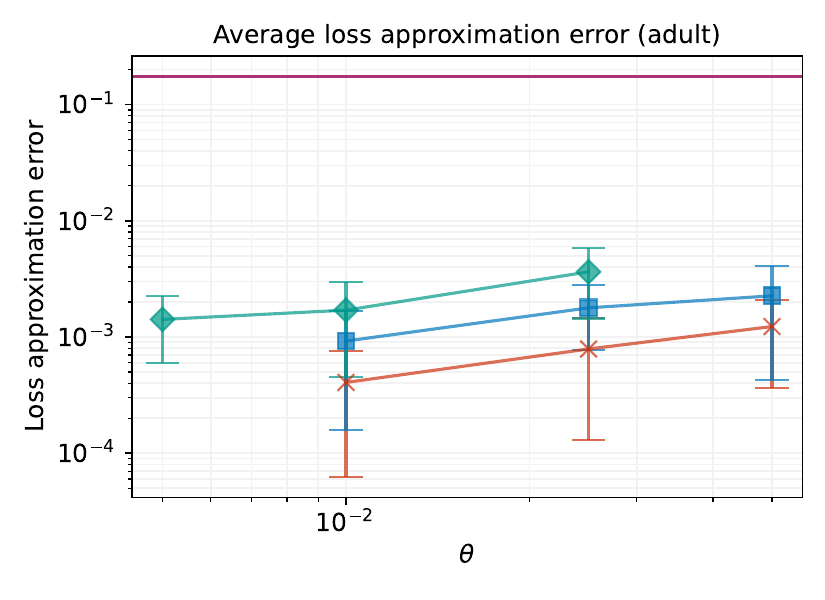}
\end{subfigure}
\begin{subfigure}{.24\textwidth}
  \centering
  \includegraphics[width=\textwidth]{./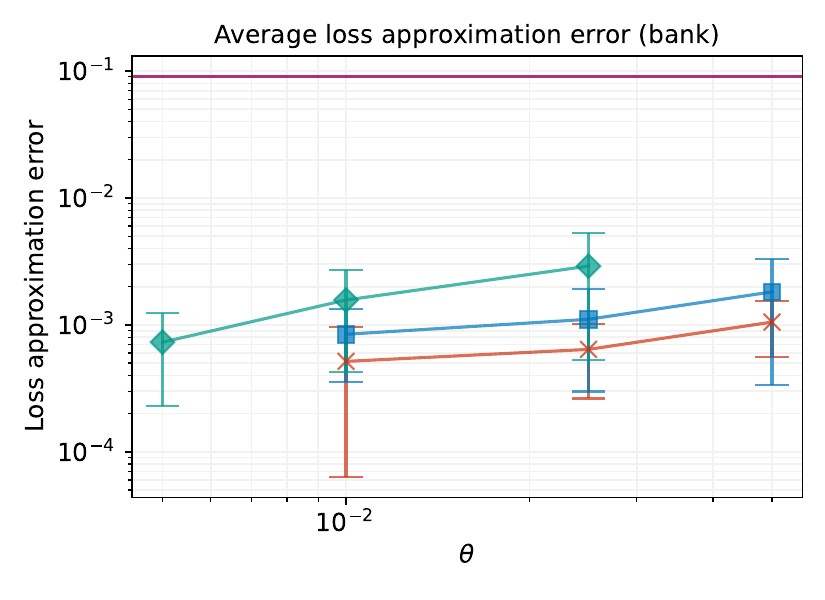}
\end{subfigure}
\begin{subfigure}{.24\textwidth}
  \centering
  \includegraphics[width=\textwidth]{./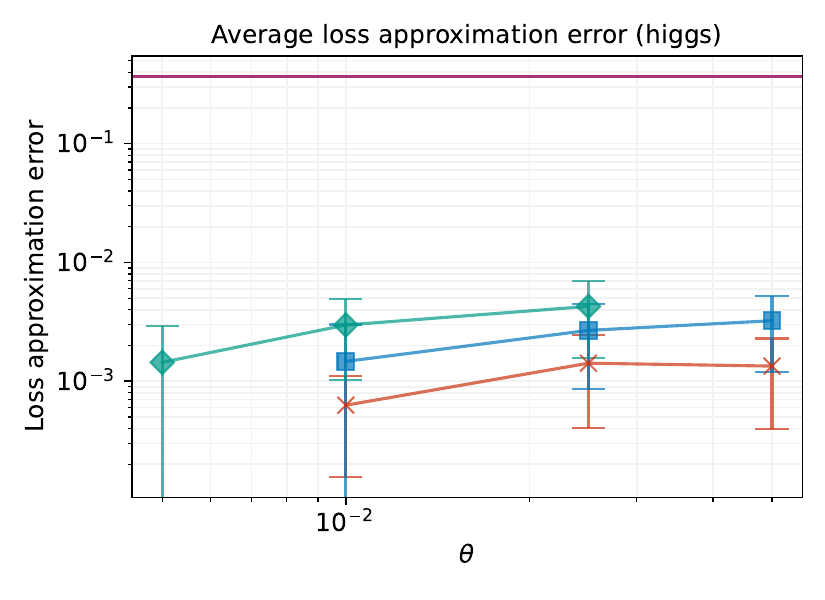}
\end{subfigure}
\begin{subfigure}{.24\textwidth}
  \centering
  \includegraphics[width=\textwidth]{./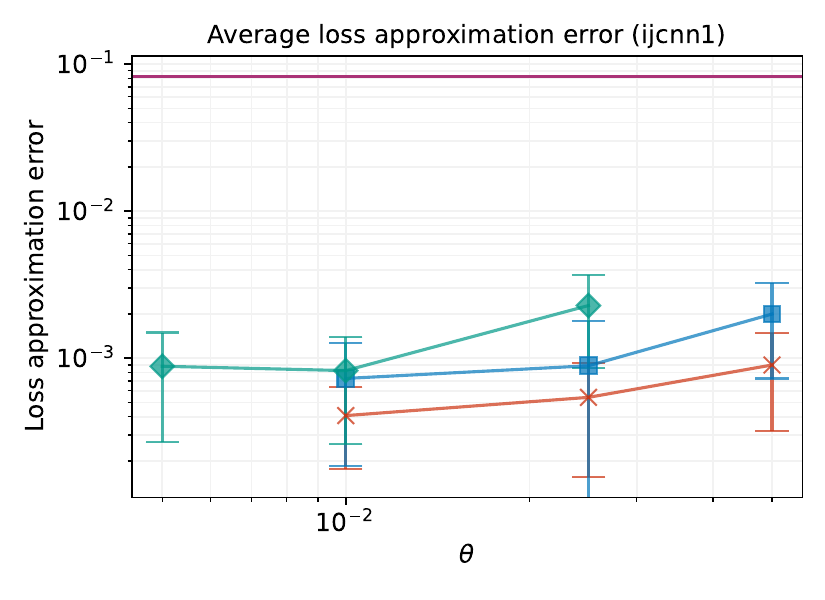}
\end{subfigure}
\begin{subfigure}{.24\textwidth}
  \centering
  \includegraphics[width=\textwidth]{./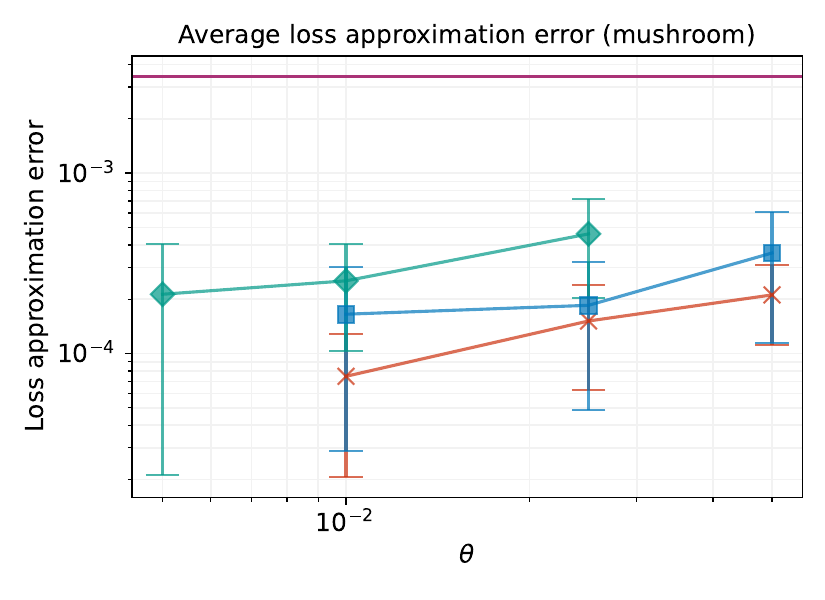}
\end{subfigure}
\begin{subfigure}{.24\textwidth}
  \centering
  \includegraphics[width=\textwidth]{./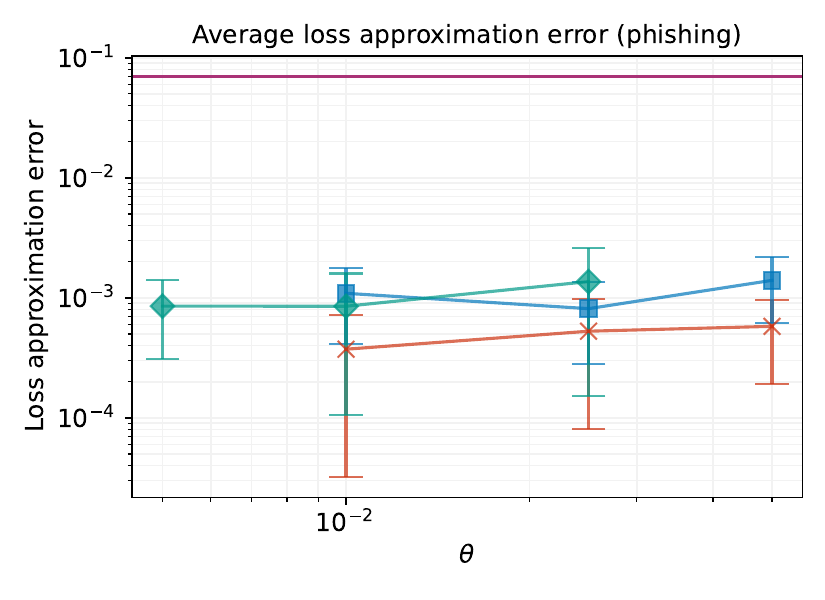}
\end{subfigure}
\begin{subfigure}{.24\textwidth}
  \centering
  \includegraphics[width=\textwidth]{./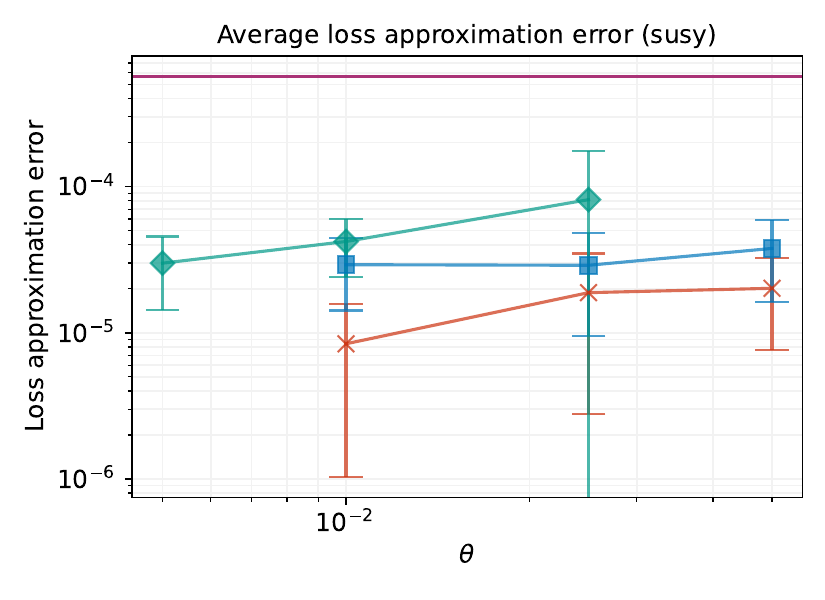}
\end{subfigure}
\caption{ 
Average loss approximation error $| \ell(\tilde{R} , \dataset) - \ell(\tilde{R} , \sample) |$ over all runs for different values of $\varepsilon$ and $\theta$ and for all datasets. 
The purple horizontal line is drawn at $y = \ell(\tilde{R} , \dataset)$ (the loss on the dataset for the rule list $\tilde{R}$ found by \algname). 
}
\label{fig:avgdevssampleloss}
\Description{The figures show an accuracy comparison between SamRuLe and CORELS on all datasets, for different values of the parameters. SamRuLe very accurate rule list, with loss very close to the optimal.}
\end{figure*}

\begin{figure*}[ht]
\begin{subfigure}{.4\textwidth}
  \centering
  \includegraphics[width=\textwidth]{./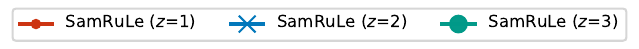}
\end{subfigure} \\
\begin{subfigure}{.32\textwidth}
  \centering
  \includegraphics[width=\textwidth]{./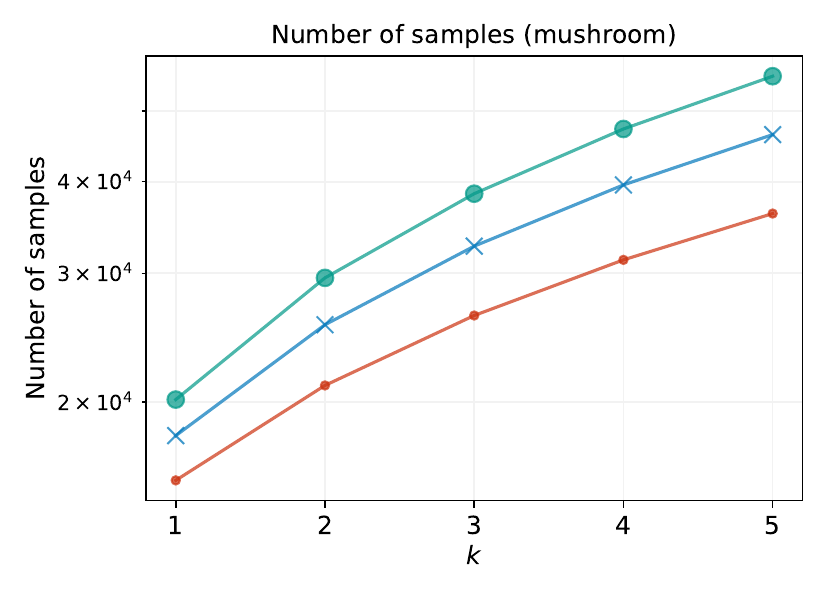}
\end{subfigure}
\begin{subfigure}{.32\textwidth}
  \centering
  \includegraphics[width=\textwidth]{./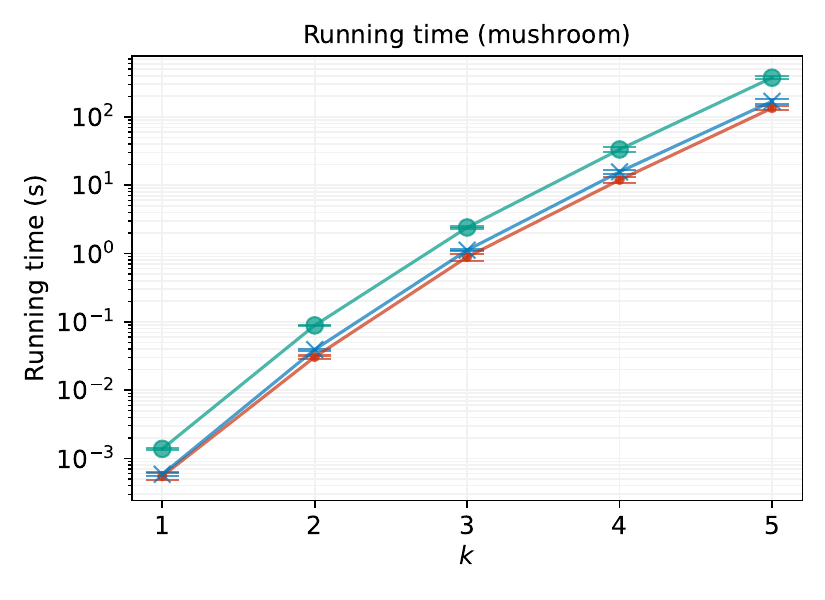}
\end{subfigure}
\begin{subfigure}{.32\textwidth}
  \centering
  \includegraphics[width=\textwidth]{./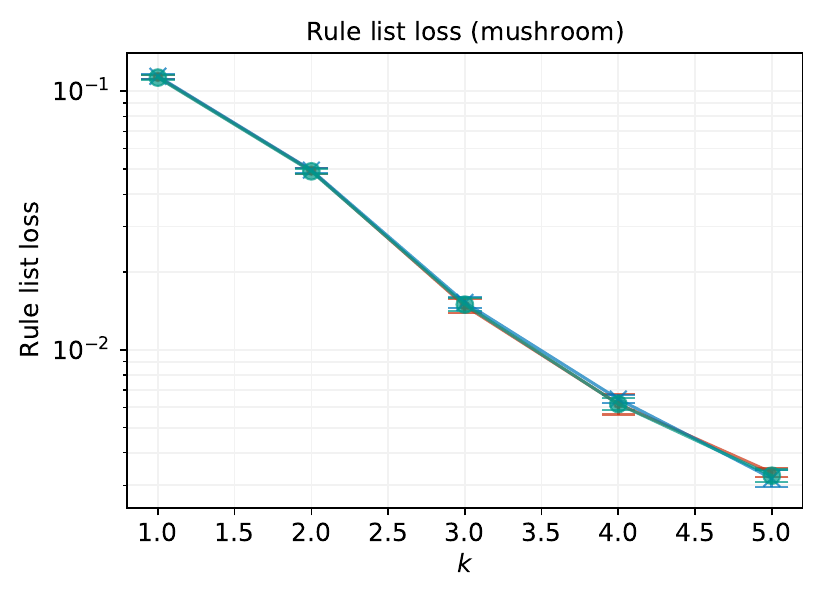}
\end{subfigure}
\begin{subfigure}{.32\textwidth}
  \centering
  \includegraphics[width=\textwidth]{./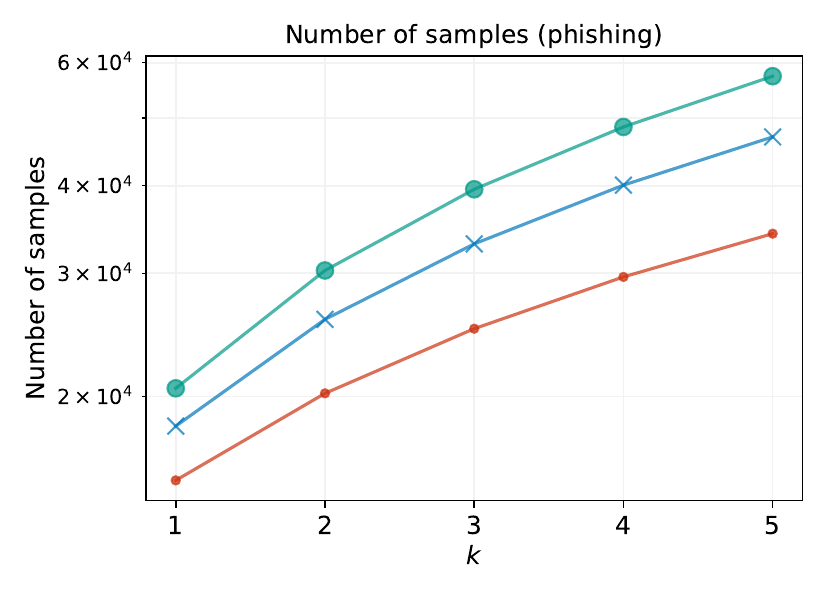}
\end{subfigure} 
\begin{subfigure}{.32\textwidth}
  \centering
  \includegraphics[width=\textwidth]{./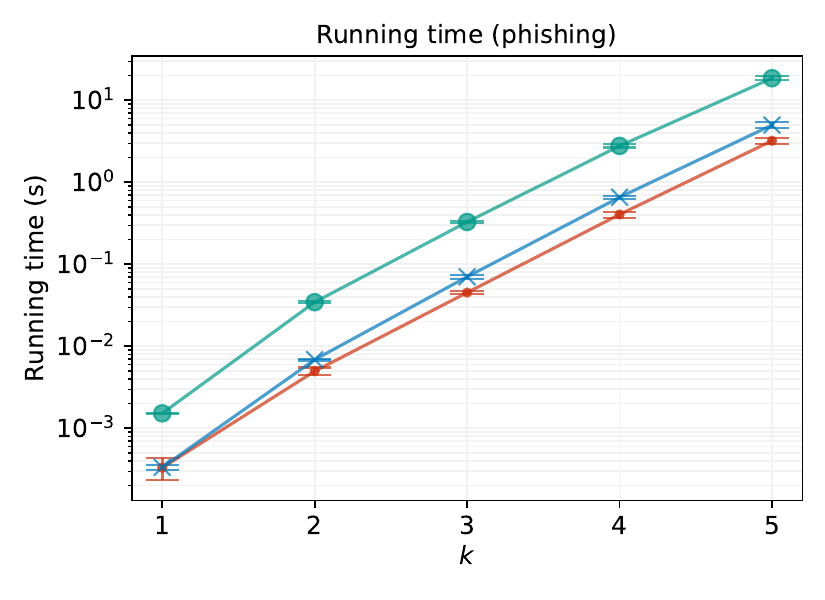}
\end{subfigure}
\begin{subfigure}{.32\textwidth}
  \centering
  \includegraphics[width=\textwidth]{./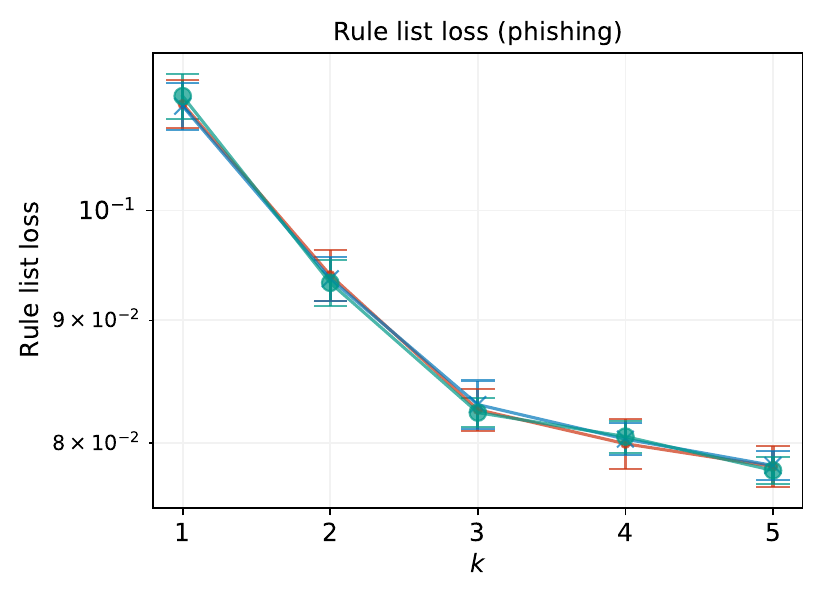}
\end{subfigure}
\caption{ 
Number of samples, running time, and rule list losses of \algname\ varying $z$ and $k$, using $\theta=0.025$, $\varepsilon=0.5$, on datasets mushroom and phishing. 
}
\label{fig:resultsparameters}
\Description{The figures show the impact to the running time and reported loss when varying the parameters z and k, that control the length and size of the rule lists.}
\end{figure*}
\fi

\end{document}
\endinput